\documentclass{article}
\usepackage{iclr2024_conference,times}
\usepackage{booktabs}
\usepackage{graphicx} %
\usepackage{hyperref}       %
\usepackage{url}
\hypersetup{
  colorlinks,
  linkcolor={red!50!black},
  citecolor={blue!50!black},
  urlcolor={blue!80!black}
  }
  \definecolor{orange}{HTML}{ff7f0e}
  \definecolor{blue}{HTML}{1f77b4}
  
\usepackage{amsmath}
\usepackage{amsthm}
\usepackage{amsfonts}
\usepackage{amssymb}
\usepackage{xcolor}
\usepackage{nicefrac}
\usepackage{dsfont}
\usepackage{algorithm}
\usepackage{algpseudocode}
\usepackage{caption}
\usepackage{subcaption}
\usepackage{multirow}
\usepackage{bm}
\usepackage{wrapfig}
\usepackage{tabularx}
\usepackage{mathtools}

\let\textcite\relax
\newcommand\textcite\citet
\let\parencite\relax
\newcommand\parencite\citep
\let\cite\relax
\newcommand\cite\citep

\usepackage[capitalise]{cleveref}
\crefname{equation}{}{}
\numberwithin{equation}{section} %

\newcommand\labelthis{\addtocounter{equation}{1}\tag{\theequation}}

\newtheorem{theorem}{Theorem}[section]
\newtheorem{corollary}[theorem]{Corollary}
\newtheorem{lemma}[theorem]{Lemma}
\newtheorem{proposition}[theorem]{Proposition}

\newtheorem{algorithm2}{Algorithm}[section]
\newtheorem{problem}{Problem}

\theoremstyle{definition}
\newtheorem{definition}{Definition}[section]

\theoremstyle{remark}
\newtheorem*{remark}{Remark}

\DeclareMathOperator{\prox}{prox}
\DeclareMathOperator{\sign}{sign}
\DeclareMathOperator*{\argmax}{argmax}
\DeclareMathOperator*{\argmin}{argmin}
\newcommand{\R}{\mathbb{R}}
\newcommand{\E}{\mathbb{E}}
\newcommand{\s}{\mathbf{s}}
\newcommand{\x}{\mathbf{x}}
\newcommand{\U}{\mathbf{U}}
\newcommand{\y}{\mathbf{y}}
\newcommand{\z}{\mathbf{z}}
\newcommand{\A}{\mathbf{A}}
\newcommand{\MAP}{\mathrm{MAP}}

\newcommand{\Y}{\mathcal{Y}}
\newcommand{\N}{\mathcal{N}}
\newcommand{\W}{\mathbf{W}}
\newcommand{\HHH}{\mathbf{H}}
\newcommand{\w}{\mathbf{w}}
\newcommand{\D}{\mathbf{D}}

\newcommand{\uu}{\mathbf{u}}
\newcommand{\vv}{\mathbf{v}}
\newcommand*{\conv}{\mathbin{\ast}}
\newcommand{\beps}{\bm{\varepsilon}} %
\newcommand{\bb}{\mathbf{b}}
\newcommand{\bc}{\mathbf{c}}
\newcommand{\cc}{\mathbf{c}}
\newcommand{\adj}{^T}
\newcommand{\Lip}{\mathrm{Lip}}
\newcommand{\diag}{\mathop{\mathrm{diag}}}
\newcommand{\frob}{\mathrm{F}}
\newcommand{\cF}{\mathcal{F}}
\newcommand{\Id}{\mathop{\mathrm{Id}}}
\newcommand{\ualpha}{\underline{\alpha}}
\newcommand{\oalpha}{\overline{\alpha}}

\DeclareMathOperator{\crit}{crit}

\DeclarePairedDelimiterX\abs[1]\lvert\rvert{
  \ifblank{#1}{\:\cdot\:}{#1}
}
\DeclarePairedDelimiterX\norm[1]\lVert\rVert{
  \ifblank{#1}{\:\cdot\:}{#1}
}

\definecolor{forestgreenweb}{rgb}{0.13, 0.55, 0.13}

\newcommand{\rebuttal}[1]{{\color{black} #1}}

\title{What's in a Prior? \\Learned Proximal Networks for Inverse \\ Problems}
\date{}

\author{Zhenghan Fang\thanks{Equal contribution.} \\
Mathematical Institute for Data Science \\ Johns Hopkins University  \\
\texttt{zfang23@jhu.edu} \\
\And
Sam Buchanan\footnotemark[1] \\
Toyota Technological Institute at Chicago  \\
\texttt{sam@ttic.edu} \\
\And
Jeremias Sulam \\
Mathematical Institute for Data Science \\ Johns Hopkins University  \\
\texttt{jsulam1@jhu.edu} \\
}

\iclrfinalcopy
\begin{document}

\maketitle

\begin{abstract}
Proximal operators are ubiquitous in inverse problems, commonly appearing as part of algorithmic strategies to regularize problems that are otherwise ill-posed. Modern deep learning models have been brought to bear for these tasks too, as in the framework of plug-and-play or deep unrolling, where they loosely resemble proximal operators. 
Yet, something essential is lost in employing these purely data-driven approaches: there is no guarantee that a general deep network represents the proximal operator of any function, nor is there any characterization of the function for which the network might provide some approximate proximal. This not only makes guaranteeing convergence of iterative schemes challenging but, more fundamentally, complicates the analysis of what has been learned by these networks about their training data. 
Herein we provide a framework to develop \textit{learned proximal networks} (LPN), prove that they provide exact proximal operators for a data-driven nonconvex regularizer, and show how a new training strategy, dubbed \textit{proximal matching}, provably promotes the recovery of the log-prior of the true data distribution. Such LPN provide general, unsupervised, expressive proximal operators that can be used for general inverse problems with convergence guarantees. We illustrate our results in a series of cases of increasing complexity, demonstrating that these models not only result in state-of-the-art performance, but provide a window into the resulting priors learned from data.
\end{abstract}

\section{Introduction}
\vspace{-5pt}
Inverse problems concern the task of estimating underlying variables that have undergone a degradation process, such as in denoising, deblurring, inpainting, or compressed sensing \cite{bertero2021introduction,ongie2020deep}. Since these problems are naturally ill-posed, solutions to any of these problems involve, either implicitly or explicitly, the utilization of \textit{priors}, or models, about what type of solutions are preferable \cite{engl1996regularization,benning2018modern,arridge2019solving}.
Traditional methods model this prior directly, by constructing regularization functions that promote specific properties in the estimate, such as for it to be smooth \cite{tikhonov1977solutions}, piece-wise smooth \cite{rudin1992nonlinear,bredies2010total}, or for it to have a sparse decomposition under a given basis or even a potentially overcomplete dictionary \cite{bruckstein2009sparse, sulam2014image}. 
On the other hand, from a machine learning perspective, the complete restoration mapping can also be modeled by a regression function and 
 by providing a large collection of input-output (or clean-corrupted) pairs of samples \cite{mccann2017convolutional,ongie2020deep,zhu2018image}.

An interesting third alternative has combined these two approaches by making the
insightful observation that \rebuttal{many iterative solvers for inverse
problems incorporate the application of the proximal operator for the
regularizer}. Such a proximal step can be loosely interpreted as a denoising
step and, as a result, off-the-shelf strong-performing denoising
algorithms (as those given by modern deep learning methods) can be employed as
a subroutine. \rebuttal{The Plug-and-Play (PnP) framework is a notable example
where proximal operators are replaced with such denoisers
\cite{venkatakrishnan2013plug,zhang2017learning,meinhardt2017learning,zhang2021plug,kamilov2023plug,tachella2019real},
but these can be applied more broadly to solve inverse problems, as
well \cite{romano2017little,romano2015boosting}.} 
While this strategy works
very well in practice, little is known about the approximation properties of
these methods. For instance, \emph{do these denoising networks actually (i.e.,
provably) provide a proximal operator for some regularization function?}
Moreover, and from a variational perspective, \emph{would this regularization
function recover the \textit{correct} regularizer, such as the (log) prior of
the data distribution?} 
Partial answers to some of these questions exist, but how to address all of them in a single framework 
remains unclear
\cite{hurault2022proximal,lunz2018adversarial,cohen2021has,zou2023deep,Goujon2023-wa}
(see a thorough discussion of related works in \cref{sec:related-works}). More broadly, the ability to characterize a data-driven (potentially nonconvex) regularizer that enables good restoration is paramount in applications that demand notions of robustness and interpretability, and this remains an open challenge.

In this work, we address these questions by proposing a new class of deep neural networks, termed \emph{learned proximal networks} (LPN), that \textit{exactly implement the proximal operator} of a general learned function. Such a LPN implicitly learns a regularization function that can be characterized and evaluated, shedding light onto what has been learned from data. In turn, we present a new training problem, which we dub \textit{proximal matching}, 
that provably promotes the recovery of the {correct} regularization term (i.e., the log of the data distribution), which need not be convex.
Moreover, the ability of LPNs to implement exact proximal operators allows for guaranteed convergence to critical points of the variational problem, which we derive for PnP reconstruction algorithms under no additional assumptions on the trained LPN.
We demonstrate through experiments on that our LPNs can recover the correct underlying data distribution, and further show that LPNs lead to state-of-the-art reconstruction performance on image deblurring, CT reconstruction and compressed sensing, while enabling precise characterization of the data-dependent prior learned
by the model. 
Code for reproducing all experiments is made publicly available at \url{https://github.com/Sulam-Group/learned-proximal-networks}.

\vspace{-5pt}
\section{Background}
\vspace{-5pt}
Consider an unknown signal in an Euclidean space\footnote{The analyses in this paper can be generalized directly to more general Hilbert spaces.}, $\x \in \R^{n}$%
, and a known measurement operator, $A: \R^{n} \rightarrow \R^{m}$. The goal of inverse problems is to recover $\x$ from measurements $\y = A(\x) + \vv \in \R^{m}$, where $\vv$ is a noise or nuisance term. This problem is typically ill-posed: infinitely many solutions $\x$ may explain (i.e. approximate) the measurement $\y$ \cite{benning2018modern}. Hence, a prior is needed to regularize the problem, which can generally take the form
\begin{equation} \label{eq:main_inverse_problem}
    \min_\x \frac{1}{2}\|\y - A(\x)\|_2^2 + R(\x),
\end{equation}
for a function $R(\x):\R^n\to\R$ promoting a solution that is likely under the prior distribution of $\x$. We will make no assumptions on the convexity of $R(\x)$ in this work.%

\paragraph{Proximal operators}
Originally proposed by \textcite{moreau1965proximite} as a generalization of projection operators, proximal operators are  central in optimizing the problem \eqref{eq:main_inverse_problem} by means of proximal 
gradient descent (PGD) \cite{beck2017first}, alternating direction method of multipliers (ADMM) \cite{boyd2011distributed}, or
primal dual hybrid gradient (PDHG) \cite{chambolle2011first}. For a given functional $R$ as above, its proximal operator $\prox_{R}$ is defined by
\begin{equation} \label{eq:prox_def}
    \prox_{R}(\y) := \argmin_{\x} \frac{1}{2} \|\y - \x\|^2 + R(\x).
\end{equation}
When $R$ is non-convex, the solution to this problem may not be unique and the proximal mapping is set-valued. Following \cite{gribonval2020characterization}, we define the proximal operator of a function $R$ as a \emph{selection} of the set-valued mapping: $f(\y)$ is a proximal operator of $R$ if and only if $f(\y) \in \argmin_{\x} \frac{1}{2} \|\y - \x\|^2 + R(\x) $ for each $\y \in \R^{n}$.
A key result in \cite{gribonval2020characterization} is that the {continuous} proximal  of a (potentially nonconvex) function can be fully characterized as the gradient of a convex function, as the following result formalizes.

\begin{proposition}\label{prop:prox_equivalence}[Characterization of continuous proximal operators, \cite[Corollary 1]{gribonval2020characterization}]
\label{prop:characterization-continuous-prox}
    Let $\Y \subset \R^n$ be non-empty and open and $f : \Y \rightarrow \R^n$ be a continuous function. Then, $f$ is a proximal operator of a function $R :\R^n \rightarrow \R \cup \{+\infty\}$ if and only if there exists a convex differentiable function $\psi$ such that $f(\y) = \nabla \psi(\y)$ for each $\y \in \Y$.
\end{proposition}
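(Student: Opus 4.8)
The plan is to prove both directions by explicit construction, exploiting the elementary fact that completing the square turns the proximal objective into (the negative of) a Legendre--Fenchel conjugate: writing $g := \tfrac12\|\cdot\|^2 + R$, a point $\x$ minimizes $\tfrac12\|\y-\x\|^2 + R(\x)$ if and only if it maximizes $\langle \y,\x\rangle - g(\x)$, and any maximizer $\x^\star$ of $\langle\y,\cdot\rangle - g$ attaining a finite value satisfies $\x^\star \in \partial g^*(\y)$ by the one-line Fenchel--Young computation $g^*(\y') \ge \langle\y',\x^\star\rangle - g(\x^\star) = g^*(\y) + \langle \y'-\y,\x^\star\rangle$ (which uses no convexity of $g$).

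\emph{Forward direction.} Assume $f(\y)\in\argmin_\x \tfrac12\|\y-\x\|^2+R(\x)$ for each $\y\in\Y$; since the problem must be proper this forces $\operatorname{dom}R\neq\emptyset$ and the optimal values finite. Set $\psi := g^*$, which is convex and lower-semicontinuous. By the observation above, $f(\y)\in\partial\psi(\y)$ for every $\y\in\Y$; in particular $\partial\psi$ is nonempty on the open set $\Y$, so $\psi$ is finite there and $\Y\subseteq\operatorname{int}\operatorname{dom}\psi$. It then remains to upgrade ``$f$ is a continuous, single-valued selection of $\partial\psi$ on $\Y$'' to ``$f=\nabla\psi$ on $\Y$''. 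I would do this by restricting to lines: for $\y_0\in\Y$ and a direction $\uu$, the scalar convex map $t\mapsto\psi(\y_0+t\uu)$ is locally absolutely continuous, its a.e.\ derivative lies between its one-sided derivatives, and monotonicity of $\partial\psi$ forces $\langle f(\y_0+t\uu),\uu\rangle$ into that same interval; since the two one-sided derivatives agree for all but countably many $t$, integrating yields $\psi(\y_0+\delta\uu)-\psi(\y_0)=\int_0^\delta\langle f(\y_0+t\uu),\uu\rangle\,dt$. Dividing by $\delta$, letting $\delta\downarrow 0$, and invoking continuity of $f$ gives $\psi'(\y_0;\uu)=\langle f(\y_0),\uu\rangle$ for every $\uu$; linearity in $\uu$ then delivers (Fréchet-)differentiability of $\psi$ at $\y_0$ with $\nabla\psi(\y_0)=f(\y_0)$. (An alternative is to use that $\partial\psi(\y_0)$ is the convex hull of limits of $\nabla\psi$ over nearby differentiability points, each of which equals $f(\y_0)$ by continuity, hence $\partial\psi(\y_0)=\{f(\y_0)\}$.)

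\emph{Converse direction.} Assume $f=\nabla\psi$ on $\Y$ for some convex differentiable $\psi$, and define the ``partial conjugate'' $\phi(\x):=\sup_{\y\in\Y}\big(\langle\x,\y\rangle-\psi(\y)\big)$, a convex, lower-semicontinuous, proper function. Set $R:=\phi-\tfrac12\|\cdot\|^2$, which maps $\R^n$ into $\R\cup\{+\infty\}$. Fix $\y_0\in\Y$; after completing the square it suffices to show $f(\y_0)$ minimizes $\x\mapsto\phi(\x)-\langle\y_0,\x\rangle$. The bound $\phi(\x)\ge\langle\x,\y_0\rangle-\psi(\y_0)$ gives $\phi(\x)-\langle\y_0,\x\rangle\ge-\psi(\y_0)$ for all $\x$, with equality at $\x=f(\y_0)$ precisely when $\phi(f(\y_0))=\langle f(\y_0),\y_0\rangle-\psi(\y_0)$, i.e.\ when $\y_0$ maximizes $\y\mapsto\langle f(\y_0),\y\rangle-\psi(\y)$ over $\Y$. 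This is where I would be careful that $\Y$ need not be convex: this last map is concave on the convex set $\operatorname{dom}\psi\supseteq\Y$, it is differentiable at the interior point $\y_0$ with gradient $f(\y_0)-\nabla\psi(\y_0)=0$, and a concave function with vanishing gradient at an interior point of its domain attains its global maximum there, hence in particular its maximum over $\Y$. Therefore $f(\y_0)\in\argmin_\x\tfrac12\|\y_0-\x\|^2+R(\x)$ for every $\y_0\in\Y$, i.e.\ $f$ is a proximal operator of $R$.

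The main obstacle is the last step of the forward direction: the inclusion $f(\y)\in\partial\psi(\y)$ is essentially free from Fenchel--Young, but ruling out $\partial\psi(\y_0)\supsetneq\{f(\y_0)\}$ --- equivalently, showing that a continuous selection of a maximal monotone subdifferential on an open set is single-valued there --- requires the line-integration (or limiting-gradient) argument above and is the only genuinely analytic ingredient. Everything else reduces to completing the square and the Fenchel--Young equality.
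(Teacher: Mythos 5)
Your proof is correct. Note that the paper itself does not prove this proposition at all: it is imported verbatim as Corollary~1 of \cite{gribonval2020characterization}, so there is no in-paper argument to compare against. What you have written is essentially a self-contained reconstruction of the Gribonval--Nikolova argument (which in turn descends from Moreau's classical characterization): completing the square to identify $\prox_R$ with a selection of $\partial g^*$ for $g=\tfrac12\|\cdot\|^2+R$, and conversely recovering $R$ as a partial conjugate minus the quadratic. You also correctly identified and handled the two genuinely delicate points. First, upgrading the Fenchel--Young inclusion $f(\y)\in\partial\psi(\y)$ to $f=\nabla\psi$ on $\Y$ does require an argument beyond convex-duality bookkeeping; both of your routes (integrating along lines using that the one-sided derivatives of a scalar convex function disagree only countably often, or using that $\partial\psi(\y_0)$ is the convex hull of limits of gradients at nearby differentiability points, which all equal $f(\y_0)$ by continuity) are standard and valid, and each legitimately yields full differentiability since a convex function with linear directional derivative at a point is differentiable there in finite dimensions. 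Second, your care about $\Y$ being open but possibly non-convex in the converse direction is exactly right: the supremum defining $\phi$ is taken over $\Y$, but the first-order argument for its attainment at $\y_0$ must be run on the convex set $\operatorname{dom}\psi\supseteq\Y$, where the concave function with vanishing gradient at the interior point $\y_0$ attains its global maximum. No gaps.
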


\begin{wrapfigure}{r}{0.18\textwidth}
    \centering 
    \vspace{-20pt}
    \includegraphics[width=0.15\textwidth]{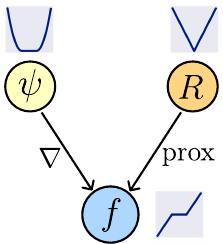}
    \caption{Sketch of Prop. \ref{prop:characterization-continuous-prox} for $R(\cdot) = \|\cdot\|_1$.}
    \vspace{-2em}
    \label{fig:method}
\end{wrapfigure}

It is worth stressing the differences between $R$ and $\psi$. While $f$ is the proximal operator of $R$, i.e. $\prox_R = f$, $f$ is also the gradient of a convex $\psi$, $\nabla \psi = f$ (see \Cref{fig:method}). Furthermore, $R$ may be non-convex, while $\psi$ must be convex. As can be expected, there exists a precise relation between $R$ and $\psi$, and we will elaborate further on this connection shortly.
The characterization of proximals of convex functions is similar but additionally requiring $f$ to be non-expansive \cite{moreau1965proximite}. Hence, by relaxing the nonexpansivity, we obtain a broader class of proximal operators. As we will show later, the ability to model proximal operators of non-convex functions will prove very useful in practice, as the log-priors\footnote{\rebuttal{In this paper, the ``log-prior'' of a data distribution $p_\x$ means its negative log-likelihood, $-\log p_\x$.}} of most real-world data are indeed non-convex.

\textbf{Plug-and-Play}
\label{sec:background-pnp}
This paper closely relates to the Plug-and-Play (PnP) framework \cite{venkatakrishnan2013plug}.
\rebuttal{PnP employs off-the-shelf denoising algorithms to solve general inverse problems within an iterative optimization solver, such as PGD \cite{beck2017first,hurault2022proximal}, ADMM \cite{boyd2011distributed,venkatakrishnan2013plug}, half quadratic splitting (HQS) \cite{geman1995nonlinear,zhang2021plug}, primal-dual hybrid gradient (PDHG) \cite{chambolle2011first}, and Douglas-Rachford splitting (DRS) \cite{douglas1956numerical,lions1979splitting,combettes2007douglas,hurault2022proximal}}. %
Inspired by the observation that $\prox_R(\y)$ resembles the \textit{maximum a posteriori} (MAP) denoiser at $\y$ with a log-prior $R$, PnP replaces the explicit solution of this step with generic denoising algorithms, such as BM3D \cite{dabov2007image,venkatakrishnan2013plug} or CNN-based denoisers \cite{meinhardt2017learning,zhang2017learning,zhang2021plug,kamilov2023plug}, bringing the benefits of advanced denoisers to general inverse problems. %
While useful in practice, such denoisers are \emph{not} in general proximal
operators. Indeed, modern denoisers need not be MAP estimators at all, but instead typically approximate a minimum mean squared error (MMSE) solution. Although deep learning denoisers have achieved impressive results when used with PnP, little is known
about the implicit prior---if any---encoded in these denoisers, thus diminishing the
interpretability of the reconstruction results. Some convergence
guarantees have been derived for PnP with MMSE denoisers
\cite{Xu2020-my}, chiefly relying on the assumption that the denoiser is non-expansive (which can be hard to verify or enforce in practice). Furthermore, when interpreted as proximal operators, the prior in MMSE denoisers can be drastically different from the original (true data) prior \textcite{Gribonval2011-pf}, raising concerns about correctness. 
There is a broad family of works that relate to the ideas in this work, and we expand on them in \Cref{sec:related-works}.

\vspace{-5pt}
\section{Learned Proximal Networks}
\label{sec:methodology}
\vspace{-5pt}

First, we seek a way to parameterize a neural network such that its mapping is the proximal operator of some (potentially nonconvex) scalar-valued functional.
Motivated by \Cref{prop:prox_equivalence}, we will seek network architectures that parameterize \emph{gradients of convex functions}. 
A simple way to achieve this is by differentiating a neural network that
implements a convex function: given a scalar-valued neural network,
$\psi_\theta: \R^n \rightarrow \R$, whose output is convex with respect to its
input, we can parameterize a LPN as $f_\theta = \nabla \psi_\theta$, which can be efficiently evaluated via back propagation. This makes LPN a gradient field---and a conservative vector field---of an explicit convex function. 
Fortunately, this is not an entirely new problem.
\textcite{Amos2017-ql} proposed input convex neural networks (ICNN) that guarantee to parameterize convex functions by constraining the network weights to be non-negative and the nonlinear activations convex and non-decreasing\footnote{Other ways to parameterize gradients of convex functions exist \cite{richter2021input}, but come with other constraints and limitations (see discussion in \Cref{sec:other-parameterization-of-lpn}).}. 
Consider a single-layer neural network characterized by the weights $\textbf{W} \in \R^{m\times n}$, bias $\bb\in \R^m$ and a scalar non-linearity $g:\R\to\R$. Such a network, at $\y$, is given by $\z = g(\mathbf W \y + \bb)$. With this notation, we now move to define our LPNs.

\begin{proposition}[Learned Proximal Networks]
\label{prop:lpn}
    Consider a scalar-valued $(K+1)$-layered neural network $\psi_\theta: \R^{n} \rightarrow \R$ defined by $\psi_\theta(\y) = \mathbf w^T \z_{K} + b$ and the recursion
    \begin{align*}
        \z_{1} = g( \mathbf H_1 \y + \bb_1), \qquad
        \z_{k} = g(\mathbf W_k  \z_{k-1} + \mathbf H_k \y + \bb_k), \,
        k \in [2, K] 
    \end{align*}
    where $\theta = \{\mathbf w, b, (\mathbf W_k)_{k=2}^K, (\mathbf H_k, \mathbf b_k)_{k=1}^K\}$ are learnable parameters, and $g$ is a convex, non-decreasing and $C^{2}$ scalar function, and $\mathbf W_k$ and $\mathbf w$ have non-negative entries.
    Let 
     $f_\theta$ be the gradient map of $\psi_\theta$ w.r.t. its input, i.e. $f_\theta = \nabla_{\y} \psi_\theta$. %
    Then, there exists a function $R_\theta: \R^{n} \rightarrow \R \cup
    \{+\infty\}$ such that  $f_\theta(\y) \in \prox_{R_\theta}(\y)$, $\forall~\y
    \in \R^{n}$. %
\end{proposition}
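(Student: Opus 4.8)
The plan is to reduce the claim to \Cref{prop:prox_equivalence} (the Gribonval--Nikolova characterization): it suffices to exhibit a convex, differentiable function $\psi: \R^n \to \R$ with $f_\theta = \nabla \psi$ on a non-empty open set, and then $f_\theta$ is automatically a proximal operator of \emph{some} function $R_\theta : \R^n \to \R \cup \{+\infty\}$. Since $f_\theta = \nabla_\y \psi_\theta$ holds by construction, all that remains is to verify that $\psi_\theta$ itself is (i) convex and (ii) continuously differentiable on $\Y = \R^n$, which is non-empty and open.

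For convexity I would argue by induction on the layer index $k$ that every coordinate map $\y \mapsto (\z_k)_i(\y)$ is convex. The base case $(\z_1)_i(\y) = g\bigl((\mathbf H_1 \y + \bb_1)_i\bigr)$ is a composition of the convex $g$ with an affine map, hence convex. For the inductive step, $(\z_k)_i(\y) = g\bigl(\sum_j (\mathbf W_k)_{ij} (\z_{k-1})_j(\y) + (\mathbf H_k \y + \bb_k)_i\bigr)$: the inner argument is a non-negatively weighted sum of the convex functions $(\z_{k-1})_j$ (using that $\mathbf W_k$ has non-negative entries) plus an affine term, hence convex, and precomposing it with the convex and \emph{non-decreasing} function $g$ preserves convexity. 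Finally $\psi_\theta(\y) = \mathbf w^T \z_K(\y) + b$ is a non-negatively weighted combination of the convex maps $(\z_K)_i$ (using $\mathbf w \ge 0$) plus a constant, hence convex. Differentiability then follows from the smoothness assumption on $g$: since $g \in C^2$ and affine maps are smooth, each $\z_k$ is a finite composition of $C^2$ maps, so $\psi_\theta \in C^2(\R^n)$, and in particular $f_\theta = \nabla \psi_\theta$ is well-defined and continuous on all of $\R^n$. Applying \Cref{prop:prox_equivalence} with $\Y = \R^n$ yields the desired $R_\theta$ with $f_\theta(\y) \in \prox_{R_\theta}(\y)$ for every $\y$.

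There is no deep obstacle here; the statement is essentially a bookkeeping exercise combining the input-convex-network construction of \textcite{Amos2017-ql} with the characterization of \Cref{prop:prox_equivalence}. The only points that require care are: (a) confirming that the input-injection/skip terms $\mathbf H_k \y$ do not destroy convexity (they are affine, so they do not) and that the non-negativity constraint is imposed on precisely the hidden-to-hidden weights $\mathbf W_k$ and the output weight $\mathbf w$, but crucially \emph{not} on the $\mathbf H_k$; and (b) recognizing that the $C^2$ hypothesis on $g$ is exactly what makes $f_\theta$ continuous, so that \Cref{prop:prox_equivalence} applies on all of $\R^n$ --- a merely convex but nonsmooth $\psi_\theta$ (e.g.\ with ReLU activations) would only furnish a monotone set-valued subgradient and would require a more delicate measurable-selection argument rather than the clean continuous characterization used here.
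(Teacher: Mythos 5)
Your argument is correct and follows the same route as the paper's proof: establish convexity of $\psi_\theta$ (the paper simply cites \textcite[Proposition 1]{Amos2017-ql}, whose inductive argument you have written out), note that differentiability of $g$ gives differentiability of $\psi_\theta$, and invoke \Cref{prop:prox_equivalence} with $\Y = \R^n$. The additional remarks about the skip connections $\mathbf H_k \y$ and the role of the $C^2$ hypothesis are accurate but not a departure from the paper's reasoning.
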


The simple proof of this result follows by combining properties of ICNN from \textcite{Amos2017-ql} and the characterization of proximal operators from \textcite{gribonval2020characterization} (see \Cref{sec:proof-lpn}). The $C^2$ condition for the nonlinearity\footnote{\Cref{prop:lpn} also holds if the nonlinearities are different, which we omit for simplicity of presentation.} $g$ is imposed to ensure differentiability of the ICNN $\psi_\theta$ and the LPN $f_\theta$, which will become useful in proving convergence for PnP algorithms with LPN in \cref{sec:solving-inverse}. Although this rules out popular choices like Rectifying Linear Units (ReLUs), there exist several alternatives satisfying these constraints. Following \cite{Huang2021-ds}, we adopt the \emph{softplus} function $g(x) = \frac{1}{\beta} \log (1 + \exp{(\beta x)}),$ a $\beta$-smooth approximation of ReLU. 
Importantly, LPN can be highly expressive (representing any continuous proximal operator) under reasonable settings, given the universality of ICNN \cite{Huang2021-ds}.

Networks defined by gradients of ICNN have been explored for inverse problems: \textcite{cohen2021has} used such networks to learn gradients of data-driven regularizers, thereby enforcing the learned regularizer to be convex.
While this is useful for the analysis of the optimization problem,  this cannot capture nonconvex log-priors that exist in
most cases of interest. On the other hand,
\textcite{hurault2022proximal} proposed parameterizing proximal operators as $f(\y) = \y - \nabla g(\y)$, where $\nabla g$ is $L$-Lipschitz with $L<1$. 
In practice, this is realized only approximately by regularizing its Lipschitz constant during training (%
see discussion in \Cref{sec:related-works}).
\rebuttal{Separately, gradients of ICNNs are also important in
data-driven optimal transport \cite{makkuva2020optimal,Huang2021-ds}}.

\paragraph{Recovering the prior from its proximal}
\label{sec:method-prior}
Once an LPN $f_\theta$ is obtained, we would like to recover its prox-primitive\footnote{Note our use of \emph{prox-primitive} to refer to the function $R$ with respect to the operator $\text{prox}_R$.}, $R_\theta$. This is important, as this function is precisely the regularizer in the variational objective,
$\min_{\x} \frac{1}{2} \|\y - A(\x)\|_2^2 + R_\theta(\x)$.
Thus, being able to evaluate $R_\theta$ at arbitrary points provides explicit information about the prior, enhancing interpretability of the learned regularizer. 
We start with the relation between $f$, $R_\theta$ and $\psi_\theta$
from \textcite{gribonval2020characterization} given by %
\begin{equation}
    R_\theta(f_\theta(\y)) = \langle \y, f_\theta(\y) \rangle - \frac{1}{2} \|f_\theta(\y)\|_2^2 - \psi_\theta(\y).
\end{equation}
Given our parameterization for $f_\theta$, all quantities are easily computable (via a forward pass of the LPN in \Cref{prop:lpn}). 
However, the above equation only allows to evaluate the regularizer $R_\theta$ at
points in the image of $f_\theta$, $f_\theta(\y)$, and not at an arbitrary point
$\x$. Thus, we must invert $f_\theta$, i.e. find $\y$ such that $f_\theta(\y)
= \x$. This inverse is nontrivial, since in general an LPN may not be invertible
or even surjective. Thus, as in \textcite{Huang2021-ds}, we add
a quadratic term to $\psi_\theta$, $\psi_{\theta}(\y; \alpha) = \psi_\theta(\y)
+ \frac{\alpha}{2}\|\y\|_2^2$, with $\alpha > 0$, turning $\psi_\theta$
strongly convex, and its gradient map, $f_\theta = \nabla \psi_\theta$, invertible and bijective. %
To compute this inverse, it suffices to minimize the strongly convex objective
\begin{equation}
\label{eq:invert-cvx}
    \min_\y \psi_{\theta}(\y\rebuttal{; \alpha}) - \langle \x, \y \rangle,
\end{equation}
which has a unique global minimizer $\hat{\y}$ satisfying the first-order optimality
condition $f_\theta(\hat{\y}) = \nabla \psi_\theta(\hat{\y}\rebuttal{; \alpha})
= \x$: the inverse we seek. Hence, computing the inverse amounts to solving a convex optimization problem---efficiently addressed by a variety of solvers, e.g. conjugate gradients.

Another feasible approach to invert $f_\theta$ is to simply optimize $\min_\y \|f_\theta(\y) - \x\|_2^2$, using, e.g., first-order methods. This problem is nonconvex in general, however, and thus does not allow for global convergence guarantees. Yet, we empirically find this approach work well on multiple datasets, yielding a solution $\hat{\y}$ with small mean squared error $\|f_\theta(\hat{\y})-\x\|_2^2$. %
We summarize the procedures for estimating the regularizer from an LPN in  \Cref{alg:prior} and \Cref{sec:alg-prior}.

\subsection{Training learned proximal networks via proximal matching}
\label{sec:method-train}
To solve inverse problems correctly, it is crucial that LPNs capture the true proximal operator of the underlying data distribution.
Given an unknown distribution $p_\x$, the goal of training an LPN is to learn the proximal operator of its log, $\prox_{-\log p_\x}:=f^*$.
Unfortunately, paired ground-truth samples $\{\x_i, f^*(\x_i)\}$ do not exist in
common settings---the prior distributions of many types of real-world data are unknown, making supervised training infeasible.
Instead, we seek to train an LPN using \emph{ only i.i.d.\ samples from the unknown data distribution} in an unsupervised way.

To this end, we introduce a novel loss function that we call \emph{proximal matching}. Based on the observation that the proximal operator is the \emph{maximum a posteriori} (MAP) denoiser for additive Gaussian noise, i.e. for samples $\y = \x + \sigma \vv$ with $\x \sim p_\x, \vv \sim \N(0, \mathbf{I})$, we train LPN to perform \emph{denoising} by minimizing a loss of the form 
\begin{equation}
\label{eq:base-loss}
    \underset{\x,\y}{\E} \left[ d(f_\theta(\y), \x)  \right],
\end{equation}
where $d$ is a suitable metric. Popular choices for $d$ include the squared
$\ell_2$ distance $\|f_\theta(\y) - \x\|_2^2$, the $\ell_1$ distance $\|f_\theta(\y) - \x\|_1$, or the Learned Perceptual Image Patch Similarity (LPIPS,
\cite{zhang2018unreasonable}), all of which have been used to train deep learning based denoisers \cite{zhang2017beyond,yu2019deep,tian2020deep}. However, denoisers trained with
these losses do not approximate the MAP denoiser, nor the proximal operator of the log-prior, $\prox_{-\log p_\x}$.
The squared $\ell_2$ distance, for instance,
leads to the minimum mean square error (MMSE) estimate%
given by the mean of the posterior, $\E[\x \mid \y]$. 
Similarly, the $\ell_1$ distance leads to the conditional marginal median of the posterior -- and not its maximum. As a concrete example, \Cref{fig:lap} illustrates the limitations of these metrics for learning the proximal operator of \rebuttal{the log-prior of a Laplacian distribution}.

We thus propose a new loss function that promotes the recovery of the correct proximal, dubbed \textbf{proximal matching loss}:
\begin{equation}
\label{eq:loss}
    \mathcal{L}_{PM}(\theta;\gamma) = \underset{\x,\y}{\E} \left[ m_\gamma(\|f_\theta(\y)- \x\|_2)  \right], \quad 
    m_{\gamma}(x) =
    1 - \frac{1}{(\pi\gamma^2)^{n/2}}\exp\left(-\frac{x^2}{\gamma^2}\right), \gamma > 0.
\end{equation}
Crucially, $\mathcal{L}_{PM}$ only depends on %
$p_\x$ (and Gaussian noise), allowing (approximate) proximal learning given only finite i.i.d.\ samples. 
Intuitively, $m_\gamma$ can be interpreted as an approximation to the Dirac function controlled by $\gamma$. Hence, minimizing the proximal matching loss $\mathcal{L}_{PM}$ amounts to maximizing the posterior probability $p_{\x \mid \y} (f_\theta(\y))$, and therefore results in the MAP denoiser (and equivalently, the proximal of log-prior). We now make this precise and show that minimizing $\mathcal{L}_{PM}$ yields the proximal operator of the log-prior almost surely as $\gamma \searrow 0$.

\begin{theorem}[Learning via Proximal Matching]
\label{thm:continuous}
Consider a signal $\x \sim p_\x$, where $\x$ is bounded and $p_\x$ is a continuous density,\footnote{That is, $\x$ admits a continuous probability density $p$ with respect to the Lebesgue measure on $\mathbb{R}^n$.}
and a noisy observation $\y = \x + \sigma \vv$,
where $\vv \sim \mathcal{N}(0, \mathbf{I})$ and $\sigma > 0$.
Let $m_\gamma(x) : \R \to \R$ be defined as in \Cref{eq:loss}.
Consider the optimization problem
\begin{equation}
\label{eq:cont-fstar}
f^* = \argmin_{f\ \mathrm{measurable}} 
\lim_{\gamma \searrow 0} \E_{\x,\y} \left[ m_\gamma \left( \|f(\y) - \x\|_2
\right)  \right].
\end{equation}
Then, almost surely (i.e., for almost all $\y$), $f^*(\y) = \argmax_{\bc} p_{\x \mid \y}(\bc) \triangleq \prox_{-\sigma^2\log p_\x}(\y)$.
\end{theorem}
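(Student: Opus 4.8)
The plan is to first dispose of the stated identity $\argmax_{\bc} p_{\x\mid\y}(\bc) = \prox_{-\sigma^2\log p_\x}(\y)$, which is just Bayes' rule. Writing $p_{\x\mid\y}(\bc) = p_\x(\bc)\,\phi_\sigma(\y-\bc)/p_\y(\y)$, where $\phi_\sigma$ is the $\mathcal N(0,\sigma^2\mathbf I)$ density and $p_\y(\y)=\int p_\x(\bc)\phi_\sigma(\y-\bc)\,d\bc>0$ for every $\y$ (since $p_\x$ is a density and $\phi_\sigma>0$), one takes logarithms and discards the $\bc$-independent terms: maximizing $p_{\x\mid\y}(\bc)$ over $\bc$ is equivalent to $\argmin_{\bc}\tfrac12\|\y-\bc\|_2^2 - \sigma^2\log p_\x(\bc)$, which is exactly $\prox_{-\sigma^2\log p_\x}(\y)$ by \eqref{eq:prox_def}. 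So it remains to identify the minimizer of the limiting objective with the MAP estimator $\argmax_{\bc}p_{\x\mid\y}(\bc)$.

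The core step is to evaluate the limiting objective for a fixed measurable $f$. Conditioning on $\y$ and using the definition of $m_\gamma$,
$\E_{\x,\y}\!\left[m_\gamma(\|f(\y)-\x\|_2)\right] = 1 - \E_\y\!\left[(p_{\x\mid\y}\conv \eta_\gamma)(f(\y))\right]$,
where $\eta_\gamma$ is the density of $\mathcal N(0,\tfrac{\gamma^2}{2}\mathbf I)$ — the point being that $(\pi\gamma^2)^{-n/2}\exp(-\|\cdot\|_2^2/\gamma^2)$ is precisely a Gaussian mollifier of width $\gamma/\sqrt2$. Because $\x$ is bounded and $p_\x$ is continuous, $p_\x$ is bounded with compact support, so for each $\y$ the conditional density $p_{\x\mid\y}(\cdot)$ is continuous and bounded; the standard approximate-identity argument then gives $(p_{\x\mid\y}\conv\eta_\gamma)(\uu)\to p_{\x\mid\y}(\uu)$ as $\gamma\searrow 0$ for every $\uu$, in particular at $\uu = f(\y)$.

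Next I would pass the limit through $\E_\y$ by dominated convergence. The mollified quantity is bounded by $M(\y):=\sup_{\bc}p_{\x\mid\y}(\bc)$, and $\E_\y[M(\y)] = \int \sup_{\bc}\bigl(p_\x(\bc)\phi_\sigma(\y-\bc)\bigr)\,d\y$, which is finite: $p_\x$ is bounded, and for $\y$ outside the compact support of $p_\x$ the supremum $\sup_\bc\phi_\sigma(\y-\bc)$ decays like a Gaussian in $\|\y\|_2$, so the integrand is $p_\y$-integrable — indeed integrable against Lebesgue measure. Hence $\lim_{\gamma\searrow0}\E_{\x,\y}[m_\gamma(\|f(\y)-\x\|_2)] = 1 - \E_\y[p_{\x\mid\y}(f(\y))]$ for every measurable $f$, so the problem \eqref{eq:cont-fstar} becomes $\max_f \E_\y[p_{\x\mid\y}(f(\y))]$. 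This is maximized pointwise precisely when $f(\y)\in\argmax_{\bc}p_{\x\mid\y}(\bc)$ (a nonempty compact set, since $p_{\x\mid\y}$ is continuous with compact support); a measurable selection of this set exists by a standard measurable maximum / measurable selection theorem, so a minimizer $f^*$ exists and every minimizer satisfies $f^*(\y)\in\argmax_{\bc}p_{\x\mid\y}(\bc)$ for almost every $\y$. Combining with the first paragraph gives $f^*(\y)=\prox_{-\sigma^2\log p_\x}(\y)$ a.s., as claimed.

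The main obstacle I anticipate is precisely the interchange of limit and expectation: one must exhibit an integrable dominating function, and this is exactly where the boundedness hypothesis on $\x$ is essential — without compact support of $p_\x$ the envelope $M(\y)$ need not be $p_\y$-integrable (the ratio $\phi_\sigma(\y-\bc)/p_\y(\y)$ can grow in $\|\y\|_2$). A secondary technical nuisance is measurability of the argmax selection, together with the harmless observation that if the maximizer of $p_{\x\mid\y}$ is non-unique on some set of $\y$'s, any measurable selection there is still optimal, consistent with the paper's definition of a proximal operator as a selection of a set-valued map.
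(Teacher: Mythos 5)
Your proposal is correct and follows essentially the same route as the paper's proof: rewrite the inner conditional expectation as a Gaussian mollification $(p_{\x\mid\y}\conv\eta_\gamma)(f(\y))$, pass to the limit via the approximation-to-the-identity theorem, justify the interchange with $\E_\y$ by dominated convergence using the envelope $\sup_{\bc}p_\x(\bc)\varphi_\sigma(\y-\bc)$ (integrable thanks to the compact support of $p_\x$), and finish with a measurable selection of the pointwise argmax. The only addition is your explicit Bayes'-rule verification that the MAP coincides with $\prox_{-\sigma^2\log p_\x}$, which the paper treats as definitional.
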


We defer the proof to \Cref{sec:proof-continuous} and instead make a few remarks. First,
while the result above was presented for the loss defined in \Cref{eq:loss} for simplicity, this holds in greater generality for loss functions satisfying specific technical conditions (see \Cref{remark:other-prox-matching-loss}).
Second, an analogous result for discrete distributions can also be derived, and we include this companion result in \Cref{thm:discrete}, \Cref{sec:thm-discrete}.
\rebuttal{Third, the Gaussian noise level $\sigma$ acts as a scaling factor on
the learned regularizer, as indicated by $f^*(\y) = \prox_{-\sigma^2\log p_x}(\y)$. 
Thus varying the noise level effectively varies the strength of the regularizer.}
Lastly, to bring this theoretical guarantee to practice, we progressively
decrease $\gamma$ until a small positive amount during training according to
a schedule function $\gamma(\cdot)$ for an empirical sample (instead of the
expectation)\rebuttal{, and pretrain LPN with $\ell_1$ loss before proximal
matching}. We include an algorithmic description of training via proximal
matching in \Cref{sec:alg-train}, \Cref{alg:train}. %
Connections between the proximal matching loss \Cref{eq:loss} and prior work on impulse denoising and modal regression are discussed in \Cref{sec:related-works}.

Before moving on, we summarize the results of this section: the parameterization in \Cref{prop:lpn} guarantees that LPN implement a proximal operator for some regularizer function; the optimization problem in \Cref{eq:invert-cvx} then provides a way to evaluate this regularizer function at arbitrary points; and lastly, \Cref{thm:continuous} shows that if we want the LPN to recover the correct proximal (of the log-prior of data distribution), then \emph{proximal matching} is the correct learning strategy for these networks.

\vspace{-5pt}
\section{Solving Inverse Problems with LPN}
\label{sec:solving-inverse}
\vspace{-5pt}
Once an LPN is trained, it can be used to solve inverse problems within the PnP
framework \cite{venkatakrishnan2013plug} by substituting any occurrence of the proximal step
$\prox_R$ with the learned proximal network $f_\theta$. 
As with any PnP method, our LPN can be flexibly plugged into a wide range of iterative algorithms, such as PGD, ADMM, or HQS.
Chiefly, and in contrast to previous PnP approaches, our LPN-PnP approach provides the guarantee that the employed denoiser is indeed a proximal operator. %
As we will now show, this enables convergence guarantees absent any additional
assumptions on the learned network. %
We provide an instance of solving inverse problems using LPN with PnP-ADMM in \Cref{alg:admm}, and another example with PnP-PGD in
\Cref{alg:pgd}.

\begin{wrapfigure}{r}{0.55\textwidth}
\vspace{-30pt}
\begin{minipage}{0.54\textwidth}
\begin{algorithm}[H]
\caption{Solving inverse problem with LPN and PnP-ADMM}
\label{alg:admm}

\begin{algorithmic}[1]

\Require Trained LPN $f_\theta$, operator $A$, measurement $\y$, initial $\x_0$, number of iterations $K$, penalty parameter $\rho$

\State $\uu_0 \gets 0$, $\z_0 \gets \x_0$

\For{$k = 0$ \textbf{to} $K-1$}
    \State $\x_{k+1} \gets \argmin_{\x} \{ \frac{1}{2} \| \y - A(\x) \|_2^2
    + \frac{\rho}{2} \|\z_k - \uu_k - \x\|_2^2 \}$

    \State $\uu_{k+1} \gets \uu_k + \x_{k+1} - \z_{k}$

    \State $\z_{k+1} \gets f_\theta \left( \uu_{k+1} + \x_{k+1} \right)$ 

\EndFor

\Ensure $\x_K$ 
\end{algorithmic}
\end{algorithm}

\end{minipage}
\vspace{-10pt}
\end{wrapfigure}

\paragraph{Convergence Guarantees in Plug-and-Play Frameworks}

Because LPNs are by construction proximal operators, %
PnP schemes with plug-in LPNs correspond to iterative algorithms for minimizing
the variational objective \Cref{eq:main_inverse_problem}, with the
implicitly-defined regularizer $R_{\theta}$ associated to the LPN.
As a result, convergence guarantees for PnP schemes with LPNs follow readily
from convergence analyses of the corresponding optimization procedure, under
suitably general assumptions.
We state and discuss such a guarantee for using an LPN with PnP-ADMM (\cref{alg:admm}) in \cref{thm:pnp-admm-lpn-stationary}---our proof appeals to the nonconvex ADMM analysis of \citet{Themelis2020-jj}.

\begin{theorem}[Convergence guarantee for running PnP-ADMM with LPNs]\label{thm:pnp-admm-lpn-stationary}
    Consider the sequence of iterates $(\x_{k}, \uu_k, \z_k)$, $k \in \{0, 1, \dots\}$,
    defined by \Cref{alg:admm} run with a linear measurement operator $\A$ and an
    LPN $f_{\theta}$ with softplus activations, trained with $0 < \alpha < 1$.
    Assume further that the penalty parameter $\rho$ satisfies
    $\rho > \| \A\adj \A \|$.
    Then the sequence of iterates $(\x_{k}, \uu_k, \z_k)$ converges to a limit
    point $(\x^*, \uu^*, \z^*)$ which %
    is a fixed point of the PnP-ADMM iteration (\Cref{alg:admm}).
\end{theorem}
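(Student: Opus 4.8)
The plan is to make the regularizer implicit in $f_\theta$ explicit, recognize \Cref{alg:admm} as ADMM for a well-behaved nonconvex problem, and appeal to the convergence theory of \citet{Themelis2020-jj}. The first task is to identify and characterize the prox-primitive of $f_\theta$. By \Cref{prop:lpn}, $f_\theta = \nabla\psi$ where $\psi := \psi_\theta(\cdot\,;\alpha) = \psi_\theta(\cdot) + \tfrac\alpha2\|\cdot\|_2^2$ and $\psi_\theta$ is the ICNN; since the softplus activation is $C^2$ with bounded first and second derivatives and the trained weights are fixed, $\psi$ has a bounded Hessian, hence is $\alpha$-strongly convex and $L_\psi$-smooth for some finite $L_\psi$, so $f_\theta$ is a bijection of $\R^n$. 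Substituting $\x = f_\theta(\y)$ into the identity of \citet{gribonval2020characterization} recalled in \Cref{sec:method-prior} and using Fenchel--Young gives the closed form $R_\theta = \psi^* - \tfrac12\|\cdot\|_2^2$. From this I would extract the properties that matter: $R_\theta$ is real-valued and $C^1$ with $(\tfrac1{L_\psi}-1)\mathbf{I} \preceq \nabla^2 R_\theta \preceq (\tfrac1\alpha-1)\mathbf{I}$, so $\nabla R_\theta$ is Lipschitz and $R_\theta$ is weakly convex with modulus $1-\tfrac1{L_\psi}<1$ (which matches the unit coefficient of the prox in line~5 of \Cref{alg:admm}, making that subproblem strongly convex and well posed); and, completing the square in $\psi^*$, one obtains $R_\theta(\x) = \tfrac{1-\alpha}{2\alpha}\|\x\|_2^2 - e_{1/\alpha}\psi_\theta(\x/\alpha)$, where $e_{1/\alpha}$ denotes the Moreau envelope. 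Because an ICNN with softplus activations and nonnegative last-layer weights $\w$ is bounded below and grows at most linearly, the envelope term is $O(\|\x\|_2)$ while the quadratic dominates, so $R_\theta$ is coercive (hence bounded below) --- this is the one place where the hypothesis $\alpha<1$ is essential.

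Next, using $f_\theta = \prox_{R_\theta}$, I would observe that \Cref{alg:admm} is precisely ADMM with penalty $\rho$ for $\min_{\x,\z}\tfrac12\|\y-\A\x\|_2^2 + \rho\, R_\theta(\z)$ subject to $\x=\z$: line~3 is the strongly convex $\x$-update $(\A\adj\A+\rho\mathbf{I})^{-1}(\A\adj\y+\rho(\z_k-\uu_k))$, line~4 is the (scaled) dual update, and line~5 is the $\z$-update. The smooth summand $\x\mapsto\tfrac12\|\y-\A\x\|_2^2$ is convex with $\|\A\adj\A\|$-Lipschitz gradient, and by the first step $R_\theta$ is proper, lower semicontinuous, prox-bounded and bounded below; the penalty requirement of \citet{Themelis2020-jj} in this configuration is exactly $\rho>\|\A\adj\A\|$, which is assumed. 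Their analysis then supplies a Douglas--Rachford/augmented-Lagrangian envelope that acts as a Lyapunov function --- nonincreasing along the iteration and bounded below --- which forces $(\x_k,\uu_k,\z_k)$ to be bounded, the successive increments to vanish, and every cluster point to be a fixed point of the iteration map.

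To upgrade subsequential convergence to convergence of the whole sequence I would verify the Kurdyka--\L{}ojasiewicz (KL) inequality for that Lyapunov function: softplus is assembled from $\exp$ and $\log$, so $\psi_\theta$, its conjugate, its Moreau envelope, $R_\theta$, the polynomial data term, and hence the envelope itself, are all definable in the o-minimal structure $\mathbb{R}_{\exp}$ and therefore satisfy the KL property. The KL-based convergence statement of \citet{Themelis2020-jj} then yields a single limit $(\x^*,\uu^*,\z^*)$. Finally, each step of \Cref{alg:admm} is continuous in the carried state $(\uu_k,\z_k)$ --- the $\x$-update is affine through $(\A\adj\A+\rho\mathbf{I})^{-1}$, the $\uu$-update is affine, and the $\z$-update is the continuous map $f_\theta$ --- so passing to the limit in the recursion shows that $(\x^*,\uu^*,\z^*)$ is a fixed point of the PnP-ADMM iteration, as claimed.

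The step I expect to be most delicate is the analysis of $R_\theta$ in the first paragraph: establishing its coercivity and lower boundedness, and pinning down the precise weak-convexity and smoothness moduli, from the ICNN structure together with $\alpha<1$; coupled with this, one must line up the scaling conventions of \Cref{alg:admm} with the exact hypotheses and penalty threshold of \citet{Themelis2020-jj} so that their theorem (including its KL refinement) can be invoked essentially verbatim. The o-minimality/KL verification is routine but must be carried out for the envelope function, not merely for $R_\theta$ itself.
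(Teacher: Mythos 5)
Your proposal is correct and follows the same overall architecture as the paper's proof: identify the implicit regularizer $R_\theta$ with $f_\theta = \prox_{R_\theta}$, establish its regularity (real-valued, $C^1$, weakly convex with modulus below one, coercive), recognize \Cref{alg:admm} as ADMM with penalty $\rho$ for $\tfrac12\|\y-\A\x\|_2^2 + \rho R_\theta(\z)$ in the scaled-dual form, invoke the nonconvex ADMM analysis of \citet{Themelis2020-jj} with $L=\|\A\adj\A\|$ and $\sigma=0$, verify the KL property through definability of softplus networks in an o-minimal structure containing $\exp$, and pass to the limit in the (continuous) iteration map to get the fixed-point property. The one place you genuinely diverge is the key regularity lemma for $R_\theta$. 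The paper works on the image of $f_\theta$, writing $R_\theta(f_\theta(\y))$ explicitly in terms of $\psi_\theta(\y)$ and $\nabla\psi_\theta(\y)$, and then transfers coercivity back through a bi-Lipschitz estimate on $f_\theta^{-1}$ (itself obtained via invariance of domain and the Lipschitz constant of $\nabla\psi_\theta$). You instead pass to the Fenchel conjugate, obtaining the closed form $R_\theta = \psi^* - \tfrac12\|\cdot\|_2^2 = \tfrac{1-\alpha}{2\alpha}\|\cdot\|_2^2 - e_{1/\alpha}\psi_\theta(\cdot/\alpha)$, from which coercivity is immediate once one notes the ICNN is bounded below (nonnegative last-layer weights acting on nonnegative softplus features) and globally Lipschitz, so the Moreau-envelope term is $O(\|\x\|_2)$; the Hessian sandwich $(\tfrac1{L_\psi}-1)\mathbf{I}\preceq\nabla^2 R_\theta\preceq(\tfrac1\alpha-1)\mathbf{I}$ likewise replaces the paper's appeal to \cite[Proposition 2(1)]{gribonval2020characterization} for weak convexity. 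Your route is arguably cleaner and yields slightly more (two-sided curvature bounds and an explicit formula for $R_\theta$ off the image of $f_\theta$ without inverting the network), at the cost of needing $\psi^*$ to be well-defined and smooth, which your strong-convexity observation supplies; both routes use the same two raw ingredients, namely Lipschitzness of the ICNN and its gradient, and the hypothesis $0<\alpha<1$. One small bookkeeping point: Themelis--Patrinos require the \emph{augmented Lagrangian} (equivalently the Douglas--Rachford envelope) to be a KL function, so your definability check must indeed be run on that composite object rather than on $R_\theta$ alone — you flag this correctly, and the verification goes through since o-minimal structures are closed under the sums, compositions, and partial infima involved.
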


We defer the proof of \Cref{thm:pnp-admm-lpn-stationary}
to \Cref{sec:proof-pnp-admm-lpn-stationary}.
There, we moreover show that $\x_k$ converges to a critical point of the regularized reconstruction cost \cref{eq:main_inverse_problem}
with regularization function $R = \lambda R_{\theta}$, %
where $R_{\theta}$ is the implicitly-defined regularizer associated to $f_{\theta}$
(i.e.\, $f_{\theta} = \prox_{R_{\theta}}$) and the regularization strength
$\lambda$ depends on parameters of the PnP algorithm (%
$\lambda = \rho$ for PnP-ADMM).
In addition, we emphasize that 
\Cref{thm:pnp-admm-lpn-stationary} %
requires the bare minimum of assumptions
on the trained LPN: it holds for any LPNs by construction, under assumptions that are all actionable and achievable in practice (on network weights, activation, and strongly convex parameter). This should be contrasted to PnP schemes that utilize a
black-box denoiser -- convergence guarantees in this
setting require restrictive assumptions on the denoiser, such as contractivity
\cite{Ryu2019-ca}, (firm) nonexpansivity
\cite{Sun2019-zc,Sun2021-ll,cohen2021has,Cohen2021-mp,Tan2023-gd},
or other Lipschitz constraints \cite{hurault2022gradient,hurault2022proximal}, which are difficult to verify or enforce in practice without
sacrificing denoising performance. Alternatively, other PnP schemes sacrifice expressivity
for a principled approach by enforcing that the denoiser takes a restrictive
form, such as being a (Gaussian) MMSE denoiser \cite{Xu2020-my}, %
a linear denoiser \cite{Hauptmann2023-nu}, or the proximal operator of an
implicit convex function \cite{Sreehari2016-in,Teodoro2018-ly}.

The analysis of LPNs we use to prove \Cref{thm:pnp-admm-lpn-stationary} is general enough to be extended straightforwardly to other PnP optimization schemes.
Under a similarly-minimal level of assumptions to \Cref{thm:pnp-admm-lpn-stationary}, we give in \cref{thm:pnp-pgd-lpn-stationary} (\cref{sec:thm-pnp-pgd-lpn}) a convergence analysis for PnP-PGD (\cref{alg:pgd}), which tends to perform slightly worse than PnP-ADMM in practice.

\section{Experiments}
\label{sec:experiments}

\begin{wrapfigure}{r}{.7\textwidth}
    \centering
    \includegraphics[trim = 10 30 0 70, width=0.7\textwidth]{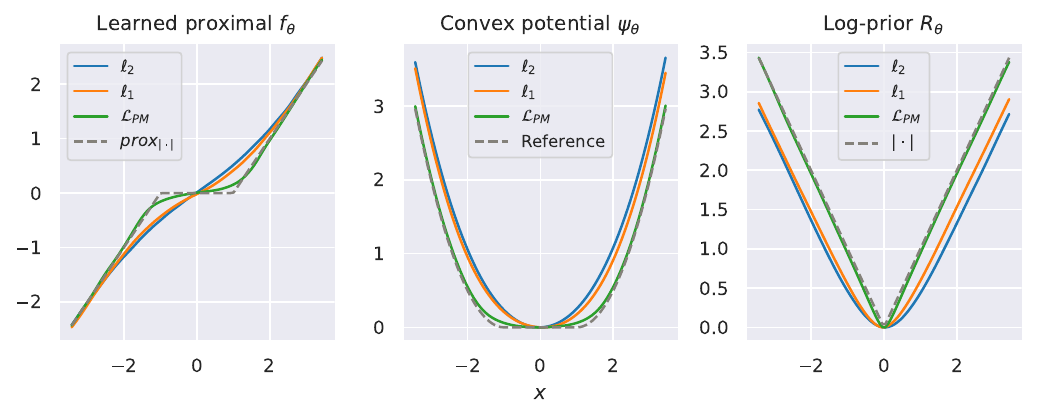}
    \caption{The proximal $f_\theta$, convex potential $\psi_\theta$, and log-prior $R_\theta$ learned by LPN via the squared $\ell_2$ loss, $\ell_1$ loss, and proximal matching loss $\mathcal{L}_{PM}$ for a Laplacian distribution (ground truth in gray). %
    }
    \label{fig:lap}
\end{wrapfigure}
We evaluate LPN on datasets of increasing complexity, from an analytical one-dimensional example of a Laplacian distribution to image datasets of increasing dimensions: MNIST ($28\times 28$) \cite{lecun1998mnist}, CelebA ($128\times 128$) \cite{liu2018large}, and Mayo-CT ($512\times 512$) \cite{mccollough2016tu}. We demonstrate how the ability of LPN to learn an exact proximal for the correct prior reflects on natural values for the obtained log-likelihoods. Importantly, we showcase the performance of LPN for real-world inverse problems on CelebA and Mayo-CT, for deblurring, sparse-view tomographic reconstruction, and compressed sensing, comparing it with other state-of-the-art unsupervised approaches for (unsupervised) image restoration. See full experimental details in \Cref{sec:details}.

\subsection{What is your prior?}
\paragraph{Learning soft-thresholding from Laplacian distribution}
We first experiment with a distribution whose log-prior has a known proximal operator, the 1-D Laplacian distribution
$p(x \mid \mu,b) = \frac{1}{2b} \exp \left( -\frac{|x - \mu|}{b} \right)$. Letting $\mu=0,\ b=1$ for simplicity, the negative log likelihood (NLL) is the $\ell_1$ norm,
$-\log p (x) = |x| - \log(\frac{1}{2})$, and its proximal can be written is the soft-thresholding function
$\prox_{-\log p}(x) = \sign(x) \max(|x| - 1, 0).$ 
We train a LPN on i.i.d.\ samples from the Laplacian and Gaussian noise, as in \Cref{eq:base-loss}, and compare different loss functions, including the proximal matching loss $\mathcal{L_{PM}}$, for which we consider different $\gamma \in \{0.5,0.3,0.1\}$ in $\mathcal{L_{PM}}$ (see \cref{eq:loss}).

As seen in \Cref{fig:lap}, when using either the $\ell_2$ or $\ell_1$ loss, the learned prox differs from the correct soft-thresholding function. Indeed, verifying our analysis in \Cref{sec:method-train}, these yield the posterior mean and median, respectively, rather than the posterior mode. With the matching loss $\mathcal{L}_{PM}$ ($\gamma = 0.1$ in \cref{eq:loss}), the learned proximal matches much more closely the ground-truth prox. The third panel in \Cref{fig:lap} further depicts the learned log-prior $R_\theta$ associated with each LPN $f_\theta$, computed using the algorithm in \Cref{sec:method-prior}. Note that $R_\theta$ does not match the ground-truth log-prior $|\cdot|$ for $\ell_2$ and $\ell_1$ losses, but converges to the correct prior with $\mathcal{L}_{PM}$ (see more results for different $\gamma$ in \Cref{sec:experiments-lap-appx}).
Note that we normalize the offset of learned priors by setting the minimum value to $0$ for visualization:
the learned log-prior $R_\theta$ has an arbitrary offset (since we only estimate the log-prior). In other words, LPN is only able to learn the relative density of the distribution due to the intrinsic scaling symmetry of the proximal operator.

\paragraph{Learning a prior for MNIST}
\label{sec:experiments-mnist}

\begin{figure}
    \centering

    \subcaptionbox{
    \label{fig:mnist-noise}}
    {
        \includegraphics[trim = 0 30 0 30, width=0.2\textwidth]{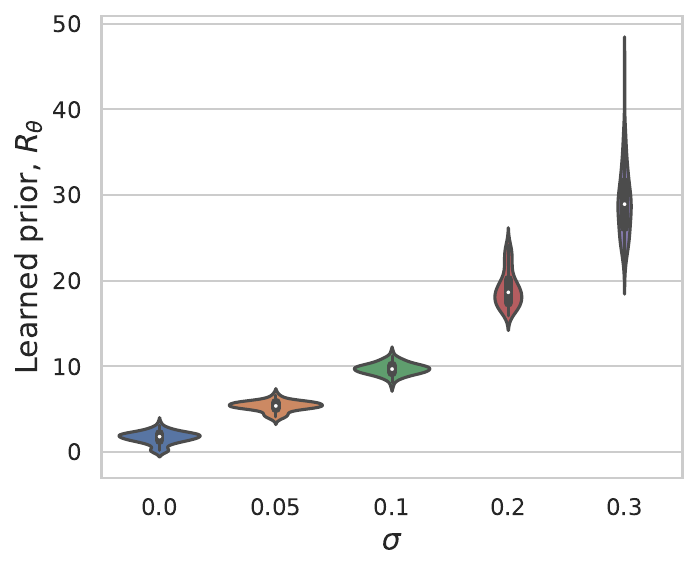}
        \includegraphics[width=0.27\textwidth,trim=0 0 0 0,clip]{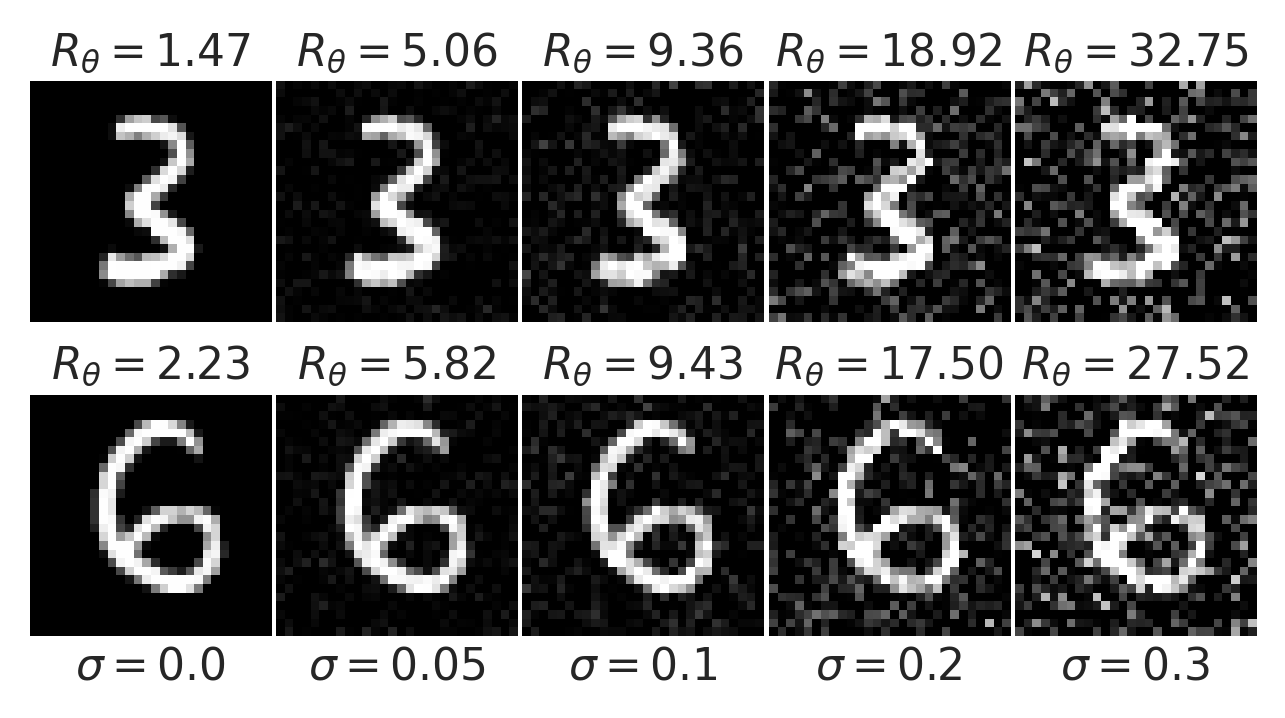}
    }
    \subcaptionbox{
    \label{fig:mnist-cvx}}
    {
        \includegraphics[trim = 0 30 0 30, width=0.2\textwidth]{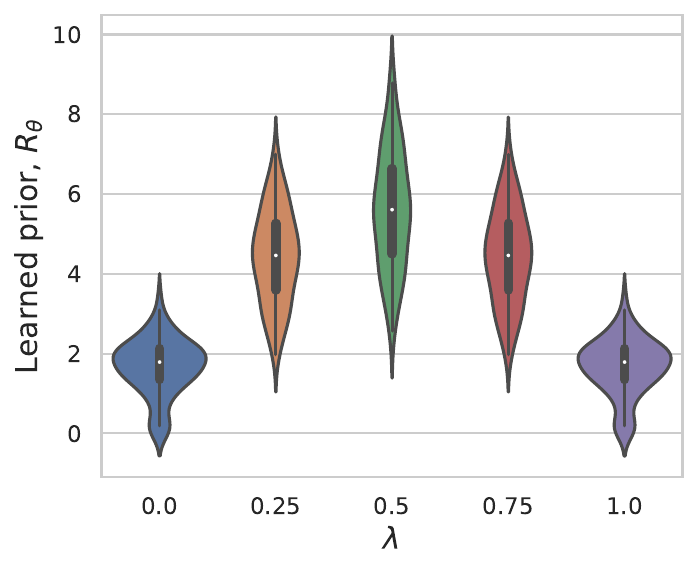}
        \includegraphics[ width=0.27\textwidth,trim=0 0 0 0,clip]{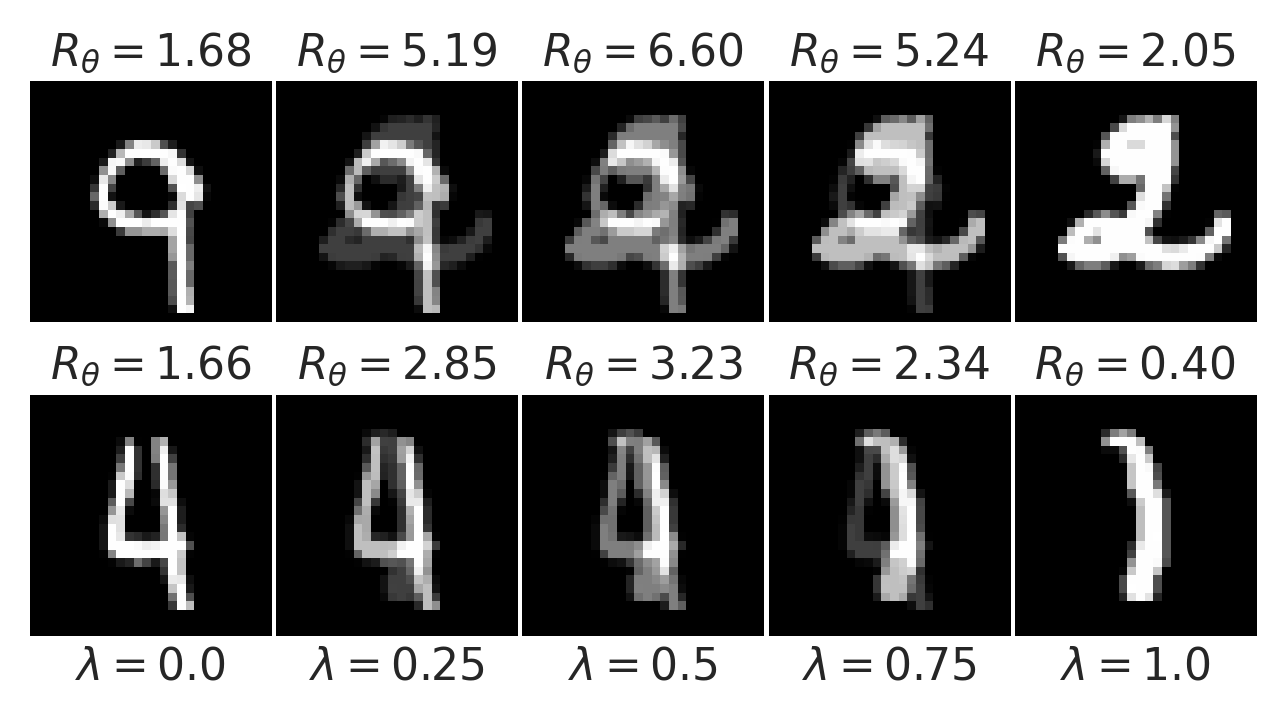}
    }
    \vspace{-5pt}
    \caption{Left: log-prior $R_\theta$ learned by LPN on MNIST (computed over 100 test images), evaluated at images corrupted by (a) additive Gaussian noise, and (b) convex combination of two images $(1-\lambda) \x + \lambda \x'$. Right: the prior evaluated at individual examples.}
    \label{fig:mnist-prior}
\end{figure}

Next, we train an LPN on MNIST, attempting to learn a general restoration method for hand-written digits---and through it, a prior of the data. For images, we implement the LPN with convolution layers; see \Cref{sec:details-mnist} for more details.  Once the model is learned, we evaluate the obtained prior on a series of inputs with different types and degrees of perturbations in order to gauge how such modifications to the data are reflected by the learned prior. 
\Cref{fig:mnist-noise} visualizes the change of prior $R_\theta$ after adding increasing levels of Gaussian noise. As expected,  as the noise level increases, the values reported by the log-prior also increases, reflecting that noisier images are less likely according to the data distribution of real images.

The lower likelihood upon perturbations of the samples is general. We depict examples with image blur in \Cref{sec:experiments-mnist-blur}, and also present a study that depicts the non-convexity 
\begin{wrapfigure}{r}{0.6\textwidth}
    \centering
    \vspace{-6pt}
    \subcaptionbox{
        Sparse-view tomographic reconstruction.
        \label{fig:ct-tomo}
    }
    {
        \includegraphics[width=0.58\textwidth]{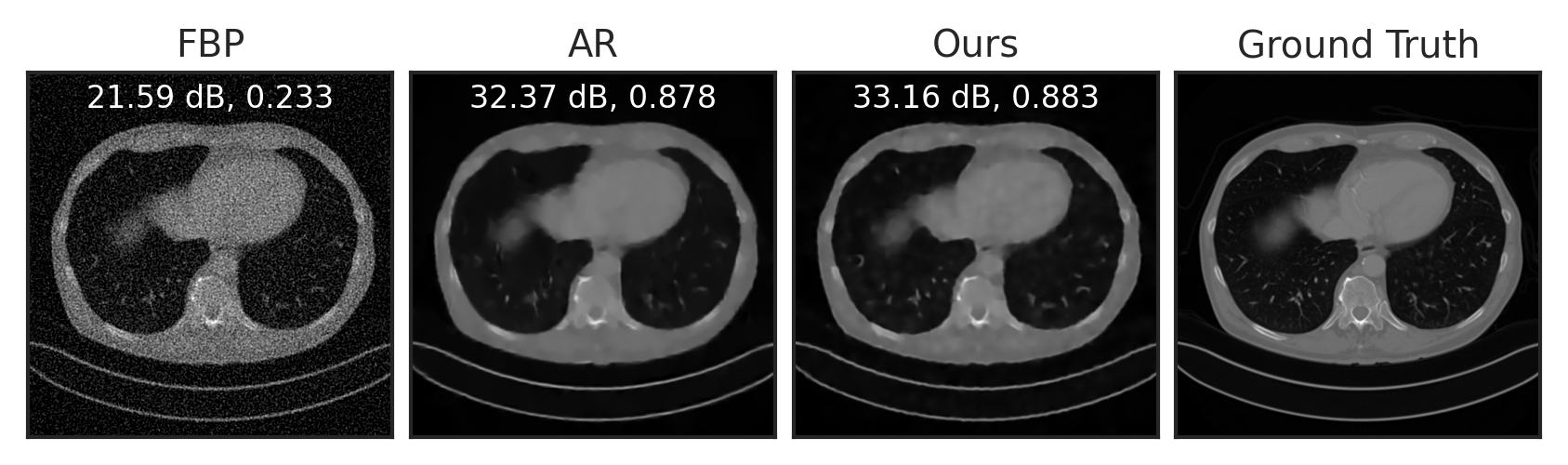}
    }
    \subcaptionbox{
        Compressed sensing (compression rate $=1/16$).
        \label{fig:ct-cs}
    }
    {
        \includegraphics[width=0.58\textwidth]{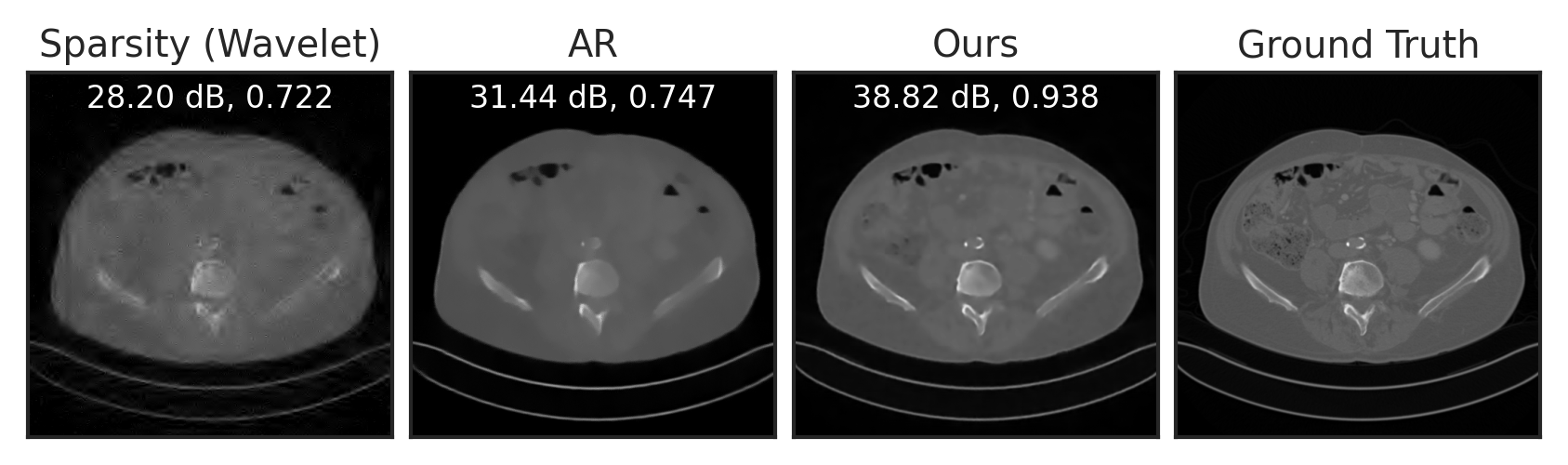}
    }
    
    \caption{Results on the Mayo-CT dataset (details in text).}
    \label{fig:ct}
    \vspace{-6pt}
\end{wrapfigure}
of the log-prior in  \Cref{fig:mnist-cvx}: we evaluate the learned prior at the convex combination of two samples, $\lambda \x + (1-\lambda) \x'$ of two testing images $\x$ and $\x'$, with $\lambda \in [0,1]$. 
As depicted in \Cref{fig:mnist-cvx}, as $\lambda$ goes from $0$ to $1$, the learned prior first increases and then decreases, exhibiting a nonconvex shape. This is natural, since the convex combination of two images no longer resembles a natural image, demonstrating that the true prior should indeed be nonconvex. 
As we see, LPN can correctly learn this qualitative property in the prior, while existing approaches using convex priors, either hand-crafted \cite{tikhonov1977solutions,rudin1992nonlinear,mallat1999wavelet,beck2009fast,elad2006image,chambolle2011first} or data-driven \cite{mukherjee2021end,cohen2021has}, are suboptimal by not faithfully capturing such nonconvexity. 
All these results collectively show that LPN can learn a good approximation of the prior of images from data samples, and the learned prior either recovers the correct log-prior when it is known (in the Laplacian example), or provides a prior that coincides with human preference of natural, realistic images. With this at hand, we now move to address more challenging inverse problems.

\begin{figure}[!t]
\centering

\subcaptionbox{$\sigma_{blur}=1.0, \sigma_{noise}=0.02$.
\label{fig:celeba-1}}
{\includegraphics[width=0.49\linewidth]{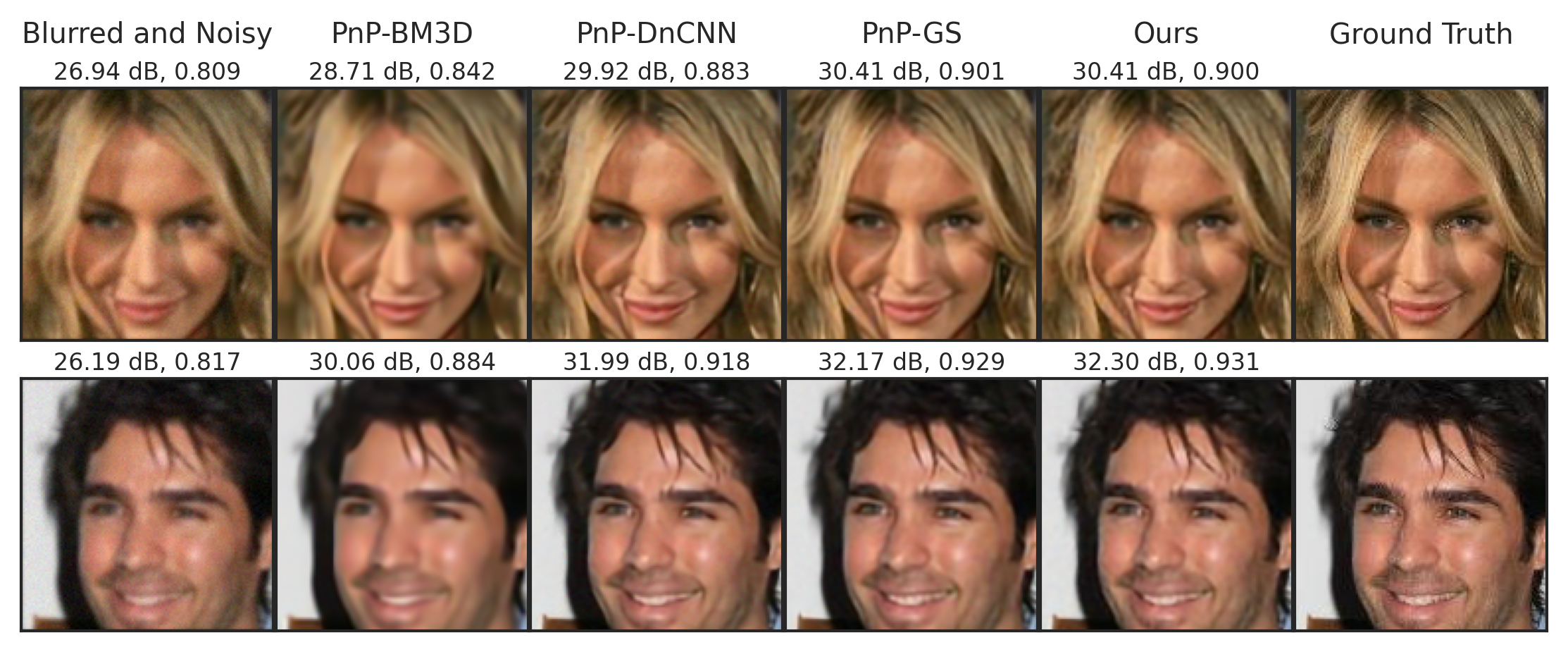}}
\subcaptionbox{$\sigma_{blur}=1.0, \sigma_{noise}=0.04$.
\label{fig:celeba-2}}
{\includegraphics[width=0.49\linewidth]{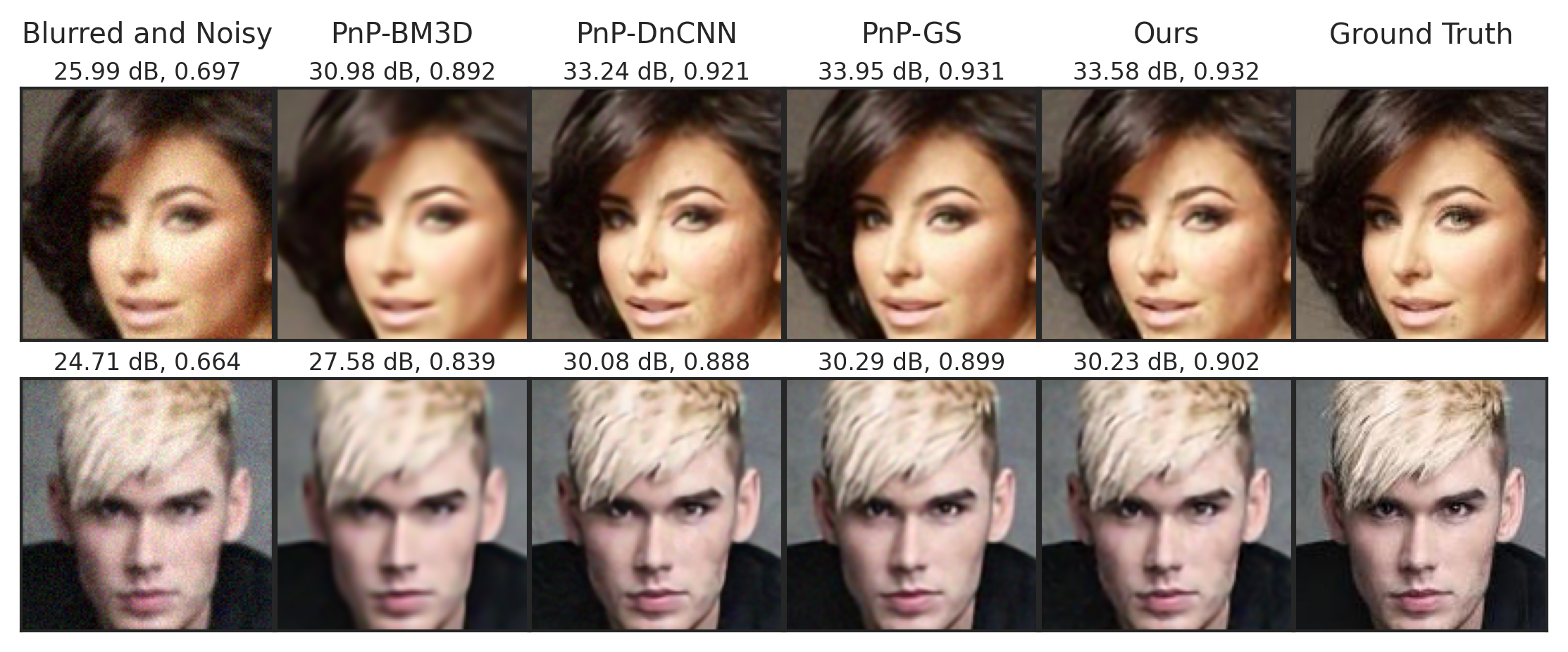}}

\vspace{0pt}
\caption{\rebuttal{Visual results for deblurring on CelebA using Plug-and-Play with different denoisers (BM3D, DnCNN, the gradient step (GS) Prox-DRUNet, and our LPN)}, for different Gaussian blur kernel standard deviation $\sigma_{blur}$ and noise standard deviation $\sigma_{noise}$. PSNR and SSIM are presented above each prediction.}
\label{fig:celeba}
\end{figure}

\begin{table}[]
\vspace{-5pt}
\rebuttal{
\centering
\caption{\rebuttal{Deblurring on CelebA, over 20 samples. \textbf{Bold} (\underline{underline}) for the best (second best) score.}}
\vspace{-5pt}
\label{tab:celeba-deblur}
\resizebox{\textwidth}{!}{%
\begin{tabular}{@{}ccccccccc@{}}
\toprule
\multirow{2}{*}{METHOD} & \multicolumn{2}{c}{$\sigma_{blur}=1, \, \sigma_{noise}=.02$} & \multicolumn{2}{c}{$\sigma_{blur}=1, \, \sigma_{noise}=.04$} & \multicolumn{2}{c}{$\sigma_{blur}=2, \, \sigma_{noise}=.02$} & \multicolumn{2}{c}{$\sigma_{blur}=2, \, \sigma_{noise}=.04$} \\
 \cmidrule(lr){2-3} \cmidrule(lr){4-5} \cmidrule(lr){6-7} \cmidrule(lr){8-9} 
& PSNR($\uparrow$) & SSIM($\uparrow$) & PSNR($\uparrow$) & SSIM($\uparrow$) & PSNR($\uparrow$) & SSIM($\uparrow$) & PSNR($\uparrow$) & SSIM($\uparrow$) \\ \midrule
Blurred and Noisy & 27.0 $\pm$ 1.6 & .80 $\pm$ .03 & 24.9 $\pm$ 1.0 & .63 $\pm$ .05 & 24.0 $\pm$ 1.7 & .69 $\pm$ .04 & 22.8 $\pm$ 1.3 & .54 $\pm$ .04 \\ 
PnP-BM3D \cite{venkatakrishnan2013plug} & 31.0 $\pm$ 2.7 & .88 $\pm$ .04 & 29.5 $\pm$ 2.2 & .84 $\pm$ .05 & 28.5 $\pm$ 2.2 & .82 $\pm$ .05 & 27.6 $\pm$ 2.0 & .79 $\pm$ .05 \\ 
PnP-DnCNN \cite{zhang2017beyond} & 32.3 $\pm$ 2.6 & .90 $\pm$ .03 & 30.9 $\pm$ 2.1 & .87 $\pm$ .04 & 29.5 $\pm$ 2.0 & .84 $\pm$ .04 & 28.3 $\pm$ 1.8 & .79 $\pm$ .05 \\ 
PnP-GS \cite{hurault2022proximal} & \bf 33.0 $\pm$ 3.0 & \bf .92 $\pm$ .03 & \bf 31.4 $\pm$ 2.4 & \bf .89 $\pm$ .03 & \bf 30.1 $\pm$ 2.5 & \bf .87 $\pm$ .04 & \bf 29.3 $\pm$ 2.3 & \bf .84 $\pm$ .05 \\
Ours & \bf 33.0 $\pm$ 2.9 & \textbf{.92 $\pm$ .03} & \underline{31.3 $\pm$ 2.3} & \textbf{.89 $\pm$ .03} & \textbf{30.1 $\pm$ 2.4} & \textbf{.87 $\pm$ .04} & \underline{29.1 $\pm$ 2.2} & \textbf{.84 $\pm$ .04} \\ 
\bottomrule
\end{tabular}%
}
}
\vspace{-10pt}
\end{table}

\vspace{-5pt}
\subsection{Solving inverse problems with LPN}

\vspace{-3pt}

\paragraph{CelebA}  We now showcase the capability of LPN for solving realistic inverse problems. We begin by training an LPN on the CelebA dataset, and employ the PnP-ADMM methodology for deblurring. We compare with state-of-the-art PnP approaches: PnP-BM3D \cite{venkatakrishnan2013plug}, which uses the BM3D denoiser \cite{dabov2007image}, PnP-DnCNN, which uses DnCNN as the denoiser \cite{zhang2017beyond} , \rebuttal{and PnP-GS using the gradient step proximal denoiser called Prox-DRUNet \cite{hurault2022proximal}.} Both DnCNN and Prox-DRUNet have been trained on CelebA.
As shown in \Cref{tab:celeba-deblur}, LPN achieves state-of-the-art result
across multiple blur degrees, noise levels and metrics considered. As visualized
in \Cref{fig:celeba}, LPN significantly improves the quality of the blurred
image, demonstrating the effectiveness of the learned prior for solving inverse
problems. 
\begin{wraptable}{r}{0.5\textwidth}
\vspace{-0.3em}
\captionsetup{justification=centering}
\caption{Numerical results for inverse problems on Mayo-CT, computed over 128 test images.}
\label{tab:ct}
\centering
\resizebox{0.5\textwidth}{!}{

\begin{tabular}{lcc}
\toprule
METHOD & PSNR ($\uparrow$) & SSIM ($\uparrow$) \\ 
\\[-1ex] 
\multicolumn{3}{l}{\textbf{Tomographic reconstruction}} \\ \midrule
FBP & 21.29 & .203 \\ 
\midrule \emph{{Operator-agnostic}} \\ 
AR \cite{lunz2018adversarial} & 33.48 & .890 \\ 
Ours & \textbf{{34.14}} & \textbf{{.891}} \\ 
\midrule \emph{{Operator-specific}} \\ 
UAR \cite{mukherjee2021end} & \textbf{{34.76}} & \textbf{{.897}} \\ 
\\[-1ex] 
\multicolumn{3}{l}{\textbf{Compressed sensing (compression rate $=1/16$)}} \\ \midrule
Sparsity (Wavelet) & 26.54 & .666 \\ 
AR \cite{lunz2018adversarial} & 29.71 & .712 \\ 
Ours & \textbf{{38.03}} & \textbf{{.919}} \\ 
\\[-1ex] 
\multicolumn{3}{l}{\textbf{Compressed sensing (compression rate $=1/4$) }} \\ \midrule
Sparsity (Wavelet) & 36.80 & .921 \\ 
AR \cite{lunz2018adversarial} & 37.94 & .920 \\ 
Ours & \textbf{{44.05}} & \textbf{{.973}} \\ 
\bottomrule
\vspace{-30pt}
\end{tabular}
}
\end{wraptable}
Compared to 
the state-of-the-art methods, LPN can 
produce sharp images
with comparable visual
quality, while allowing for the evaluation of the
obtained prior---which is impossible with any of the other methods.

\paragraph{Mayo-CT}

We train LPN on the public Mayo-CT dataset \cite{mccollough2016tu} of Computed Tomography (CT) images, and evaluate it for two inverse tasks: sparse-view CT reconstruction and compressed sensing. 
For sparse-view CT reconstruction, we compare with filtered back-projection (FBP) \cite{willemink2019evolution}, the adversarial regularizer (AR) method of \cite{lunz2018adversarial} with an explicit regularizer, and its improved and subsequent version using unrolling (UAR) \cite{mukherjee2021end}. 
UAR is trained to solve the inverse problem for a specific measurement operator (i.e., task-specific), while both AR and LPN are generic regularizers that are applicable to any measurement model (i.e., task-agnostic). In other words, the comparison with UAR is not completely fair, but we still include it here for a broader comparison.

Following \textcite{lunz2018adversarial}, we simulate CT sinograms using a parallel-beam geometry with 200 angles and 400 detectors, with an undersampling rate of $\frac{200\times 400}{512^2} \approx 30\%$. See \Cref{sec:details-ct} for experimental details.
As visualized in \Cref{fig:ct-tomo}, compared to the baseline FBP, LPN can significantly reduce noise in the reconstruction. Compared to AR, LPN result is slightly sharper, with higher PNSR. The numerical results in \Cref{tab:ct} show that our method significantly improves over the baseline FBP, outperforms the unsupervised counterpart AR, and performs just slightly worse than the supervised approach UAR---\emph{without even having had access to the used forward operator}.
\Cref{fig:ct-cs} and \Cref{tab:ct} show compressed sensing results with compression rates of $\frac{1}{4}$ and $\frac{1}{16}$. LPN significantly outperforms the baseline and AR, demonstrating better generalizability to different forward operators and inverse problems.

\vspace{0pt}
\section{Conclusion}
\vspace{0pt}
The learned proximal networks presented in this paper are guaranteed to parameterize proximal operators. We showed how the
prox-primitive, regularizer function of the resulting proximal (parameterized by an
LPN) can be recovered, allowing explicit characterization of the prior learned
from data. Furthermore, via proximal matching, LPN can approximately learn the correct prox (i.e. that of
the log-prior) of an unknown distribution from only i.i.d.\ samples. When used to solve general inverse problems, LPN achieves state-of-the-art results while providing more interpretability by explicit characterization of the (nonconvex) prior, with convergence guarantees. The ability to not only provide unsupervised models for general inverse problems but, chiefly, to characterize the priors learned from data open exciting new research questions of uncertainty quantification \cite{angelopoulos2022image,teneggi2023trust,sun2021deep}, sampling \cite{Kadkhodaie2021-kh,Kawar2021-eq,chung2022diffusion,Kawar2022-hu,feng2023score}, equivariant learning \cite{chen2023imaging,chen2021equivariant,chen2022robust}, learning without ground-truth \cite{tachella2023sensing,tachella2022unsupervised,gao2023image}, and robustness \cite{jalal2021robust,darestani2021measuring}, all of which constitute matter of ongoing work.

\clearpage

\section*{Acknowledgments}
This research has been supported by NIH Grant P41EB031771, as well as by the Toffler Charitable Trust and by the Distinguished Graduate Student Fellows program of the KAVLI Neuroscience Discovery Institute.

\bibliography{backmatter/references}
\bibliographystyle{iclr2024_conference}

\newpage
\appendix

\section{Related Works}
\label{sec:related-works}
\paragraph{Deep Unrolling}
In addition to Plug-and-Play, deep unrolling is another approach using deep neural networks to replace proximal operators for solving inverse problems. Similar to PnP, the deep unrolling model is parameterized by an unrolled iterative algorithm, with certain (proximal) steps replaced by deep neural nets. In contrast to PnP, the unrolling model is trained in an end-to-end fashion by paired data of ground truth and corresponding measurements from specific forward operators. Truncated deep unrolling methods unfold the algorithm for a fixed number of steps \cite{gregor2010learning,adler2010shrinkage,liu2018alista,aggarwal2018modl,adler2018learned,zhang2020deep,Monga2021-hb,gilton2019neumann,tolooshams2023unrolled,Kobler2017-wr,chen2022learning,Mardani2018-rc, sulam2019multi}, while infinite-step models have been recently developed based on deep equilibrium learning \cite{gilton2021deep,liu2022online,zou2023deep}. In future work, LPN can improve the performance and interpretability of deep unrolling methods in e.g., medical applications \cite{lai2020learned,fang2023deepsti,shenoy2023unrolled} or in cases that demand the analysis of robustness \cite{sulam2020adversarial}. The end-to-end supervision in unrolling can also help increase the performance of LPN-based methods for inverse problems in general.

\paragraph{Explicit Regularizer}
A series of works have been dedicated to designing explicit data-driven
regularizer for inverse problems, such as RED \cite{romano2017little}, AR
\cite{lunz2018adversarial}, ACR \cite{mukherjee2020learned}, UAR
\cite{mukherjee2021end} and others
\cite{li2020nett,kobler2020total,cohen2021has,zou2023deep,Goujon2023-wa}. Our
work contributes a new angle to this field, by learning a proximal operator for
the log-prior and then recovering the regularizer from the learned proximal.

\paragraph{Gradient Denoiser}
Gradient step (GS) denoisers  \cite{cohen2021has,hurault2022gradient,hurault2022proximal} are a cluster of recent approaches that parameterize a denoiser as a gradient descent step using the gradient map of a neural network. Although these works share similarities to our LPN, there are a few key differences.
\begin{enumerate}
    \item Parameterization. In GS denoisers, the denoiser is defined as a gradient descent step: $f = \Id - \nabla g$, where $\Id$ represents the identity operator, and $g$ is a scalar-valued function that is either directly
    parameterized by a neural network \cite{cohen2021has}, or implicitly defined by a network $N:\R^n \rightarrow \R^n$ as $g(\x) = \frac{1}{2} \|\x
    - N(\x)\|_2^2$ \cite{hurault2022gradient,hurault2022proximal}. 
    \citet{cohen2021has} also experiment with a denoiser architecture analogous
    to our LPN architecture, but find its denoising performance to be inferior
    to the GS denoiser (we will discuss this further in the
    final bullet below). 
    In order to have accompanying convergence guarantees when used in PnP
    schemes, these GS parameterizations demand special structures on the learned
    denoiser---in particular, Lipschitz constraints on $\nabla g$---which can be
    challenging to enforce in practice.
    \item Proximal operator guarantee. The GS denoisers in
        \textcite{cohen2021has,hurault2022gradient} are not a priori guaranteed to be
        proximal operators. \textcite{hurault2022proximal} proposed to
        constrain the GS denoiser to be a proximal operator by limiting the Lipschitz
        constant of $\nabla g$, also exploiting the characterization of
        \textcite{gribonval2020characterization}. However, as a result, their
        denoiser necessarily has a bounded Lipschitz constant.
        \rebuttal{Furthermore, in practice, such a constraint is not strictly enforced, but instead realized by adding a regularization term on the spectral norm of the network during training. Such a regularization only penalizes large Lipschitz constants, but does not guarantee that the Lipschitz constant will be lower than the required threshold. Additionally, the regularization is only computed at training data points, thus either not regularizing the network’s behavior globally or resulting in loose upper-bounds for it. In other words, such proximal GS denoiser is only ``encouraged'' to resemble a proximal, but it is not guaranteed. On the other hand, our LPN provides the guarantee that the learned network will always parameterize a proximal operator.}
    \item Training. All GS denoiser methods used the conventional $\ell_2$ loss for
        training. We propose the proximal matching loss and show that it is
        essential for the network to learn the correct proximal operator of the log-prior of
        data distribution.
        Indeed, we attribute the inferior performance of the ICNN-based architecture that
        \textcite{cohen2021has} experiment with, which is analogous to our LPN,
        to the fact that their experiments train this architecture on MMSE-based
        denoising, where ``regression to the mean'' on multimodal and nonlinear
        natural image data hinders performance (see, e.g.,
        \textcite{Delbracio2023-oc} in this connection).
        The key insight that powers our successful application of LPNs in
        experiments is the proximal matching training framework, which allows
        us to make full use of the constrained capacity of the LPN in
        representing highly expressive proximal operators (corresponding to
        (nearly) maximum a-posteriori estimators for data distributions).
\end{enumerate}

\rebuttal{
\paragraph{Comparisons to Diffusion Models} Recently, score-based diffusion
models have proven very efficient for unconditional and conditional image
generation. There are several key differences between our work and diffusion
models. First, conditional diffusion models do not minimize a variational
problem as we do in this paper (as in \Cref{eq:main_inverse_problem}), but
instead provide samples from the posterior distribution. Moreover, the diffusion
models rely on inverting a diffusion process which requires an MMSE denoiser,
and---just as in the case of regular denoisers---they do not approximate any MAP estimate, whereas we are concerned with networks that compute a MAP estimate for a learned prior. In terms of strict advantages, one should again note that our approach solves (provides a MAP estimate) for a denoising problem with a single forward-pass, whereas sampling with diffusion models requires a large sequence of forward passes of a denoising network. 
Lastly, but also importantly, our method provides an exact proximal operator for a learned prior distribution. Diffusion models have no such guarantee: all these results provide samples from an approximate posterior distribution, which relies on the approximation qualities of the MMSE denoiser that do not exist for general cases \cite{chen2023improved}.
}

\paragraph{Proximal Matching Loss and Mode-Seeking Regression Objectives}
In the literature on both deep learning-based denoising and statistical methodology, prior works have explored training schemes that promote learning the mode of a distribution (or, in our denoising setting, the conditional mode/MAP estimate of the prior).
On the methodological side, it is noted that training with respect to a \textit{single} objective function cannot lead to the optimal denoiser being the mode uniformly over sufficiently-expressive classes of denoisers and priors, a concept formalized as \textit{inelicitability of the mode functional}  \cite{Gneiting2011-hf,Heinrich2014-zo}. In contrast, our nonparametric result on the proximal matching loss, \Cref{thm:continuous}, characterizes the minimizer of \textit{a limit of a sequence of losses}. 
This is both outside the framework of the preceding references, and distinct from what occurrs in practice, where we attempt to minimize the proximal matching loss with a sufficiently small parameter $\gamma > 0$. We expect this latter setting to coincide with correct learning of the mode/MAP estimate of the prior in practical settings of interest, when $\gamma$ is much smaller than the `characteristic scale' of the prior. Prior work has also considered modal regression in an abstract statistical learning setting \cite{Feng2020-fw}, where in contrast to our proximal matching-based objective, an approach based on kernel density estimation was advanced.

In the literature on learning deep denoisers,
we note that a previous work \cite{lehtinen2018noise2noisea} used an annealed version of an ``$\ell_0$ loss'' for mode approximation, with motivation similar to that of proximal matching. Their loss takes a different form, $\sum_i(|f(\y) - \x|_i + \epsilon)^\gamma$, where $\epsilon$ is a small constant and $\gamma \in [0, 2]$ is the annealing parameter. Their loss is designed for learning from corrupted targets with random impulse noise, and does not recover the mode of the posterior (as in the case of proximal matching), but rather the zero-crossing of the Hilbert transform of the probability density function.

\section{Additional Theorems}

\subsection{Learning via proximal matching (discrete case)}
\label{sec:thm-discrete}

\begin{theorem}[Learning via Proximal Matching (Discrete Case)]
\label{thm:discrete}
Consider a signal $\x \sim P(\x)$, with $P(\x)$ a discrete distribution,
and a noisy observation $\y = \x + \sigma \beps$,
where $\beps \sim \mathcal{N}(0, \mathbf{I})$ and $\sigma > 0$.
Let $m_\gamma(x) : \R \to \R$ be defined by $m_{\gamma}(x) =
1 - \exp\left(-\frac{x^2}{\gamma^2}\right)$ \footnote{This definition of $m_\gamma$ differs slightly from the one in \Cref{eq:loss}, but they are equivalent in terms of minimization objective as they only differ by a scaling constant.}. %
Consider the optimization problem
\begin{equation*}
f^* = \argmin_{f\ \mathrm{measurable}} \lim_{\gamma \searrow 0} \E_{\x,\y} \left[
m_\gamma \left( \|f(\y) - \x\|_2 \right)  \right].
\end{equation*}
Then, almost surely (i.e., for almost all $\y$),
$f^*(\y) = \argmax_{\cc} P(\x=\cc\mid\y).$
\end{theorem}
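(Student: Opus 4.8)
The plan is to use the discreteness of $P(\x)$ to make the $\gamma \searrow 0$ limit essentially trivial pointwise, in contrast to the continuous case (\Cref{thm:continuous}), where a Laplace-type asymptotic analysis of the Gaussian-smoothed density is needed. First I would condition on $\y$ and rewrite the objective: since $m_\gamma(t) = 1 - \exp(-t^2/\gamma^2)$, for any measurable $f$,
\begin{equation*}
\E_{\x,\y}\bigl[m_\gamma(\|f(\y) - \x\|_2)\bigr] = 1 - \E_{\y}\bigl[g_\gamma(f(\y);\y)\bigr], \qquad g_\gamma(\cc;\y) := \sum_{i} P(\x = \x_i \mid \y)\,e^{-\|\cc - \x_i\|_2^2/\gamma^2},
\end{equation*}
where $\{\x_i\}_{i}$ is the countable support of $P(\x)$, which also supports $P(\,\cdot\mid\y)$ since the Gaussian likelihood is strictly positive everywhere. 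Thus minimizing the loss is equivalent to maximizing $\E_\y[g_\gamma(f(\y);\y)]$.

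The key step is the pointwise limit of $g_\gamma$. For fixed $\cc,\y$, each summand obeys $P(\x_i\mid\y)\,e^{-\|\cc-\x_i\|_2^2/\gamma^2} \le P(\x_i\mid\y)$ with $\sum_i P(\x_i\mid\y) = 1$, so dominated convergence on the counting measure gives $\lim_{\gamma\searrow0} g_\gamma(\cc;\y) = \sum_i P(\x_i\mid\y)\,\mathbf{1}\{\cc = \x_i\} = P(\x = \cc \mid \y)$, with the convention that the right side is $0$ when $\cc$ is not an atom. A second application of dominated convergence, now over $\y$ with the constant dominating function $1$, shows that for every measurable $f$ the limit defining the objective exists and equals
\begin{equation*}
L(f) := \lim_{\gamma\searrow0}\E_{\x,\y}\bigl[m_\gamma(\|f(\y)-\x\|_2)\bigr] = 1 - \E_{\y}\bigl[P(\x = f(\y)\mid\y)\bigr].
\end{equation*}

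With $L$ in this closed form, the minimization decouples across $\y$: $L(f) \ge 1 - \E_\y[\max_\cc P(\x=\cc\mid\y)]$, with equality iff $f(\y)\in\argmax_\cc P(\x=\cc\mid\y)$ for almost every $\y$. Two auxiliary facts finish the argument. (i) The maximum is attained for every $\y$: the nonnegative numbers $P(\x_i\mid\y)$ sum to $1$, so only finitely many exceed $\tfrac{1}{2}\sup_i P(\x_i\mid\y) > 0$, forcing the supremum to be a maximum. (ii) A measurable minimizer exists: by Bayes' rule $P(\x=\x_i\mid\y)$ is proportional to $P(\x_i)\phi_\sigma(\y-\x_i)$ (with $\phi_\sigma$ the density of $\N(0,\sigma^2\mathbf{I})$), normalized by $\sum_j P(\x_j)\phi_\sigma(\y-\x_j)$, which is finite, strictly positive, and continuous in $\y$ (the series converges uniformly by the Weierstrass $M$-test against $\phi_\sigma(0)P(\x_j)$); hence each $\y\mapsto P(\x=\x_i\mid\y)$ is continuous, and selecting the atom of least index attaining the maximum yields a measurable $f^*$ (the set $\{f^*=\x_k\}$ being a countable intersection of measurable sets). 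This shows $f^*$ is a minimizer and that every minimizer coincides almost everywhere with the MAP selection $\argmax_\cc P(\x=\cc\mid\y)$.

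I expect the only real obstacles to be bookkeeping: carefully justifying the two dominated-convergence passes---especially the assertion that $L(f)$ is well-defined for arbitrary measurable $f$, which is what makes the comparison $L(f)\ge L(f^*)$ meaningful---and the measurable-selection step needed to exhibit $f^*$ as a genuine measurable function. The analytically delicate part of the continuous-case proof is simply absent here, since discreteness collapses the $\gamma\searrow0$ limit to a pointwise one.
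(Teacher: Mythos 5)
Your proposal is correct and follows essentially the same route as the paper's proof: in both, the $\gamma \searrow 0$ limit collapses the loss to the expected $0$--$1$ loss, whose conditional expectation given $\y$ equals $1 - P(\x = f(\y)\mid\y)$, and minimizing this decouples across $\y$ into the posterior mode. (The paper passes the limit inside via monotone convergence, you via dominated convergence with the bound $0 \le m_\gamma \le 1$; both are valid since $m_\gamma \nearrow \mathds{1}_{\{\cdot \neq 0\}}$ pointwise.) You actually go further than the paper on one point, supplying a measurable-selection argument showing a minimizer exists, which the paper leaves implicit. However, you omit a step the paper treats as essential: showing that $\argmax_{\cc} P(\x = \cc \mid \y)$ is a.s.\ a \emph{singleton}. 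The theorem's conclusion is the pointwise equality $f^*(\y) = \argmax_{\cc} P(\x=\cc\mid\y)$, not merely set membership, and your final sentence ("every minimizer coincides a.e.\ with the MAP selection") is not well-posed without uniqueness, since on a set where the posterior has ties different minimizers could select different modes. The paper closes this by observing that a tie between atoms $\x_i \neq \x_j$ forces $\y$ onto the hyperplane $\langle \y, \tfrac{\x_i - \x_j}{2}\rangle = \tfrac12\|\x_i\|^2 - \tfrac12\|\x_j\|^2 - \sigma^2\log P(\x_i) + \sigma^2\log P(\x_j)$, and the countable union of these hyperplanes is Lebesgue-null, hence null under the (absolutely continuous) law of $\y$. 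Adding this observation would complete your argument.
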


The proof is deferred to \Cref{sec:proof-discrete}.

\subsection{Convergence of PnP-PGD using LPN}
\label{sec:thm-pnp-pgd-lpn}

\begin{theorem}[Convergence guarantee for running PnP-PGD with LPNs]\label{thm:pnp-pgd-lpn-stationary}
    Consider the sequence of iterates $\x_{k}$, $k \in \{0, 1, \dots\}$,
    defined by \Cref{alg:pgd} run with a linear measurement operator $\A$ and an 
    LPN $f_{\theta}$ with softplus activations, trained with $0 < \alpha < 1$.
    Assume that the step size satisfies $0 < \eta < 1/\| \A^T \A \|$.
    Then, the iterates $\x_k$ converge to a fixed point $\x^*$ of
    \Cref{alg:pgd}: that is, there exists $\x^* \in \R^n$ such that
    $\lim_{k \to \infty} \x_k = \x^*$, and
    \begin{equation}
        f_{\theta}\left(
            \x^* - \eta \nabla h(\x^*)
        \right)
        =
        \x^*.
        \label{eq:opti-theorem-fxp}
    \end{equation}
\end{theorem}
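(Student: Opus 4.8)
The plan is to show that the PnP-PGD recursion in \Cref{alg:pgd}, namely $\x_{k+1} = f_\theta(\x_k - \eta\,\nabla h(\x_k))$ with $h(\x) = \tfrac12\|\y - \A\x\|_2^2$, is exactly the forward--backward (proximal gradient) iteration for the composite objective $\Phi(\x) := h(\x) + \tfrac1\eta R_\theta(\x)$, and then invoke the convergence theory of nonconvex forward--backward splitting under the Kurdyka--{\L}ojasiewicz (KL) property (the same machinery underlying \Cref{thm:pnp-admm-lpn-stationary}). First I would pin down $R_\theta$ explicitly. Write $\bar\psi_\theta := \psi_\theta + \tfrac\alpha2\|\cdot\|_2^2$ for the strongly convex potential used to train the LPN; since softplus is $C^\infty$, $\bar\psi_\theta$ is $\alpha$-strongly convex, finite-valued and $C^2$, so $f_\theta = \nabla\bar\psi_\theta$ is a bijection of $\R^n$ and $\bar\psi_\theta^{**} = \bar\psi_\theta$. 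Combining \Cref{prop:prox_equivalence}, the Fenchel equality $\bar\psi_\theta(\y) = \langle\y, f_\theta(\y)\rangle - \bar\psi_\theta^*(f_\theta(\y))$, and the Gribonval identity $R_\theta(f_\theta(\y)) = \langle\y,f_\theta(\y)\rangle - \tfrac12\|f_\theta(\y)\|_2^2 - \bar\psi_\theta(\y)$ recalled in \Cref{sec:method-prior} gives the closed form $R_\theta(\x) = \bar\psi_\theta^*(\x) - \tfrac12\|\x\|_2^2$, which is real-valued and $C^2$; moreover $\prox_{R_\theta}(\y) = \argmin_\x \bar\psi_\theta^*(\x) - \langle\y,\x\rangle = \nabla\bar\psi_\theta(\y) = f_\theta(\y)$ is single-valued, so the recursion really is the forward--backward step $\x_{k+1} = \prox_{R_\theta}(\x_k - \eta\nabla h(\x_k))$.

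Next I would verify the hypotheses of the forward--backward analysis. The gradient $\nabla h = \A\adj(\A\cdot - \y)$ is $L$-Lipschitz with $L = \|\A\adj\A\|$, and by assumption $0 < \eta < 1/L$. The key consequence of training with $\alpha < 1$ is coercivity of $R_\theta$: a softplus ICNN with nonnegative weights has at most linear growth, so $\bar\psi_\theta(\y) \le \tfrac\alpha2\|\y\|_2^2 + c_1\|\y\|_2 + c_2$, whence $\bar\psi_\theta^*(\x) \ge \tfrac1{2\alpha}\|\x\|_2^2 - c_1'\|\x\|_2 - c_2'$, and therefore $R_\theta(\x) \ge \tfrac{1-\alpha}{2\alpha}\|\x\|_2^2 - c_1'\|\x\|_2 - c_2' \to +\infty$. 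Since $h \ge 0$, $\Phi$ is continuous, bounded below and coercive, so its sublevel sets are compact. The standard forward--backward descent lemma (using $L$-smoothness of $h$, $\eta < 1/L$, and that $\x_{k+1}$ is the \emph{global} minimizer of the forward--backward subproblem, because $f_\theta = \prox_{R_\theta}$) then gives $\Phi(\x_{k+1}) \le \Phi(\x_k) - \tfrac12(\tfrac1\eta - L)\|\x_{k+1}-\x_k\|_2^2$; telescoping shows $\Phi(\x_k)$ converges, $\sum_k \|\x_{k+1}-\x_k\|_2^2 < \infty$ (so $\|\x_{k+1}-\x_k\|_2 \to 0$), and $\{\x_k\}$ remains in a compact sublevel set, hence is bounded. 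Finally, the first-order optimality condition of the subproblem yields $\nabla R_\theta(\x_{k+1}) = \x_k - \x_{k+1} - \eta\nabla h(\x_k)$, so $\|\nabla\Phi(\x_{k+1})\|_2 = \|\nabla h(\x_{k+1}) - \nabla h(\x_k) + \tfrac1\eta(\x_k - \x_{k+1})\|_2 \le (L + \tfrac1\eta)\|\x_{k+1}-\x_k\|_2$.

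It then remains to upgrade subsequential convergence to convergence of the whole sequence via the KL property. Every limit point $\bar\x$ of $(\x_k)$ is a fixed point of the continuous iteration map $T(\x) = f_\theta(\x - \eta\nabla h(\x))$ — hence a critical point of $\Phi$ — because $\|\x_{k+1}-\x_k\|_2 \to 0$. To extract a single limit I would show $\Phi$ satisfies the KL inequality everywhere: softplus is definable in the o-minimal structure $\R_{\exp}$ (since $\exp$ and $\log$ are), so $\psi_\theta$ and $\bar\psi_\theta$, being finite compositions of affine maps and softplus, are definable; the Fenchel conjugate of a definable function is definable (its graph is a linear projection of a definable set); and $h$ is a polynomial; hence $\Phi = h + \tfrac1\eta(\bar\psi_\theta^* - \tfrac12\|\cdot\|_2^2)$ is definable in $\R_{\exp}$, and definable functions satisfy KL. Feeding the sufficient-decrease bound, the relative-error bound, continuity of $\Phi$, boundedness of $(\x_k)$, and the KL property into the abstract convergence theorem for nonconvex descent methods (now applied to forward--backward splitting, as in the ADMM argument of \citet{Themelis2020-jj} used for \Cref{thm:pnp-admm-lpn-stationary}) gives $\x_k \to \x^*$ with $\nabla\Phi(\x^*) = 0$, which by the first-order characterization above is exactly $f_\theta(\x^* - \eta\nabla h(\x^*)) = \x^*$.

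The main obstacle is the KL step — concretely, establishing that the Fenchel conjugate $\bar\psi_\theta^*$, a supremum over all of $\R^n$, inherits definability in $\R_{\exp}$, so that the general KL theory applies; the definability of $\psi_\theta$ itself, the coercivity coming from $\alpha < 1$, and the forward--backward descent and relative-error estimates are comparatively routine. A secondary point requiring care is confirming that $\prox_{R_\theta}$ is single-valued and coincides with $f_\theta$ (so that the recursion genuinely is a forward--backward iteration and the descent lemma applies with a global minimizer), which is precisely what the $\alpha > 0$ part of the hypothesis provides.
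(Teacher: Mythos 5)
Your proposal is correct, and it reaches the same conclusion by the same high-level recipe (cast PnP-PGD as forward--backward splitting for $h + \tfrac{1}{\eta}R_\theta$, establish coercivity and the KL property of the objective, invoke nonconvex descent-method convergence), but several of your key ingredients differ genuinely from the paper's. First, you pin down the regularizer in closed form as $R_\theta = \bar\psi_\theta^* - \tfrac12\|\cdot\|_2^2$ via Fenchel conjugacy, whereas the paper only ever works with the implicit representation $R_\theta(f_\theta(\y)) = \langle \y, f_\theta(\y)\rangle - \tfrac12\|f_\theta(\y)\|_2^2 - \bar\psi_\theta(\y)$ pulled back through $f_\theta^{-1}$ (\Cref{lem:prior-coercive}). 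Second, your coercivity argument — an upper bound $\bar\psi_\theta(\y) \le \tfrac{\alpha}{2}\|\y\|_2^2 + c_1\|\y\|_2 + c_2$ dualizing to a lower bound $\bar\psi_\theta^*(\x) \ge \tfrac{1}{2\alpha}\|\x\|_2^2 - \cdots$, so that $R_\theta \ge \tfrac{1-\alpha}{2\alpha}\|\x\|_2^2 - \cdots$ — is cleaner and more direct than the paper's, which lower-bounds $R_\theta(f_\theta(\y))$ in terms of $\|\y\|_2$ and then transfers this through an inverse-Lipschitz estimate $\|f_\theta^{-1}(\x)\|_2 \gtrsim \|\x\|_2$. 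Third, for definability you invoke that Fenchel conjugation preserves definability (the epigraph of a definable sup is first-order definable, hence definable in the o-minimal structure); the paper instead routes through definability of $f_\theta^{-1}$ using \Cref{prop:o-minimal}, Property 6, in \Cref{lem:phi-KL}. Both routes are sound, and the conjugate-definability fact you flag as the main obstacle is indeed standard. Fourth, you sketch the sufficient-decrease and relative-error estimates yourself and feed them into the abstract KL convergence theorem, whereas the paper applies \Cref{thm:bot-13} of \citet{Bot2016-mw} as a black box after matching the iteration to their Algorithm 1. The trade-off: your argument is more self-contained and makes the role of $\alpha<1$ in coercivity transparent via duality; the paper's is more modular, reusing \Cref{lem:prior-coercive,lem:phi-KL} verbatim in the ADMM proof. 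One small point to make explicit if you write this up: the prox-primitive of $f_\theta$ is only determined up to an additive constant (on the image of $f_\theta$, which here is all of $\R^n$), so your closed form is a valid — and convenient — choice of $R_\theta$, consistent with the one the paper uses.
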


The proof is deferred to \cref{sec:proof-pnp-pgd-lpn-stationary}.

\section{Proofs}
In this section, we include the proofs for the results presented in this paper.

\subsection{Proof of \Cref{prop:lpn}}
\label{sec:proof-lpn}
\begin{proof}
By \textcite[Proposition 1]{Amos2017-ql}, $\psi_\theta$ is convex. Since the activation $g$ is differentiable, $\psi_\theta$ is also differentiable. Hence, $f_\theta = \nabla \psi_\theta$ is the gradient of a convex function. Thus, by \Cref{prop:characterization-continuous-prox}, $f_\theta$ is a proximal operator of a function.
\end{proof}

\subsection{Proof of \Cref{thm:continuous}}
\label{sec:proof-continuous}
\begin{proof}

    First, note by linearity of the expectation that for any measurable $f$,
    one has
    \begin{equation}
        \lim_{\gamma \searrow 0} \E_{\x,\y} \left[ m_\gamma \left( \|f(\y) - \x\|_2
        \right)  \right]
        =
        1 - 
        \lim_{\gamma \searrow 0} \E_{\x,\y} 
        \left[ \varphi_{\gamma^2/2}(f(\y) - \x) \right],
        \label{eq:limit-loss-simplified}
    \end{equation}
    where $\varphi_{\gamma^2/2}$ denotes the density of an isotropic Gaussian
    random variable with mean zero and variance $\gamma^2/2$.
        Because $p(\x)$ is a continuous density with respect to the Lebesgue measure $d \x$, by Gaussian conditioning, we have that the
        conditional distribution of $\x$ given $\y$ admits a density $p_{\x \mid
        \y}$ with respect to $d \x$ as well.
    Taking conditional expectations, we have
    \begin{equation}
        \lim_{\gamma \searrow 0} \E_{\x,\y} 
        \left[ \varphi_{\gamma^2/2}(f(\y) - \x) \right]
        =
        \lim_{\gamma \searrow 0} \E_{\y} \E_{\x \mid \y}
        \left[ \varphi_{\gamma^2/2}(f(\y) - \x) \right].
        \label{eq:cts-case-expectation-factoring}
    \end{equation}
    From here, we can state the intuition for the remaining portion of the
    proof. Intuitively, because the Gaussian density $\varphi_{\sigma^2/2}$
    concentrates more and more at zero as $\gamma \searrow 0$, and meanwhile
    is nevertheless a probability density for every $\gamma > 0$,\footnote{For
        readers familiar with signal processing or Schwartz's theory of
        distributions, this could be alternately stated as ``the small-variance
    limit of the Gaussian density behaves like a Dirac delta distribution''.}
    the inner expectation over $\x \mid \y$ leads to simply replacing the
    integrand with its value at $\x = f(\y)$; the integrand is of course the
    conditional density of $\x$ given $\y$, and from here it is straightforward
    to argue that this leads the optimal $f$ to be (almost surely) the
    conditional maximum a posteriori (MAP) estimate, under our regularity
    assumptions on $p(\x)$. 

    To make this intuitive argument rigorous, we need to translate our
    regularity assumptions on $p(\x)$ into regularity of $p_{\x \mid \y}$, 
    interchange the $\gamma$ limit in \Cref{eq:cts-case-expectation-factoring}
    with the expectation over $\y$, and instantiate a rigorous analogue of the
    heuristic ``concentration'' argument.  
    First, we have by Bayes' rule and Gaussian conditioning
    \begin{equation*}
        p_{\x \mid \y}(\x) = \frac{ \varphi_{\sigma^2}(\y - \x) p(\x) }{
            (\varphi_{\sigma^2} \conv p)(\y),
        }
    \end{equation*}
    where $\conv$ denotes convolution of densities; the denominator is the
    density of $\y$, and it satisfies $\varphi_{\sigma^2} \conv p > 0$ since
    $\varphi_{\sigma^2} > 0$. In particular, this implies that $p_{\x \mid \y}$
    is a continuous function of $(\x, \y)$, because $p(\x)$ is continuous by
    assumption. We can then write, by the definition of convolution,
    \begin{equation*}
        \E_{\x \mid \y}
        \left[ \varphi_{\gamma^2/2}(f(\y) - \x) \right]
        =
        \varphi_{\gamma^2/2} \ast p_{\x \mid \y}(f(\y)),
    \end{equation*}
    so following \Cref{eq:cts-case-expectation-factoring}, we have
    \begin{equation}
        \lim_{\gamma \searrow 0} \E_{\x,\y} 
        \left[ \varphi_{\gamma^2/2}(f(\y) - \x) \right]
        =
        \lim_{\gamma \searrow 0} \E_{\y}\left[
            \varphi_{\gamma^2/2} \ast p_{\x \mid \y}(f(\y))
        \right].
        \label{eq:cts-case-expectation-factoring-1}
    \end{equation}
    We are going to argue that the limit can be moved inside the expectation in
    \Cref{eq:cts-case-expectation-factoring-1} momentarily; for the moment, we
    consider the quantity that results after moving the limit inside the
    expectation.
    To treat this term, we apply a standard approximation to the identity
    argument to evaluate the limit of the preceding expression.
    \cite[Ch.\ 3, Example 3]{Stein2005-io} implies that the densities
    $\varphi_{\gamma^2/2} $ constitute an approximation to the identity as
    $\gamma \to 0$,
    and because $p_{\x \mid \y}$ is continuous, we can then apply \cite[Ch.\ 3,
    Theorem 2.1]{Stein2005-io} to obtain that
    \begin{equation*}
        \lim_{\gamma \searrow 0}
        \varphi_{\gamma^2/2} \ast p_{\x \mid \y}(f(\y))
        =
        p_{\x \mid \y}(f(\y)).
    \end{equation*}
    In particular, after justifying the interchange of limit and expectation in
    \Cref{eq:cts-case-expectation-factoring-1}, we will have shown, by following
    our manipulations from \Cref{eq:limit-loss-simplified}, that
    \begin{equation}
        \lim_{\gamma \searrow 0} \E_{\x,\y} \left[ m_\gamma \left( \|f(\y) - \x\|_2
        \right)  \right]
        =
        1 - 
        \E_{\y} \left[
            p_{\x \mid \y}(f(\y))
        \right].
        \label{eq:cts-case-thing-to-justify}
    \end{equation}
    We will proceed to conclude the proof from this expression, and justify the
    limit-expectation interchange at the end of the proof. The problem at hand
    is equivalent to the problem 
    \begin{equation*}
        \argmax_{f\ \mathrm{measurable}} 
        \E_{\y} \left[
            p_{\x \mid \y}(f(\y))
        \right].
    \end{equation*}
    Writing the expectation as an integral, we have by Bayes' rule as above
    \begin{equation*}
        \E_{\y} \left[
            p_{\x \mid \y}(f(\y))
        \right]
        =
        \int_{\R^n} \varphi_{\sigma^2}(\y - f(\y)) p(f(\y)) d\y.
    \end{equation*}
    Let us define an auxiliary function $g : \R^n \times \R^n \to \R$ by
    $g(\x, \y) = \varphi_{\sigma^2}(\y - \x) p(\x)$. Then 
    \begin{equation*}
        \E_{\y} \left[
            p_{\x \mid \y}(f(\y))
        \right]
        =
        \int_{\R^n} g(f(\y), \y) d\y,
    \end{equation*}
    and moreover,
    for every $\y$, $g(\,\cdot\,,\y)$ is continuous and compactly supported, by
    continuity and boundedness of the Gaussian density and the assumption that
    $p(\x)$ is continuous and the random variable $\x \sim p(\x)$ is bounded.
    We have for any measurable $f$
    \begin{equation}
        g(f(\y), \y) \leq \max_{\x \in \R^n} g(\x, \y).
        \label{eq:cts-case-make-bound-tight}
    \end{equation}
    Our aim is thus to argue that there is a choice of measurable $f$ such that
    the preceding bound can be made tight; this will imply that any measurable
    $f$ maximizing the objective $\E_{\y} [ p_{\x \mid \y}(f(\y)) ]$ satisfies
    $g(f(\y), \y) = \max_{\x \in \R^n} g(\x, \y)$ almost surely, or equivalently
    that $f(\y) \in \argmax_{\x \in \R^n} g(\x, \y)$ almost surely.
    The claim will then follow, because $\argmax_{\x \in \R^n} g(\x, \y) =
    \argmax_{\x \in \R^n} p_{\x \mid \y}(\x)$.

    To this end, define $h(\y) = \max_{\x \in \R^n} g(\x, \y)$. Then by the
    Weierstrass theorem, $h$ is finite-valued,
    and for every $\y$ there exists some $\cc \in \R^n$ such that $h(\y) =
    g(\cc, \y)$. Because $g$ is continuous, it then
    follows from \textcite[Theorem 1.17(c)]{Tyrrell_Rockafellar1998-ol} 
    that $h$ is continuous. Moreover, because $g$ is continuous and for every
    $\y$, $g(\,\cdot\,, \y)$ is compactly supported, $g$ is in particular
    level-bounded in $\x$ locally uniformly in $\y$ in the sense of
    \textcite[Definition 1.16]{Tyrrell_Rockafellar1998-ol}, and it follows that the
    set-valued mapping $\y\mapsto \argmax_{\x} g(\x, \y) : \R^n \rightrightarrows \R^n$ is
    compact-valued, by the Weierstrass theorem, and outer semicontinuous
    relative to $\R^n$, by \textcite[Example 5.22]{Tyrrell_Rockafellar1998-ol}.
    Applying \textcite[Exercise 14.9, Corollary
    14.6]{Tyrrell_Rockafellar1998-ol}, we conclude that the set-valued mapping 
    $\y\mapsto \argmax_{\x} g(\x, \y)$ is measurable, and that in particular there exists
    a measurable function $f^* : \R^n \to \R^n$ such that $f^*(\y) \in \argmax_{\x}
    g(\x, \y)$ for every $\y \in \R^n$. Thus, there is a measurable $f$
    attaining the bound in \Cref{eq:cts-case-make-bound-tight}, and the claim
    follows after we can justify the preceding interchange of limit and
    expectation. 

    To justify the interchange of limit and expectation, we will apply the
    dominated convergence theorem, which requires us to show an integrable (with
    respect to the density of $\y$) upper bound for the function $\y \mapsto \E_{\x \mid
    \y}[ \varphi_{\gamma^2/2}(f(\y) - \x)]$.
    For this, we calculate
    \begin{align*}
        \E_{\x \mid \y}
        \left[ \varphi_{\gamma^2/2}(f(\y) - \x) \right]
        &=
        \frac{1}{(\varphi_{\sigma^2} \ast p)(\y)}
        \int_{\R^n}
        \varphi_{\sigma^2}(\y - \x) p(\x) 
        \varphi_{\gamma^2/2}(f(\y) - \x)
        d \x
        \\
        &\leq
        \frac{1}{(\varphi_{\sigma^2} \ast p)(\y)}
        \left[
            \sup_{\x}\,
            \varphi_{\sigma^2}(\y - \x) p(\x) 
        \right]
        \int_{\R^n}
        \varphi_{\gamma^2/2}(f(\y) - \x)
        d \x
        \\
        &=
        \frac{1}{(\varphi_{\sigma^2} \ast p)(\y)}
        \left[
            \sup_{\x}\,
            \varphi_{\sigma^2}(\y - \x) p(\x) 
        \right],
    \end{align*}
    by H\"{o}lder's inequality and the fact that $\varphi_{\gamma^2/2}$ is a
    probability density.
    Because the random variable $\x \sim p(\x)$ is assumed bounded, the 
    density $p(\x)$ has compact support, and the density $p(\x)$ is assumed
    continuous, so there exists $R > 0$ such that if $\| \x \|_2 > R$ then
    $p(\x) = 0$, and $M > 0$ such that $p(\x) \leq M$. We then have
    \begin{equation*}
        \sup_{\x}\,
        \varphi_{\sigma^2}(\y - \x) p(\x) 
        \leq
        M
        \sup_{\x}\,
        \varphi_{\sigma^2}(\y - \x) \mathds{1}_{\| \x \|_2 \leq R}.
    \end{equation*}
    This means that the supremum can attain a nonzero value only on points
    where $\| \x \|_2 \leq R$.
    On the other hand, for every $\y$ with $\| \y \|_2 \geq 2R$, 
    whenever $\| \x \|_2 \leq R$ the triangle inequality implies $\| \y - \x
    \|_2 \geq \| \y \|_2 - \| \x \|_2 \geq \tfrac{1}{2} \| \y \|_2$. Because
    the Gaussian density $\varphi_{\sigma^2}$ is a radial function, 
    we conclude that if $\| \y \|_2 \geq 2R$, one has
    \begin{equation*}
        \sup_{\x}\,
        \varphi_{\sigma^2}(\y - \x) p(\x) 
        \leq
        M \varphi_{\sigma^2}(\y/2)
        =
        C M \varphi_{4\sigma^2}(\y),
    \end{equation*}
    where $C>0$ depends only on $n$. At the same time, we always have 
    \begin{equation*}
        \sup_{\x}\,
        \varphi_{\sigma^2}(\y - \x) p(\x) 
        \leq
        \frac{M}{(2\pi \sigma^2)^{n/2}}.
    \end{equation*}
    Consequently, we have the composite upper bound
    \begin{equation*}
        \sup_{\x}\,
        \varphi_{\sigma^2}(\y - \x) p(\x) 
        \leq
        \begin{cases}
            \tfrac{M}{(2\pi \sigma^2)^{n/2}} & \| \y \|_2 < 2R \\
            2M \varphi_{4\sigma^2}(\y) & \| \y \|_2 \geq 2R, 
        \end{cases}
    \end{equation*}
    and by our work above
    \begin{align*}
        \E_{\x \mid \y}
        \left[ \varphi_{\gamma^2/2}(f(\y) - \x) \right]
        \leq
        \frac{1}{(\varphi_{\sigma^2} \ast p)(\y)}
        \times
        \begin{cases}
            \tfrac{M}{(2\pi \sigma^2)^{n/2}} & \| \y \|_2 < 2R \\
            2M \varphi_{4\sigma^2}(\y) & \| \y \|_2 \geq 2R.
        \end{cases}
    \end{align*}
    Because $\varphi_{\sigma^2} \conv p$ is the density of $\y$, this upper
    bound is sufficient to apply the dominated convergence theorem to obtain
    \begin{equation*}
        \lim_{\gamma \searrow 0} \E_{\x,\y} 
        \left[ \varphi_{\gamma^2/2}(f(\y) - \x) \right]
        =
        \E_{\y} \lim_{\gamma \searrow 0} 
        \E_{\x \mid \y} \left[ \varphi_{\gamma^2/2}(f(\y) - \x) \right].
    \end{equation*}
    Combining this assertion with the argument surrounding
    \Cref{eq:cts-case-thing-to-justify}, we conclude the proof.

\end{proof}

\begin{remark}[Other loss choices]
\label{remark:other-prox-matching-loss}
\Cref{thm:continuous} also holds for any $m_\gamma$ such that
$m_{\gamma}$ is uniformly (in $\gamma$) bounded above, 
for each $\gamma > 0$ uniquely minimized at $0$, and $\sup_{x \in \R} m_\gamma(x) - m_\gamma(\| \x \|_2)$
is an approximation to the identity as $\gamma \searrow 0$ (see \cite[Ch.\ 3, \S
2]{Stein2005-io}).
\end{remark}

\subsection{Proof of \Cref{thm:discrete}}
\label{sec:proof-discrete}
\begin{proof}
    For brevity, we denote $\argmax_{\cc} P(\x = \cc \mid \y)$ by $\MAP [\x \mid \y]$, i.e., the maximum a posteriori estimate of $\x$ given $\y$.
    
    First, we show that $\MAP [\x \mid \y]$ is unique for almost all $\y$.

    Consider $\y$ such that $\MAP [\x \mid \y]$ is not unique. There exists $i \neq j$, such that 
    \begin{align*}
        &
        P(\x_i \mid \y) = P(\x_j \mid \y) \\
        \iff &
        p(\y \mid \x_i) P(\x_i) = p(\y \mid \x_j) P(\x_j) \\
        \iff &
        -\frac{1}{2}\|\y - \x_i\|^2 + \sigma^2 \log P(\x_i) = -\frac{1}{2}\|\y - \x_j\|^2 + \sigma^2 \log P(\x_j) \\
        \iff &
        \langle \y, \frac{\x_i - \x_j}{2} \rangle = \frac{1}{2}\|\x_i\|^2 - \frac{1}{2} \|\x_j\|^2 - \sigma^2 \log P(\x_i) + \sigma^2 \log P(\x_j).
    \end{align*}
    i.e., $\y$ lies in a hyperplane defined by $\x_i, \x_j$ (note that $\x_i \neq \x_j$).
    Denote the hyperplane by \[\mathcal{H}_{i,j} := \left\{\y \mid \langle \y,
        \frac{\x_i - \x_j}{2} \rangle = \frac{1}{2}\|\x_i\|^2 - \frac{1}{2}
    \|\x_j\|^2 - \sigma^2 \log P(\x_i) + \sigma^2 \log P(\x_j)\right\}. \]

    Consider \[\mathcal{U} := \cup_{i\neq j} \mathcal{H}_{i,j}. \]
    We have that $\forall \y$ with non-unique $\MAP [\x \mid \y]$, 
    \begin{align*}
        &\exists i \neq j, \y \in \mathcal{H}_{i,j} \\
        \iff & 
        \y \in \mathcal{U}.
    \end{align*}
    Note that $\mathcal{U}$ has zero measure as a countable union of
    zero-measure sets, hence the measure of all $\y$
    with non-unique $\MAP [\x \mid \y]$ is zero. Hence, for almost all $\y$,
    $\MAP [\x \mid \y]$ is unique.

    Next, we show that for almost all $\y$, 
    \begin{align*}
        f^*(\y) = \argmin_{\cc} \E_{\x \mid \y} [ \mathds{1}_{\cc \neq \x}].
    \end{align*}
    Note that 
    \begin{align*}
    & 
    \lim_{\gamma \searrow 0} \E_{\x,\y} \left[ m_\gamma \left( \|f(\y) - \x\|_2 \right)  \right] \\
    = &
    \E_{\x,\y} \left[ \lim_{\gamma \searrow 0} 
 m_\gamma \left( \|f(\y) - \x\|_2 \right)  \right] \\
    = &
    \E_{\x,\y} \left[ \mathds{1}_{\|f(\y) - \x\|_2 \neq 0}  \right] \\
    = & 
    \E_{\x,\y} \left[ \mathds{1}_{f(\y) \neq \x}  \right] .
    \end{align*}
    Above, the first equality uses the monotone convergence theorem.
    Use the law of iterated expectations,
    \begin{align*}
    \E_{\x,\y} \left[ \mathds{1}_{f(\y) \neq \x}  \right] =
    \E_{\y} \E_{\x \mid \y} \left[ \mathds{1}_{f(\y) \neq \x}  \right].
    \end{align*}
    We will use this expression to study the global minimizers of the objective. 
    By conditioning,
    \begin{equation*}
        \E_{\x \mid \y} \left[ \mathds{1}_{f(\y) \neq \x}  \right]
        \geq
        \min_{\cc} \E_{\x \mid \y} [ \mathds{1}_{\cc \neq \x}  ],
    \end{equation*}
    and so
    \begin{equation*}
    \E_{\y} \left[
        \E_{\x \mid \y} \left[ \mathds{1}_{f(\y) \neq \x}  \right]
        -
        \min_{\cc} \E_{\x \mid \y} [ \mathds{1}_{\cc \neq \x}  ]
        \right] \geq 0.
    \end{equation*}
    Because $p(\y) > 0$, it follows that every global minimizer of the
    objective $f^*$ satisfies
    \begin{equation*}
        \E_{\x \mid \y} \left[ \mathds{1}_{f^*(\y) \neq \x}  \right]
        =
        \min_{\cc} \E_{\x \mid \y} [ \mathds{1}_{\cc \neq \x}  ]
        \enspace \text{a.s.}
    \end{equation*}
    Hence, for almost all $\y$, 
    \begin{align*}
        f^*(\y) \in \argmin_{\cc} \E_{\x \mid \y} [ \mathds{1}_{\cc \neq \x}].
    \end{align*}
    
    Finally, we show that $\argmin_{\cc} \E_{\x \mid \y} [ \mathds{1}_{\cc \neq
    \x}] = \MAP [\x \mid \y]$. The claim then follows from our preceding work showing that $\MAP[\x \mid \y]$ is almost surely unique. Consider
    \begin{align*}
        \E_{\x \mid \y} [ \mathds{1}_{\cc \neq \x}  ] 
        &= 
        \sum_i P(\x_i \mid \y) \mathds{1}_{\cc \neq  \x_i} \\
        &= 
        \sum_i P(\x_i \mid \y) (1-\mathds{1}_{\cc =  \x_i}) \\
        &=
        \sum_{i} P(\x_i \mid \y) - \sum_{\x_i = \cc} P(\x_i \mid \y) \\
        &=
        1 - P(\x=\cc \mid \y).
    \end{align*}
    Hence,
    \begin{align*}
        \argmin_{\cc} \E_{\x \mid \y} [ \mathds{1}_{\cc \neq \x}] &=
        \argmax_{\cc} P(\x=\cc \mid \y) \\
        & = \MAP [\x \mid \y].
    \end{align*}
    
\end{proof}

\subsection{Proofs of PnP Optimization Results}\label{sec:pnp-proofs}

In this section, we restate and provide proofs of \Cref{thm:pnp-pgd-lpn-stationary} and
\Cref{thm:pnp-admm-lpn-stationary}. We prove \Cref{thm:pnp-pgd-lpn-stationary}
under slightly more
general assumptions, and state the conclusions of both \Cref{thm:pnp-pgd-lpn-stationary,thm:pnp-admm-lpn-stationary} with more precision. The restated results are given below, as \Cref{thm:pnp-pgd-lpn-stationary-appendix} and
\Cref{thm:pnp-admm-lpn-stationary-appendix}.

Before proceeding to proofs, let us briefly describe the common high-level
`recipe' underlying each plug-and-play algorithm's proof. The recipe separates
into two distinct steps:
\begin{enumerate}
    \item \textbf{Leverage general, black-box convergence analyses from the
        optimization literature.} A plug-and-play algorithm is derived from
        a `baseline' optimization algorithm; we therefore appeal to convergence
        analyses from the literature of the relevant baseline algorithm. Because
        the regularization function associated to a LPN is implicitly defined by
        the LPN architecture and need not be convex, it is necessary to appeal
        to general, `black-box' convergence analyses which do not leverage
        special properties of the regularization function.
        We make use of
        convergence results on nonconvex proximal gradient descent of
        \citet{Bot2016-mw},\footnote{The form these results are stated in makes
        them most convenient for purposes of our presentation, although the
        result we need is originally due to \cite{Attouch2013-vc}.} and on nonconvex ADMM of \citet{Themelis2020-jj}. To
        make the presentation self-contained, we reproduce key results from
        these works in context. The principal technical activity is therefore to
        translate the iterate sequence generated by the relevant PnP algorithm
        into a form that allows these convergence analyses to be applied to it.
        Echoes of the same approach appear in prior work on convergent
        plug-and-play, for example work of \citet{hurault2022proximal}.
    \item \textbf{Establish general regularity properties of the regularization
        function associated to LPNs.} To appeal to the aforementioned
        convergence analyses, it is necessary to ascertain a minimum level of
        regularity of the regularization function associated to an LPN, in order
        to establish that it possesses the Kurdyka-\L{}ojasiewicz (KL) property (and, say, coercivity).
        We give a self-contained overview of the KL property and how we
        establish it in \Cref{sec:kl-omin} for clarity of presentation.
        We provide in \Cref{sec:lpn-regularity} technical lemmas that establish that LPNs of the architecture
        specified in \Cref{prop:lpn} satisfy these properties,
        \textit{regardless of the exact values of their parameters}. These results are
        essentially consequences of differentiability and surjectivity of the
        LPN when $0 < \alpha < 1$ is used as the strong convexity weight, and
        they enable us to assert convergence guarantees for LPNs \textit{without
        any extra assumptions about the trained network}.
\end{enumerate}

We anticipate that this recipe will be applicable to virtually any PnP scheme
for which there exists a convergence analysis under the KL property of the
corresponding baseline optimization algorithm. Because our technical work in
\Cref{sec:lpn-regularity} establishes the KL property and coercivity for the regularization
function associated to LPNs with the architecture of \Cref{prop:lpn}, obtaining
a convergence analysis for such a PnP scheme with LPNs of this architecture
only requires the first step of the above recipe. We expect our approach in
\Cref{sec:lpn-regularity} to extend straightforwardly to LPNs with novel
architectures---for instance, different computational graphs or weight-sharing
schemes---as long as the nonlinear activation functions do not grow too rapidly
(see the proofs for more precise statements).

\subsubsection{Proof of \Cref{thm:pnp-pgd-lpn-stationary} (PnP-PGD)}\label{sec:proof-pnp-pgd-lpn-stationary}

\begin{theorem}[Convergence guarantee for running PnP-PGD with LPNs]
    Consider the sequence of iterates $\x_{k}$, $k \in \{0, 1, \dots\}$,
    defined by \Cref{alg:pgd} run with a continuously differentiable measurement operator $A$ and an 
    LPN $f_{\theta}$ with softplus activations, trained with $0 < \alpha < 1$.
    Assume further that the
    data fidelity term $h(\x) = \tfrac{1}{2} \| \y - A(\x)
    \|_2^2$ is definable \rebuttal{in the o-minimal structure of \Cref{prop:o-minimal},
    Property 2}\footnote{This
    mild technical assumption is
    satisfied by an extremely broad array of nonlinear operators $A$: for
    example, any $A$ which is a polynomial in the input $\x$ (in particular,
    linear $A$) is definable, and compositions and inverses of definable
    functions are definable, so that definability of $A$ implies definability
    of $h$. \rebuttal{See an extensive overview of these ideas in
    \Cref{sec:kl-omin}}} and has
    $L$-Lipschitz gradient\footnote{This is a very mild
        assumption. For example, when $A$ is linear, the gradient of the
        data fidelity term $\nabla h$ has a Lipschitz constant no larger
        than $\| A^* A \|$, where $\| \,\cdot\, \|$ denotes the
        operator norm of a linear operator and $A^*$ is the adjoint of
    $A$.}, and that the step size satisfies $0 < \eta < 1/L$.
    Then, the iterates $\x_k$ converge to a fixed point $\x^*$ of
    \Cref{alg:pgd}: that is, there exist $\x^* \in \R^n$ such that
    \begin{equation}
        f_{\theta}\left(
            \x^* - \eta \nabla h(\x^*)
        \right)
        =
        \x^*,
        \label{eq:opti-theorem-fxp-appendix}
    \end{equation}
    and $\lim_{k \to \infty} \x_k = \x^*$.
    Furthermore, $\x^*$ is a critical point\footnote{In this work, the set of
    critical points of a function $f$ is defined by $\crit(f) := \{\x: 0 \in
    \partial f(\x)\}$, where $\partial f$ is the limiting (Mordukhovich)
    Fr\'echet subdifferential mapping of $f$ (see definition in \cite[Section 2]{Bot2016-mw}).} of $h+\tfrac{1}{\eta}R_{\theta}$, where $R_{\theta}$ is the
    regularization function associated to the LPN $f_{\theta}$ (i.e., $f_\theta = \prox_{R_\theta}$).
    \label{thm:pnp-pgd-lpn-stationary-appendix}
\end{theorem}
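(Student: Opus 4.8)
The plan is to recognize the PnP-PGD iteration of \Cref{alg:pgd}, namely $\x_{k+1} = f_\theta(\x_k - \eta \nabla h(\x_k))$, as an honest proximal-gradient scheme, and then invoke a black-box nonconvex convergence theorem. By \Cref{prop:lpn} there is a proper $R_\theta : \R^n \to \R \cup \{+\infty\}$ with $f_\theta = \prox_{R_\theta}$; since $\prox_{R_\theta} = \prox_{\eta \cdot (R_\theta/\eta)}$, the iterates are exactly those of proximal gradient descent with step size $\eta$ applied to $\Phi := h + \tfrac1\eta R_\theta$. I would then conclude with a convergence result for nonconvex PGD under the Kurdyka--\L{}ojasiewicz (KL) property --- in the form of \citet{Bot2016-mw} (going back to \citet{Attouch2013-vc}) --- which yields that the iterate sequence has finite length, hence converges, to a critical point of $\Phi$.

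To run this argument I must verify the hypotheses of that theorem. The cheap ones are assumed: $h$ is $C^1$ with $L$-Lipschitz gradient and $0 < \eta < 1/L$. The real content is establishing that $R_\theta$ is proper, lower semicontinuous, coercive, and that $\Phi$ satisfies the KL property, \emph{uniformly over every admissible setting of the network weights}. Three observations do this. (i) Since $\psi_\theta(\,\cdot\,;\alpha)$ is $\alpha$-strongly convex and $C^2$, $f_\theta = \nabla\psi_\theta$ is a $C^1$ diffeomorphism of $\R^n$; substituting $f_\theta^{-1}$ into the \citet{gribonval2020characterization} identity $R_\theta(f_\theta(\y)) = \langle \y, f_\theta(\y)\rangle - \tfrac12\|f_\theta(\y)\|_2^2 - \psi_\theta(\y)$ exhibits $R_\theta$ as an explicit, real-valued, $C^1$ (hence proper, lsc) function of $\x$. (ii) Coercivity: the ICNN part of $\psi_\theta$ has \emph{bounded} gradient (derivatives of softplus lie in $(0,1)$) and at most linear growth, so the same identity yields a bound $R_\theta(\x) \ge c\|\x\|_2^2 - C$ with $c>0$ --- and it is exactly here that $\alpha<1$ is used. (iii) The KL property: softplus is definable in the o-minimal structure $\R_{\exp}$ (Wilkie), hence so are $f_\theta$, $f_\theta^{-1}$, $\psi_\theta$, and therefore $R_\theta$; together with the assumed definability of $h$, the sum $\Phi$ is definable and thus has the KL property at every point. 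These three facts are precisely what the lemmas in \Cref{sec:lpn-regularity} (with the background of \Cref{sec:kl-omin}) are designed to supply.

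Given the hypotheses, the standard sufficient-decrease inequality for PGD, $\Phi(\x_{k+1}) \le \Phi(\x_k) - c_\eta\|\x_{k+1}-\x_k\|_2^2$ with $c_\eta > 0$ when $\eta < 1/L$, together with coercivity of $\Phi$ (as $h \ge 0$ and $R_\theta$ is coercive), keeps the iterates in a bounded sublevel set; the KL-based analysis then gives $\x_k \to \x^*$ with $0 \in \partial\Phi(\x^*) = \nabla h(\x^*) + \tfrac1\eta\partial R_\theta(\x^*)$, i.e.\ $\x^* \in \crit(h + \tfrac1\eta R_\theta)$. For the fixed-point identity \Cref{eq:opti-theorem-fxp-appendix} I would not pass through the subdifferential but simply use continuity: $\nabla h$ is continuous, so $\x_k - \eta\nabla h(\x_k) \to \x^* - \eta\nabla h(\x^*)$; $f_\theta = \nabla\psi_\theta$ is continuous (indeed $C^1$, since $g$ is $C^2$); hence $\x_{k+1} = f_\theta(\x_k - \eta\nabla h(\x_k)) \to f_\theta(\x^* - \eta\nabla h(\x^*))$, and since also $\x_{k+1} \to \x^*$, uniqueness of limits gives $f_\theta(\x^* - \eta\nabla h(\x^*)) = \x^*$.

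I expect the main obstacle to be steps (ii)--(iii) of the second paragraph: proving coercivity and the KL property of $R_\theta$ with no access to the trained weights. The coercivity estimate must be extracted from the implicit Gribonval formula using only the bounded-gradient and linear-growth properties of the softplus ICNN, and the hypothesis $0 < \alpha < 1$ is essential there. The KL half is conceptually routine once o-minimality is invoked (Wilkie's theorem that $\R_{\exp}$ is o-minimal, plus the fact that proper lsc definable functions satisfy KL), but one has to check carefully that differentiation, inversion, and composition keep every object definable. Everything else --- the reduction to PGD and the passage from critical point to fixed point --- is bookkeeping.
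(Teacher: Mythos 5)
Your proposal is correct and follows essentially the same route as the paper: reduce \Cref{alg:pgd} to nonconvex proximal gradient descent on $h + \tfrac{1}{\eta}R_\theta$ and invoke the KL-based convergence theorem of \citet{Bot2016-mw}, with the real work being the weight-independent coercivity of $R_\theta$ (via the Gribonval identity, Lipschitzness of the softplus ICNN, and $0<\alpha<1$) and the KL property via o-minimal definability, exactly as in \Cref{lem:prior-coercive,lem:phi-KL}; the fixed-point identity via continuity matches \Cref{lem:fixed-point}. The only cosmetic difference is that you obtain continuity of $f_\theta^{-1}$ from the inverse function theorem (strong convexity of the $C^2$ potential) rather than the paper's invariance-of-domain argument, which is an equally valid shortcut.
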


Before proceeding to the proof, we state a few settings and results from \textcite{Bot2016-mw} that are useful for proving \Cref{thm:pnp-pgd-lpn-stationary-appendix}, for better readability.

\begin{problem}[{\cite[Problem 1]{Bot2016-mw}}]\label{prob:bot}
Let $f: \R^m \to (-\infty, +\infty]$ be a proper, lower semicontinuous function which is bounded below and let $h: \R^m \to \R$ be a Fr\'echet differentiable function with Lipschitz continuous gradient, i.e. there exists $L_{\nabla h} \geq 0$ such that $\|\nabla h(\x) - \nabla h(\x')\| \leq L_{\nabla h}\|\x - \x'\|$ for all $\x,\x' \in \R^m$. Consider the optimization problem 
\[(P)\ \ \underset{\x\in \R^m}{\inf} [f(\x) + h(\x)].\]
\end{problem}

\begin{algorithm2}[{\cite[Algorithm 1]{Bot2016-mw}}]\label{alg:bot}
    Choose $\x_0, \x_1 \in \R^m, \ualpha, \oalpha >0, \beta \geq 0$ and the sequences $(\alpha_n)_{n\geq 1}, (\beta_n)_{n\geq 1}$ fulfilling 
    \[0<\ualpha\leq \alpha_n \leq \oalpha \ \ \forall n\geq 1\]
    and
    \[0\leq \beta_n \leq \beta \ \ \forall n\geq 1.\]
    Consider the iterative scheme
    \begin{equation}
    (\forall n\geq 1)\ \x_{n+1} \in \argmin_{\U \in \R^m} \{ D_F(\U,\x_n) + \alpha_n \langle \U, \nabla h(\x_n) \rangle + \beta_n \langle \U, \x_{n-1} - \x_n \rangle + \alpha_n f(\U) \}.\label{eq:alg:bot}
    \end{equation}
    Here, $F:\R^m \to \R$ is $\sigma$-strongly convex, Fr\'echet differentiable and $\nabla F$ is $L_{\nabla F}$-Lipschitz continuous, with $\sigma, L_{\nabla F}>0$; $D_F$ is the Bregman distance to $F$.
\end{algorithm2}

\begin{theorem}[{\cite[Theorem 13]{Bot2016-mw}}]\label{thm:bot-13}
    In the setting of \Cref{prob:bot}, choose $\ualpha$, $\oalpha$, $\beta$ satisfying 
    \begin{equation}
    \sigma > \oalpha L_{\nabla_h} + 2\beta \frac{\oalpha}{\ualpha}.
    \label{eq:bot-eq-11}
    \end{equation}
    Assume that $f+h$ is coercive and that 
    \[H: \R^m \times \R^m \to (-\infty, +\infty], \ H(\x,\x') = (f+h)(\x)+\frac{\beta}{2\ualpha}\|\x-\x'\|^2,\ 
    \forall (\x,\x') \in \R^m \times \R^m\]
    is a KL function\footnote{In this work, a function being KL means it satisfies the Kurdyka-\L{}ojasiewicz property \cite{lojasiewicz1963propriete}, see
    \Cref{sec:kl-omin}, \Cref{def:KL-property}.}. Let $(\x_n)_{n\in \mathbb{N}}$ be a sequence generated by \Cref{alg:bot}. Then the following statements are true:
    \begin{enumerate}
        \item $\sum_{n \in \mathbb{N}} \|\x_{n+1} - \x_n\| < +\infty$
        \item there exists $\x \in \crit (f+h)$ such that $\lim_{n\to +\infty} \x_n = \x$.
    \end{enumerate}
\end{theorem}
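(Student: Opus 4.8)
The statement is \cite[Theorem 13]{Bot2016-mw}, and the natural route is the abstract Kurdyka--\L{}ojasiewicz (KL) convergence framework for descent methods, in the spirit of Attouch--Bolte--Svaiter \cite{Attouch2013-vc}. The merit (Lyapunov) function is exactly the one appearing in the statement, $H(\x,\x') = (f+h)(\x) + \tfrac{\beta}{2\ualpha}\|\x - \x'\|^2$, and the plan is to verify, for the lifted iterate $\z_n := (\x_n, \x_{n-1})$, the three ingredients of the abstract theorem: (i) a sufficient-decrease inequality $H(\z_{n+1}) + a\|\x_{n+1} - \x_n\|^2 \le H(\z_n)$ with $a > 0$; (ii) a relative-error bound $\operatorname{dist}(0, \partial H(\z_{n+1})) \le b\bigl(\|\x_{n+1} - \x_n\| + \|\x_n - \x_{n-1}\|\bigr)$; and (iii) a continuity condition along convergent subsequences. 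Given (i)--(iii) together with the KL hypothesis on $H$, the abstract result delivers both claims.

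For (i), test the minimization in \eqref{eq:alg:bot} with $\U = \x_n$ (so that $D_F(\x_n,\x_n) = 0$), rearrange, and lower-bound the resulting $D_F(\x_{n+1},\x_n)$ using $\sigma$-strong convexity of $F$, i.e.\ $D_F(\x_{n+1},\x_n) \ge \tfrac{\sigma}{2}\|\x_{n+1}-\x_n\|^2$. The inner product $\langle \x_{n+1} - \x_n, \nabla h(\x_n)\rangle$ is handled by the descent lemma for the $L_{\nabla h}$-smooth $h$, and the inertial term $\beta_n\langle \x_{n+1}-\x_n, \x_{n-1}-\x_n\rangle$ is split by Young's inequality into a part charged to step $n$ and a part charged to step $n-1$; this last part is precisely what the memory term $\tfrac{\beta}{2\ualpha}\|\cdot\|^2$ in $H$ absorbs. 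Dividing through by $\alpha_n \in [\ualpha,\oalpha]$ and invoking the parameter condition \eqref{eq:bot-eq-11} yields (i) with $a = \tfrac{1}{2\oalpha}\bigl(\sigma - \oalpha L_{\nabla h} - 2\beta\tfrac{\oalpha}{\ualpha}\bigr) > 0$. Telescoping (i), together with the fact that $f+h$ coercive and lower semicontinuous forces $\inf(f+h) > -\infty$ so that $H$ is bounded below, gives $\sum_n \|\x_{n+1}-\x_n\|^2 < \infty$ (hence $\|\x_{n+1}-\x_n\|\to 0$); and $(f+h)(\x_n) \le H(\z_1)$ for all $n$, combined with coercivity of $f+h$, forces boundedness of $(\x_n)$, so accumulation points exist.

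For (ii), the first-order optimality condition of \eqref{eq:alg:bot} reads $\nabla F(\x_n) - \nabla F(\x_{n+1}) - \alpha_n\nabla h(\x_n) - \beta_n(\x_{n-1}-\x_n) \in \alpha_n\,\partial f(\x_{n+1})$; adding $\alpha_n\nabla h(\x_{n+1})$ and dividing by $\alpha_n$ produces an element of $\partial(f+h)(\x_{n+1})$, and appending the $\x'$-partial $\partial_{\x'}H(\x_{n+1},\x_n) = -\tfrac{\beta}{\ualpha}(\x_{n+1}-\x_n)$ gives an explicit $\omega_{n+1}\in \partial H(\z_{n+1})$. Using that $\nabla F$ is $L_{\nabla F}$-Lipschitz, $\nabla h$ is $L_{\nabla h}$-Lipschitz, and $\alpha_n^{-1},\beta_n$ are bounded, $\|\omega_{n+1}\|$ is bounded by $b\bigl(\|\x_{n+1}-\x_n\| + \|\x_n-\x_{n-1}\|\bigr)$, which is (ii); in particular $\omega_{n+1}\to 0$. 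For (iii), any accumulation point $\tilde\x$ of $(\x_n)$ is the limit of a subsequence $\x_{n_k}\to\tilde\x$, whence $\x_{n_k-1}\to\tilde\x$ as well; testing \eqref{eq:alg:bot} with $\U=\tilde\x$ gives $\limsup_k f(\x_{n_k}) \le f(\tilde\x)$, which with lower semicontinuity of $f$ and continuity of $h$ gives $(f+h)(\x_{n_k})\to(f+h)(\tilde\x)$ and hence $H(\z_{n_k})\to H(\tilde\x,\tilde\x)$; passing to the limit in $\omega_{n_k}\to 0$ and using the closed graph of $\partial H$ shows $(\tilde\x,\tilde\x)\in\crit H$, i.e.\ $\tilde\x\in\crit(f+h)$.

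With (i)--(iii) in hand and $H$ a KL function, the abstract KL convergence theorem \cite{Attouch2013-vc}, applied to the bounded sequence $(\z_n)$, yields that $(\z_n)$ has finite length, $\sum_n \|\z_{n+1}-\z_n\| < \infty$, and converges to a single point $(\x,\x)$ with $\x\in\crit(f+h)$; since $\|\x_{n+1}-\x_n\| \le \|\z_{n+1}-\z_n\|$, this gives $\sum_n\|\x_{n+1}-\x_n\|<\infty$ (claim 1) and $\x_n\to\x\in\crit(f+h)$ (claim 2). The main obstacle is step (i): the bookkeeping for the inertial term, where the cross term must be split so that exactly the right amount lands in the memory term of $H$, with all constants tracked so that condition \eqref{eq:bot-eq-11} is precisely what makes the net per-step decrease $a$ strictly positive --- and so that the \emph{same} $H$ simultaneously supports the subgradient estimate in (ii) and the continuity argument in (iii).
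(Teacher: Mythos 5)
This statement is not proved in the paper at all: it is restated verbatim from \citet{Bot2016-mw} (their Theorem 13) purely so that it can be invoked as a black box in the proof of \cref{thm:pnp-pgd-lpn-stationary-appendix}, so there is no in-paper proof to compare against. Your sketch reconstructs essentially the argument of the original reference---the standard Kurdyka--\L{}ojasiewicz descent framework applied to the lifted sequence $(\x_n,\x_{n-1})$ with the merit function $H$, i.e.\ sufficient decrease via testing \cref{eq:alg:bot} at $\U=\x_n$ together with the descent lemma and Young's inequality (with \cref{eq:bot-eq-11} making the net decrease constant positive), a subgradient bound from the first-order optimality condition, boundedness of iterates from coercivity, and then the abstract KL mechanism to upgrade square-summability to finite length and single-point convergence to a critical point of $f+h$---and it is sound at the level of detail given.
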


Now, we prove \Cref{thm:pnp-pgd-lpn-stationary-appendix}.
\begin{proof}[Proof of \Cref{thm:pnp-pgd-lpn-stationary-appendix}.]
    By \Cref{lem:prior-coercive}, there is a coercive function $R_{\theta} :
    \R^n \to \R \cup \{+\infty \}$ such that $f_{\theta} =
    \prox_{R_{\theta}}$.
    The idea of the proof is to apply \Cref{thm:bot-13} to our
    setting; this requires us to check that \Cref{alg:pgd} maps onto \Cref{alg:bot}, and that our
    (implicitly-defined) objective function and parameter choices satisfy the
    requirements of this theorem.
    To this end, note that the application of $f_{\theta}$ in \Cref{alg:pgd} can
    be written as
    \begin{align*}
        \x_{k+1} 
        &= f_{\theta} \left(\x_k - \eta \nabla h(\x_k) \right) \\
        &=
        \argmin_{\x' \in \R^n}\, 
        \frac{1}{2} \left\| \x' - \left( \x_k - \eta \nabla h(\x_k) \right)
        \right\|_2^2
        + R_{\theta}(\x')
        \\
        &=
        \argmin_{\x' \in \R^n}\, 
        \frac{1}{2} \left\| \x' - \x_k \right\|_2^2 + \langle \x' - \x_k,  \eta \nabla h(\x_k)  \rangle
        + R_{\theta}(\x')
        \\
        &=
        \argmin_{\x' \in \R^n}\, 
        \frac{1}{2} \left\| \x' - \x_k \right\|_2^2 + \eta \langle \x',  \nabla h(\x_k)  \rangle
        + \eta \cdot \frac{1}{\eta} R_{\theta}(\x')
    \end{align*}
    showing that \Cref{alg:pgd} corresponds to \Cref{alg:bot} %
    with the Bregman distance $D_F(\x,\y)=\tfrac{1}{2} \| \x-\y \|_2^2$ (and correspondingly $F(\x) = \tfrac{1}{2} \| \x \|_2^2$, which satisfies $\sigma = L_{\nabla F} = 1$),
    the momentum parameter $\beta = \beta_n = 0$, the step size $\alpha_n
    = \oalpha = \ualpha = \eta$, and $f=\tfrac{1}{\eta}R_\theta$. In the framework of
    \textcite{Bot2016-mw}, \Cref{alg:pgd} minimizes the implicitly-defined
    objective $h + \eta^{-1} R_{\theta}$.
    Moreover, one checks that our choice of
    constant step size $0 < \eta < 1/L$ verifies the necessary condition
    \Cref{eq:bot-eq-11}, and because $h \geq 0$, coercivity of
    $R_{\theta}$ implies that $h + \eta^{-1} R_{\theta}$ is coercive.
    \rebuttal{%
    Using \Cref{lem:phi-KL}, we obtain that $R_{\theta}$ is definable,
    and by assumption, $h$ is also definable, so that by \Cref{prop:o-minimal},
    Properties 2 and 5, it follows that the objective $h + \eta^{-1}
    R_{\theta}$ is definable.}
    Thus $h + \eta^{-1} R_{\theta}$ is definable, continuously differentiable (by
    \Cref{lem:prior-coercive}), and proper (as a sum of real-valued functions,
    again by \Cref{lem:prior-coercive}), and therefore has the KL property, by
    \Cref{prop:o-minimal}, Property 1. We
    can therefore apply \Cref{thm:bot-13} to conclude
    convergence to a critical point of $h + \eta^{-1} R_{\theta}$.
    Finally, by \Cref{lem:fixed-point} and the continuity of $f_\theta$ and $\nabla h$, we conclude convergence to a fixed point, $\x = f_{\theta}(\x -\eta \nabla h(\x))$,  which is identical to \Cref{eq:opti-theorem-fxp-appendix}.
\end{proof}

\begin{lemma}[Convergence Implies Fixed Point Convergence]\label{lem:fixed-point}
Suppose $\cF:\R^n \to \R^n$ is a \emph{continuous} map that defines an iterative process, $\x_{k+1} = \cF(\x_k)$. Assume $\x_k$ converges, i.e., $\exists\ \x^*$ such that $\lim_{k\to \infty} \x_k = \x^*$. Then, $\x^*$ is a fixed point of $\cF$, i.e., $\x^* = \cF(\x^*)$.
\end{lemma}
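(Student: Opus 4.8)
The plan is to simply pass to the limit on both sides of the defining recursion $\x_{k+1} = \cF(\x_k)$ and invoke continuity of $\cF$. Concretely, I would first observe that the shifted sequence $(\x_{k+1})_{k \ge 0}$ is a subsequence (indeed, a tail) of the convergent sequence $(\x_k)_{k \ge 0}$, so it converges to the same limit: $\lim_{k \to \infty} \x_{k+1} = \x^*$.

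Next I would handle the right-hand side. Since $\x_k \to \x^*$ and $\cF$ is continuous on $\R^n$, the sequential characterization of continuity gives $\lim_{k \to \infty} \cF(\x_k) = \cF(\x^*)$. Combining the two limit computations with the identity $\x_{k+1} = \cF(\x_k)$, which holds for every $k$, and using uniqueness of limits in $\R^n$, we conclude $\x^* = \cF(\x^*)$, i.e.\ $\x^*$ is a fixed point of $\cF$.

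There is essentially no obstacle here: the only ingredients are uniqueness of limits in a Hausdorff space and the equivalence of topological and sequential continuity in $\R^n$, both entirely standard. The one small point worth stating carefully in the write-up is why $\x_{k+1} \to \x^*$ (it is the tail of a convergent sequence), after which the argument is a one-line limit passage. This lemma is invoked in the proofs of \Cref{thm:pnp-pgd-lpn-stationary-appendix} and \Cref{thm:pnp-admm-lpn-stationary-appendix} to upgrade iterate convergence to fixed-point convergence once continuity of the relevant PnP update map (a composition of the continuous LPN $f_\theta$ with the continuous gradient step or proximal-subproblem solve) has been established.
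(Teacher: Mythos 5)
Your proposal is correct and is essentially the paper's own proof: the paper also passes to the limit in $\x_{k+1} = \cF(\x_k)$, noting that the shifted sequence has the same limit and that continuity of $\cF$ lets the limit move inside. No differences worth noting.
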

\begin{proof}
    \begin{align*}
        \x^* 
        = \lim_{k \to \infty} \x_k 
        = \lim_{k \to \infty} \x_{k+1} 
        = \lim_{k \to \infty} \cF(\x_{k}) 
        = \cF \left( \lim_{k \to \infty} \x_{k} \right) 
        = \cF( \x^* ).
    \end{align*}
    The fourth equality follows from continuity of $\cF$. 
\end{proof}

\subsubsection{Proof of \Cref{thm:pnp-admm-lpn-stationary} (PnP-ADMM)}\label{sec:proof-pnp-admm-lpn-stationary}

We present in this section a proof of convergence for PnP-ADMM schemes
which incorporate an LPN for the regularizer (\Cref{alg:admm}), following the
recipe we have described in \Cref{sec:pnp-proofs}.
These guarantees are analogous to those we have proved in
\Cref{thm:pnp-pgd-lpn-stationary-appendix} for the PnP-PGD scheme
\Cref{alg:pgd} with LPNs.
For simplicity, we will assume in this section (in contrast to the more general setting of
\Cref{thm:pnp-pgd-lpn-stationary-appendix}, and in agreement with the result
stated in \Cref{thm:pnp-admm-lpn-stationary}) that the measurement operator $A$ in
the underlying inverse problem \Cref{eq:main_inverse_problem} is linear and acts
in the standard basis, and accordingly we identify it with its matrix representation $\A$.
This means (among other things) that the data fidelity term $\x \mapsto
\tfrac{1}{2} \| \y - \A \x \|_2^2$ is convex.

Before proceeding to the proof, we note that \Cref{alg:admm} adopts an update ordering which is nonstandard in the signal processing literature (c.f.\ \cite{venkatakrishnan2013plug,Kamilov2023-uq}) for technical reasons. We employ this update order due to its prevalence in the optimization literature, notably in the analysis of \citet{Themelis2020-jj}, and we emphasize that all of our experiments are done following \Cref{alg:admm}. Although both the typical PnP-ADMM update order and the update order in \Cref{alg:admm} correspond to an ADMM algorithm for the \textit{same objective function}, the analysis of these two iterative optimization procedures is different, and seems to require different technical assumptions (c.f.\ \cite{Themelis2020-jj,Yan2016-fy}). Prior work on convergent PnP seems to have also run into this barrier, suggesting it is not an artifact of our analysis: for example, \citet{hurault2022proximal} study a PnP variant of Douglas-Rachford splitting rather than ADMM, which is roughly analogous to the reversed-order of updates in \Cref{alg:admm} by a reduction of \citet{Themelis2020-jj}, and \citet{Sun2021-ll} prove convergence of a sequence of associated residuals in the standard-order PnP-ADMM rather than of the sequence of iterates itself.

Our proof will be based on the work of \citet{Themelis2020-jj}, which
provides guarantees for ADMM in the nonconvex setting. We restate some of their
results for convenience after stating our convergence result, then proceed to
the proof.

\begin{theorem}[Convergence guarantee for running PnP-ADMM with LPNs]\label{thm:pnp-admm-lpn-stationary-appendix}
    Consider the sequence of iterates $(\x_{k}, \uu_k, \z_k)$, $k \in \{0, 1, \dots\}$,
    defined by \Cref{alg:admm} run with a linear measurement operator $\A$ and a
    LPN $f_{\theta}$ with softplus activations, trained with $0 < \alpha < 1$.
    Assume further that the penalty parameter $\rho$ satisfies 
    $\rho > \| \A\adj \A \|$.
    Then the sequence of iterates $(\x_{k}, \uu_k, \z_k)$ converges to a limit
    point $(\x^*, \uu^*, \z^*)$ which satisfies the KKT conditions (of the augmented problem):
    \begin{equation}\label{eq:pnp-admm-lpn-stationary-guarantee}
        \begin{split}
            \x^* &= \z^*, \\
            \uu^* &= -\frac{1}{\rho} \A\adj(\A \x^* - \y), \\
            \uu^* &= \nabla R_{\theta}(\z^*),
        \end{split}
    \end{equation}
    where $R_{\theta}$ is the regularization function associated to the LPN
    $f_{\theta}$ (i.e., $f_\theta = \prox_{R_\theta}$), which is continuously
    differentiable.
    In particular, the primal limit $\x^*$ is a critical point of the regularized
    reconstruction cost $\x \mapsto \tfrac{1}{2} \| \y - \A \x \|_2^2
    + \rho R_{\theta}(\x)$, and the full limit iterate $(\x^*, \uu^*, \z^*)$ is
    a fixed point of the PnP-ADMM iteration (\Cref{alg:admm}).
\end{theorem}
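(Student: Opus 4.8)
The plan is to follow the two-step recipe laid out in \Cref{sec:pnp-proofs}: first reduce \Cref{alg:admm} to a canonical nonconvex ADMM iteration whose convergence is governed by the analysis of \citet{Themelis2020-jj}, and then verify that the implicitly-defined regularizer $R_{\theta}$ associated to the LPN is regular enough (proper, continuously differentiable, coercive, and KL) for that analysis to apply with no further assumptions on the trained network. Since we have already carried out the analogous argument for PnP-PGD in \Cref{thm:pnp-pgd-lpn-stationary-appendix}, the only genuinely new work is matching the ADMM update order to the reference's setting and reading off the KKT system in the limit.

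First I would set up the correspondence. Because $f_{\theta} = \prox_{R_{\theta}}$, the $\z$-update in \Cref{alg:admm} is $\z_{k+1} = \argmin_{\z}\{\tfrac12\|\z - (\uu_{k+1}+\x_{k+1})\|_2^2 + R_{\theta}(\z)\}$, while the $\x$-update minimizes $\tfrac12\|\y-\A\x\|_2^2 + \tfrac{\rho}{2}\|\z_k-\uu_k-\x\|_2^2$; dividing the $\x$-subproblem by $\rho$ exhibits \Cref{alg:admm} as the scaled ADMM iteration (with penalty $1$, in the reversed update order used by \citet{Themelis2020-jj}) for $\min_{\x} \tfrac{1}{2\rho}\|\y-\A\x\|_2^2 + R_{\theta}(\x)$, equivalently $\min_{\x}\tfrac12\|\y-\A\x\|_2^2 + \rho R_{\theta}(\x)$ after rescaling. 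I would then match this term by term to the hypotheses of the relevant theorem in \citet{Themelis2020-jj}: the smooth convex block is the quadratic data term (convex since $\A$ is linear), the possibly-nonconvex block is $R_{\theta}$, and the condition $\rho > \|\A\adj\A\|$ is exactly their requirement that the penalty dominate the Lipschitz/weak-convexity modulus of the quadratic block, which guarantees sufficient decrease of the augmented Lagrangian along the iterates.

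Second, I would establish the regularity of $R_{\theta}$. The facts needed are: (i) since $\psi_{\theta}(\cdot;\alpha)$ is $\alpha$-strongly convex and $C^2$, its gradient is a homeomorphism, so $R_{\theta} = \psi_{\theta}^*(\cdot;\alpha) - \tfrac12\|\cdot\|_2^2$ is real-valued and continuously differentiable with $\nabla R_{\theta} = (\nabla\psi_{\theta}(\cdot;\alpha))^{-1} - \Id$; (ii) because softplus grows at most linearly and the weights $\W_k,\w$ are nonnegative, $\psi_{\theta}$ grows at most linearly, so $\psi_{\theta}^*(\cdot;\alpha)$ grows like $\tfrac{1}{2\alpha}\|\cdot\|_2^2$ up to linear terms, which makes $R_{\theta}$ coercive \emph{precisely} when $0<\alpha<1$; and (iii) softplus, and hence $\psi_{\theta}$, its conjugate, and $R_{\theta}$, are definable in the o-minimal structure generated by the exponential, so $R_{\theta}$ — and the full augmented objective — has the Kurdyka--\L{}ojasiewicz property. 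These are exactly the statements packaged in \Cref{lem:prior-coercive} and \Cref{lem:phi-KL}, so I would cite them rather than re-derive them. With the correspondence and regularity in hand, the convergence theorem of \citet{Themelis2020-jj} gives a limit point $(\x^*,\uu^*,\z^*)$; passing to the limit in the first-order optimality conditions of the two subproblems — using continuity of $f_{\theta}$ and $\nabla R_{\theta}$ — yields the KKT system \Cref{eq:pnp-admm-lpn-stationary-guarantee}, and eliminating $\uu^*,\z^*$ gives $\A\adj(\A\x^*-\y) + \rho\nabla R_{\theta}(\x^*) = 0$, i.e.\ $\x^* \in \crit(\tfrac12\|\y-\A\cdot\|_2^2 + \rho R_{\theta})$. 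That the limit triple is a fixed point of the PnP-ADMM map then follows from \Cref{lem:fixed-point}, since the ADMM update is a continuous self-map (the $\x$-update is an affine solve and $f_{\theta}$ is continuous).

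The main obstacle is step two: the reason the theorem needs no extra assumptions on the trained LPN is that one must verify coercivity and the KL property of the \emph{implicitly-defined} $R_{\theta}$ uniformly over network parameters, which is where the linear-growth control of the softplus activations and the o-minimal definability argument do the heavy lifting; this is also where the role of $0<\alpha<1$ (as opposed to merely $\alpha>0$) enters. A secondary subtlety is bookkeeping: \Cref{alg:admm} uses the update order of \citet{Themelis2020-jj} rather than the signal-processing-standard order, so one must be careful that the convergence hypotheses — in particular the exact form of the penalty condition $\rho > \|\A\adj\A\|$ — are the ones attached to that specific ordering.
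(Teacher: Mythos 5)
Your proposal is correct and follows essentially the same route as the paper: reduce \Cref{alg:admm} to the scaled-dual nonconvex ADMM of \citet{Themelis2020-jj} (your penalty-$1$ normalization with objective $\tfrac{1}{2\rho}\|\y-\A\x\|_2^2 + R_\theta(\x)$ is equivalent to the paper's choice $h_1 = \tfrac12\|\y-\A\x\|_2^2$, $h_2 = \rho R_\theta$ with penalty $\rho$), verify coercivity and the KL property of $R_\theta$ via \Cref{lem:prior-coercive} and \Cref{lem:phi-KL}, and read off the KKT system. The only genuine (and harmless) deviations are that you derive coercivity through the conjugate identity $R_\theta = \psi_\theta^*(\cdot;\alpha) - \tfrac12\|\cdot\|_2^2$ rather than the paper's explicit Gribonval--Nikolova formula, and you obtain the fixed-point conclusion from continuity of the iteration map via \Cref{lem:fixed-point} rather than the paper's direct algebraic verification from the KKT conditions --- both are valid.
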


We restate convergence results of \citet{Themelis2020-jj} in lesser generality,
given the additional regularity properties present in our setting of interest.

\begin{problem}\label{prob:admm-themelis}
    Let $h_1 : \R^n \to \R$ and $h_2 : \R^n \to \R$ be continuously
    differentiable. We consider the minimization problem
    \begin{equation}\label{eq:admm-themelis-objective}
        \min_{\x \in \R^n}\, h_1(\x) + h_2(\x).
    \end{equation}
\end{problem}

\begin{algorithm2}[{\citet[Eqns.\ (1.2), (ADMM), (1.3)]{Themelis2020-jj}}]\label{alg:themelis-admm}
    Perform variable splitting in \Cref{eq:admm-themelis-objective} to obtain an
    equivalent problem
    \begin{equation}\label{eq:admm-themelis-objective-split}
        \min_{\x \in \R^n, \z \in \R^n}\, h_1(\x) + h_2(\z)
        \quad \mathrm{s.t.}\quad
        \x - \z = \mathbf{0}.
    \end{equation}
    Fix $\rho > 0$. Form the augmented Lagrangian for
    \Cref{eq:admm-themelis-objective-split} at level $\rho$, that is,
    the function
    \begin{equation}\label{eq:admm-augmented-lagrangian}
        \mathcal{L}_{\rho}(\x, \z, \y) =
        h_1(\x) + h_2(\z) + \langle \y, \x - \z \rangle
        + \frac{\rho}{2} \| \x - \z \|_2^2,
    \end{equation}
    and consider the following iteration,\footnote{This corresponds to
    setting the ``relaxation parameter'' $\lambda$ in \citet[Eqn.\ (ADMM)]{Themelis2020-jj} to
    $1$.}:
    \begin{equation}\label{eq:admm-themelis}
        \begin{split}
            \x^+ &\in \argmin\, \mathcal{L}_{\rho}(\,\cdot\,, \z, \y),  \\
            \y^+ &= \y + \rho ( \x^+ - \z ), \\
            \z^+ &\in \argmin\, \mathcal{L}_{\rho}(\x^+, \,\cdot\,,
            \y^{+}).
        \end{split}
    \end{equation}
    This iteration induces a set-valued map
    $\mathcal{T}_{\rho} : \R^n \times \R^n
    \rightrightarrows
    \R^n \times \R^n \times \R^n$.
    Given an initialization $(\y_0, \z_0)$, a sequence of
    ADMM iterates $(\x_k, \y_k, \z_k)$ is defined inductively by $(\x_{k},
    \y_{k}, \z_{k}) \in \mathcal{T}_{\rho}(\y_{k-1}, \z_{k-1})$, for $k \in
    \mathbb{N}$.\footnote{The variable $\y$ defined here follows the notation of
    \citet{Themelis2020-jj}, and in particular should not be confused with the
    measurements in the inverse problems framework. We hope the reader will
    forgive this conflict of notation.}

\end{algorithm2}

\citet{Themelis2020-jj} provide the following convergence guarantee for
\Cref{alg:themelis-admm} relative to the objective
\Cref{eq:admm-themelis-objective}, under weak assumptions on $h_1$ and $h_2$:

\begin{theorem}[{\citet[Theorem 5.6, Theorem 4.1, Theorem
    5.8]{Themelis2020-jj}}\label{thm:themelis-convergence}\footnote{In obtaining the result
    stated here from \citet[Theorem 5.6]{Themelis2020-jj}, we simplify the
    ``image function'' expressions \cite[Definition 5.1]{Themelis2020-jj} using
    the simple constraint structure of the ADMM problem
    \Cref{eq:admm-themelis-objective-split}: in particular, in checking
    \cite[Assumption II]{Themelis2020-jj}, we have that $\varphi_1(\s) = \inf_{\x
    \in \R^n} \{ h_1(\x) \mid \x = \s \} = h_1(\s)$ and similarly
    $\varphi_2(\s) = \inf_{\z \in \R^n} \{ h_2(\z) \mid -\z = \s \} = h_2(-\s)$.
    In particular $\varphi_1 = h_1$ and $\varphi_2 = h_2 \circ -\Id$, so that
    \cite[Assumption II.A4]{Themelis2020-jj} is implied by \cite[Assumption
    II.A1]{Themelis2020-jj}. This also allows us to translate the convergence
    guarantees of \citet[Theorems 5.6, 5.8]{Themelis2020-jj} from applying to
    the sequence of iterate images under the constraint maps to the sequence of
    iterates themselves.}]%
    Suppose the objective $h_1 + h_2$ is coercive;\footnote{This implies that
    the objective function of the equivalent penalized version of
    \Cref{eq:admm-themelis-objective-split} is level bounded and admits
    a solution
    (recall that the latter is a consequence of the Weierstrass theorem, e.g.\ \cite[Proposition
    A.8(2)]{Bertsekas2016-db}).}
    that $h_1$ is continuously differentiable, its gradient $\nabla h_1$ is
    $L$-Lipschitz, and there exists $\sigma \in \R$ such that $h_1
    + \tfrac{\sigma}{2} \| \,\cdot\, \|_2^2$ is convex;\footnote{Such a $\sigma$
    always exists, and satisfies $| \sigma | \leq L$. Roughly speaking, the
    smaller a value of $\sigma$ can be chosen, the better---this is possible
    when $h_1$ is `more convex'.} and $h_2$ is proper and lower semicontinuous.
    Moreover, suppose the penalty parameter $\rho$ is chosen so that %
    $ \rho > \max \{ 2 \max \{ -\sigma, 0 \}, L \}$, and that the augmented
    Lagrangian \Cref{eq:admm-augmented-lagrangian} is a KL function (see
    \Cref{sec:kl-omin}, \Cref{def:KL-property}).\footnote{Although \citet{Themelis2020-jj} state their global
    convergence result, Theorem 5.8, only in the semialgebraic setting,
    inspection of their arguments (notably \cite[p.\ 163 top]{Themelis2020-jj},
    and the connecting discussion in \cite[Theorem 2, Remark 2(ii)]{Li2016-ty},
    together with the equivalence between DRS and ADMM in \cite[Theorem
    5.5]{Themelis2020-jj}) reveals that it is only necessary that the augmented
    Lagrangian $\mathcal{L}_{\rho}$ is a KL function.}
    Then the sequence $(\x_k, \y_k, \z_k)_{k \in \{ 0, 1, \dots \}}$
    converges to a limit point $(\x^*, \y^*, \z^*)$ which satisfies
    the KKT conditions
    \begin{align*}
        -\y^* &= \nabla h_1(\x^*) \\
        \y^* &\in \partial h_2(\z^*) \\
        \x^* - \z^* &= \mathbf{0}.
    \end{align*}
    Here, $\partial h_2$ is the (limiting) subdifferential mapping of
    $h_2$.\footnote{This is the same notion of subdifferential introduced in
    \Cref{sec:proof-pnp-pgd-lpn-stationary} in order to state the results of
    \citet{Bot2016-mw}. We use the fact that the limiting subdifferential
    coincides (up to converting a single-valued set-valued map into a function)
    with the gradient for a $C^1$ function \cite[Theorem 9.18, Corollary
    9.19]{Tyrrell_Rockafellar1998-ol}.}
    More concisely, the limit satisfies 
    $\mathbf{0} \in \nabla h_1(\x^*) + \partial h_2(\x^*)$,
    and in particular $\x^*$ is a critical point for the
    objective \Cref{eq:admm-themelis-objective}.

\end{theorem}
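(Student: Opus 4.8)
The plan is to obtain this statement as a specialization of the nonconvex ADMM analysis of \citet{Themelis2020-jj} to the consensus splitting \Cref{eq:admm-themelis-objective-split}, in which the linear maps appearing in their general template are both $\pm\Id$. First I would set up the correspondence between \Cref{alg:themelis-admm} and the abstract iteration studied by \citet{Themelis2020-jj}: their framework treats problems of the form $\min\{\varphi_1(A_1 \s_1) + \varphi_2(A_2 \s_2) : B_1 \s_1 + B_2 \s_2 = \mathbf{b}\}$ together with a relaxed ADMM iteration, and the restated \Cref{alg:themelis-admm} is precisely the instance with constraint $\x - \z = \mathbf{0}$, relaxation parameter $\lambda = 1$, and $\varphi_1,\varphi_2$ the ``image functions'' of $h_1,h_2$. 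The key simplification, recorded in the footnote accompanying the statement, is that with this constraint structure the image functions collapse: $\varphi_1(\s) = \inf_{\x}\{h_1(\x): \x = \s\} = h_1(\s)$ and $\varphi_2(\s) = \inf_{\z}\{h_2(\z): -\z = \s\} = h_2(-\s)$, so that \citet{Themelis2020-jj}'s standing Assumption~II (phrased on the $\varphi_i$) reduces to direct hypotheses on $h_1$ and $h_2$, with their Assumption~II.A4 automatically implied by II.A1. Since $A_1 = \Id$ and $A_2 = -\Id$ are injective, their convergence statements about the sequences of images $A_i \s_i^k$ are equivalent to statements about the iterates $(\x_k, \z_k)$ themselves.

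Second, I would check that the hypotheses in the statement are exactly the conditions under which \citet{Themelis2020-jj} establish their Theorems~4.1, 5.6 and 5.8. Continuous differentiability of $h_1$ with $L$-Lipschitz gradient and the existence of $\sigma$ with $h_1 + \tfrac{\sigma}{2}\|\cdot\|_2^2$ convex supply their smoothness/weak-convexity requirement on the first block; the bound $\rho > \max\{2\max\{-\sigma,0\}, L\}$ is precisely the penalty lower bound that makes the $\x$-subproblem strongly convex (hence uniquely solvable) and makes the augmented Lagrangian \Cref{eq:admm-augmented-lagrangian} decrease along iterates; properness and lower semicontinuity of $h_2$ make the $\z$-subproblem well posed and furnish the remaining block assumption; coercivity of $h_1 + h_2$ yields level-boundedness of the associated penalized objective and existence of minimizers (their footnoted Weierstrass remark); and the KL property of $\mathcal{L}_\rho$ is assumed directly here (in the body-text specialization it instead follows from definability in an o-minimal structure). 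With these verified, I would invoke \citet[Theorem~4.1]{Themelis2020-jj} for sufficient decrease and boundedness of the iterate sequence along $\mathcal{L}_\rho$, \citet[Theorem~5.6]{Themelis2020-jj} to obtain that cluster points exist and satisfy the KKT system, and \citet[Theorem~5.8]{Themelis2020-jj} to upgrade, under the KL property, to convergence of the entire sequence $(\x_k,\y_k,\z_k)$ to a single limit $(\x^*,\y^*,\z^*)$.

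Finally, I would translate the conclusion back to the stated form: the KKT conditions produced are $-\y^* = \nabla h_1(\x^*)$, $\y^* \in \partial h_2(\z^*)$, and $\x^* - \z^* = \mathbf{0}$, where $\partial$ is the limiting subdifferential; adding the first two relations at $\x^* = \z^*$ gives $\mathbf{0} \in \nabla h_1(\x^*) + \partial h_2(\x^*)$, i.e.\ $\x^*$ is a critical point of $h_1 + h_2$. The main obstacle I anticipate is bookkeeping rather than mathematical depth: one must verify each sub-part of \citet{Themelis2020-jj}'s Assumptions~I and II in the reduced notation, confirm that the update order in \Cref{alg:themelis-admm} ($\x$-update, then dual update, then $\z$-update) coincides with the one for which their descent lemma is proved, and track the exact constant in the penalty threshold through their relaxation-parameter formulation when specializing to $\lambda = 1$. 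Once these identifications are pinned down, the conclusion is immediate from the cited theorems.
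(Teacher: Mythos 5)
Your proposal is correct and takes essentially the same route as the paper: the statement is obtained by specializing \citet[Theorems 4.1, 5.6, 5.8]{Themelis2020-jj} to the consensus constraint $\x - \z = \mathbf{0}$ with relaxation $\lambda = 1$, collapsing the image functions to $\varphi_1 = h_1$ and $\varphi_2 = h_2 \circ -\Id$ so that their Assumption~II reduces to the stated hypotheses on $h_1, h_2$ (with II.A4 implied by II.A1), relaxing semialgebraicity to the KL property of $\mathcal{L}_\rho$, and translating the guarantees back to the iterates themselves before combining the KKT relations at $\x^* = \z^*$ into the criticality statement. This matches the reduction recorded in the paper's footnotes accompanying the theorem.
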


It is standard to argue that \Cref{alg:admm} corresponds to
\Cref{alg:themelis-admm} up to a simple reparameterization (i.e., relabeling of
variables), given that the LPN $f_{\theta}$ in \Cref{alg:admm} is a proximal
operator.

\begin{lemma}\label{lem:admm-scaled-dual-version}
    An ADMM sequence $(\x_k, \y_k, \z_k)$ generated by the update
    \Cref{eq:admm-themelis} is linearly isomorphic to a sequence generated by
    the update rule
    \begin{equation}\label{eq:themelis-scaled}
        \begin{split}
            \x^+ &\in \prox_{\tfrac{1}{\rho} h_1 }\left(
            \z - \uu
            \right) \\
            \uu^+ &= \uu + ( \x^+ - \z ) \\
            \z^+ &\in \prox_{\tfrac{1}{\rho} h_2 } \left(
            \x^+ + \uu^+
            \right),
        \end{split}
    \end{equation}
    in the sense that if $(\x_k', \uu_k', \z_k')$ is the corresponding sequence
    of iterates generated by this update rule with initialization
    $\z_0 = \z_0'$ and $\uu_0 = \tfrac{1}{\rho} \y_0$, then
    we have $\x_k = \x'_k$, $\z_k = \z'_k$, and $\uu_k = \tfrac{1}{\rho} \y_k$ for
    every $k \in \mathbb{N}_0$.
\end{lemma}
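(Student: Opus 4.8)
The plan is to show that, after the linear change of dual variable $\uu = \y/\rho$, each of the three steps of the update \eqref{eq:admm-themelis} is literally the corresponding step of \eqref{eq:themelis-scaled}, and then to deduce the claimed linear isomorphism of trajectories by a routine induction on the iteration count. So the work splits into three ``completing the square'' identities plus an inductive bookkeeping argument.

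For the $\x$-update: fixing $\z$ and $\y$, the $\x$-dependent part of the augmented Lagrangian \eqref{eq:admm-augmented-lagrangian} is $h_1(\x) + \langle \y, \x - \z \rangle + \tfrac{\rho}{2} \| \x - \z \|_2^2$, and completing the square in $\x$ rewrites it as $h_1(\x) + \tfrac{\rho}{2}\| \x - (\z - \y/\rho) \|_2^2$ up to an additive constant independent of $\x$. Since scaling an objective by $\rho > 0$ leaves its minimizer set unchanged, $\argmin_{\x} \mathcal{L}_{\rho}(\,\cdot\,, \z, \y) = \argmin_{\x}\, \tfrac12 \| \x - (\z - \y/\rho) \|_2^2 + \tfrac1\rho h_1(\x) = \prox_{\frac1\rho h_1}(\z - \y/\rho)$, which is exactly $\prox_{\frac1\rho h_1}(\z - \uu)$ once $\uu = \y/\rho$. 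The dual update is immediate: dividing $\y^+ = \y + \rho(\x^+ - \z)$ by $\rho$ gives $\uu^+ = \uu + (\x^+ - \z)$. The $\z$-update is symmetric: the $\z$-dependent part of $\mathcal{L}_{\rho}(\x^+, \,\cdot\,, \y^+)$ is $h_2(\z) - \langle \y^+, \z - \x^+ \rangle + \tfrac{\rho}{2} \| \z - \x^+ \|_2^2$, which completes the square to $h_2(\z) + \tfrac{\rho}{2} \| \z - (\x^+ + \y^+/\rho) \|_2^2$ up to a $\z$-independent constant, whence $\argmin_{\z} \mathcal{L}_{\rho}(\x^+, \,\cdot\,, \y^+) = \prox_{\frac1\rho h_2}(\x^+ + \uu^+)$.

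With these three identities in hand, I would close the argument by induction on $k \geq 0$ with inductive hypothesis $\z_k = \z_k'$ and $\uu_k = \y_k / \rho$; the base case $k = 0$ is precisely the stated initialization $\z_0 = \z_0'$, $\uu_0 = \tfrac1\rho \y_0$. Given the hypothesis at step $k$, the $\x$-step identity shows that the admissible values for $\x_{k+1}$ and $\x_{k+1}'$ form the same set, so choosing the same selection gives $\x_{k+1} = \x_{k+1}'$; the dual-update identity then forces $\uu_{k+1} = \y_{k+1}/\rho$; and the $\z$-step identity shows $\z_{k+1}$ and $\z_{k+1}'$ range over the same set, so the same selection yields $\z_{k+1} = \z_{k+1}'$. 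This re-establishes the hypothesis at step $k+1$ and simultaneously records $\x_k = \x_k'$ for all $k$, which is the assertion.

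I do not anticipate a genuine obstacle; the only point requiring care is that when $h_1$ or $h_2$ is nonconvex the subproblems are set-valued, so the statement must be read as an identification of the two iteration maps under the linear bijection $(\z, \y) \mapsto (\z, \y/\rho)$, matching selection for selection rather than as an equality of single-valued maps. In the use we make of this lemma, $h_1(\x) = \tfrac12\|\y - \A\x\|_2^2$ is convex (so the $\x$-subproblem is single-valued) while $h_2 = \rho R_{\theta}$, so that $\prox_{\frac1\rho h_2} = \prox_{R_\theta} = f_\theta$; comparing \eqref{eq:themelis-scaled} with \Cref{alg:admm} then shows the two iterations coincide verbatim, which is the only role the lemma plays, since it lets us import the convergence guarantee of \Cref{thm:themelis-convergence} for \Cref{alg:themelis-admm} directly to \Cref{alg:admm}.
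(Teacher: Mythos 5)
Your proposal is correct and follows essentially the same route as the paper's proof: completing the square in the augmented Lagrangian to express the $\x$- and $\z$-subproblems as $\prox_{\frac{1}{\rho}h_1}$ and $\prox_{\frac{1}{\rho}h_2}$ evaluated at the scaled-dual arguments, then identifying the two iterations under $\uu = \y/\rho$. The only difference is cosmetic: the paper completes the square once in the full Lagrangian (its Eq.\ for $\mathcal{L}_\rho$ in scaled form) and leaves the induction implicit, whereas you carry it out per-subproblem and spell out the induction explicitly.
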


\begin{proof}
    Notice that we can write in \Cref{eq:admm-augmented-lagrangian}
    by completing the square
    \begin{equation}\label{eq:admm-augmented-lagrangian-scaled}
        \mathcal{L}_{\rho}(\x, \z, \y) =
        h_1(\x) + h_2(\z)
        + \frac{\rho}{2} \left\| \frac{1}{\rho} \y + (\x - \z) \right\|_2^2
        - \frac{\rho}{2} \left\| \frac{1}{\rho} \y \right\|_2^2,
    \end{equation}
    in order to simplify the minimization operations in \Cref{eq:admm-themelis}.
    Indeed, the iteration \Cref{eq:admm-themelis} then becomes equivalent to,
    with \Cref{eq:admm-augmented-lagrangian-scaled}, the iteration
    \begin{equation}
        \begin{split}
            \x^+ &\in \prox_{\tfrac{1}{\rho}h_1}\left(
            \z - \tfrac{1}{\rho} \y
            \right) \\
            \y^+ &= \y + \rho ( \x^+ - \z ) \\
            \z^+ &\in \prox_{\tfrac{1}{\rho} h_2} \left(
            \x^+ + \tfrac{1}{\rho} \y^+
            \right).
        \end{split}
    \end{equation}
    Introducing now the ``scaled dual variable'' $\uu = \tfrac{1}{\rho} \y$,
    we have the equivalent update 
    \begin{equation}\label{eq:admm-scaled-dual-update}
        \begin{split}
            \x^+ &\in \prox_{\tfrac{1}{\rho} h_1 }\left(
            \z - \uu
            \right) \\
            \uu^+ &= \uu + ( \x^+ - \z ) \\
            \z^+ &\in \prox_{\tfrac{1}{\rho} h_2 } \left(
            \x^+ + \uu^+
            \right).
        \end{split}
    \end{equation}
    The equivalence claims in the statement of the lemma follow from this chain
    of reasoning.
\end{proof}

With this preparation completed, we are now ready to give the proof of
\Cref{thm:pnp-admm-lpn-stationary-appendix}.

\begin{proof}[Proof of {\Cref{thm:pnp-admm-lpn-stationary-appendix}}]
    Below, to avoid a notational conflict with the dual variables in
    \Cref{alg:themelis-admm}, we will write $\y_{\mathrm{meas}}$ for the
    measurements that define the data fidelity term in the inverse problem cost.

    We have by assumption and \Cref{prop:lpn} and \Cref{lem:prior-coercive} that there exists
    a coercive $C^1$ function $R_{\theta}: \R^n \to \R$ such that $f_{\theta}
    = \prox_{R_{\theta}}$.
    Given the initialization $\uu_0 = \mathbf{0}$ in \Cref{alg:admm}, applying
    \Cref{lem:admm-scaled-dual-version} implies that \Cref{alg:admm} corresponds
    to a sequence of ADMM iterates $(\x_k, \y_k, \z_k)$ generated via
    \Cref{alg:themelis-admm} with the initialization $\y_0 = \mathbf{0}$, the
    objectives $h_1(\x) = \tfrac{1}{2} \| \y_{\mathrm{meas}} - \A \x \|_2^2$
    and $h_2(\z) = \rho R_{\theta}(\z)$, and the correspondence $\y_k
    = \rho \uu_k$.
    In addition, we observe that $h_1$ is smooth, nonnegative, convex, and
    satisfies $\| \nabla^2 h_1 \| = \| \A\adj \A \|$, where $\| \,\cdot\, \|$
    denotes the operator norm. In turn, we know that $\z \mapsto h_2(\z) + \tfrac{\rho}{2} \| \z
    \|_2^2$ is (strongly) convex: \Cref{lem:lip-nabla-psi} implies that $f_{\theta}$
    is Lipschitz, and \cite[Proposition 2(1)]{gribonval2020characterization} implies that an $L$-Lipschitz proximal operator's associated prox-primitive is $(1-1/L)$-weakly convex, from which it follows that
    the sum
    $\z \mapsto h_2(\z) + \tfrac{\rho}{2} \| \z
    \|_2^2$ is strongly convex.
    As a result of these facts, every minimization operation in
    \Cref{alg:themelis-admm} has a unique minimizer, and we can interchange
    set inclusion operations with equalities when describing the ADMM sequence
    corresponding to \Cref{alg:admm} without any concern in the sequel.

    Now, these instantiations of $h_1$ and $h_2$ verify the elementary
    hypotheses of \Cref{thm:themelis-convergence}:
    \begin{enumerate}
        \item Nonnegativity of $h_1$ and coercivity of $h_2$ imply that $h_1
            + h_2$ is coercive;
        \item $h_1$ is smooth and we can take $L = \| \A\adj \A \|$ and $\sigma = 0$,
            and therefore the hypothesis that $\rho > \| \A\adj \A \|$
            verifies the conditions on the penalty parameter;
        \item By \Cref{lem:prior-coercive}, $h_2$ is real-valued and $C^1$, as
            above.
    \end{enumerate}
    Finally, to check the KL property of the augmented Lagrangian
    $\mathcal{L}_{\rho}$ defined by \Cref{eq:admm-augmented-lagrangian},
    we will follow \Cref{sec:kl-omin} and verify that $\mathcal{L}_{\rho}$ is
    definable in an o-minimal structure, then apply \Cref{prop:o-minimal},
    Property 1, since $\mathcal{L}_{\rho}$ is $C^1$ as a sum of $C^1$ functions
    (both $h_1$ and $h_2$ are so).
    To this end, note by \Cref{corollary:sum-definable} that
    $\mathcal{L}_{\rho}$ is definable in the o-minimal structure asserted by
    Property 2 if both $h_1$ and $h_2$ are definable in that o-minimal
    structure.
    \Cref{prop:o-minimal}, Property 2 implies that $h_1$ is definable, since it
    is a degree two polynomial. Then \Cref{lem:phi-KL} implies that $h_2$ is
    definable in the same o-minimal structure, and it thus follows from the
    preceding reasoning that
    $\mathcal{L}_{\rho}$ has the KL property.

    We can therefore apply \Cref{thm:themelis-convergence} to obtain that the
    sequence of iterates of \Cref{alg:admm} converges, and its limit point
    $(\x^*, \uu^*, \z^*)$ satisfies the KKT conditions \cref{eq:pnp-admm-lpn-stationary-guarantee}
    \begin{align*}
        \uu^* &= -\tfrac{1}{\rho}\A\adj(\A\x^* - \y_{\mathrm{meas}}), \\
        \uu^* &= \nabla R_{\theta}(\z^*), \\
        \x^* - \z^* &= \mathbf{0},
    \end{align*}
    where we have used fact that the
    limiting subdifferential coincides (up to converting a single-valued
    set-valued map into a function) with the gradient for a $C^1$ function
    \cite[Theorem 9.18, Corollary 9.19]{Tyrrell_Rockafellar1998-ol}.
    In particular, simplifying gives
    \begin{equation}
        \A\adj(\A\x^* - \y_{\mathrm{meas}})
        =
        - \rho \nabla R_{\theta}(\x^*),
    \end{equation}
    which is equivalent to the claimed critical point property. To see that this
    is also equivalent to being a fixed point of \Cref{alg:admm},
    it is convenient to use the expression \Cref{eq:admm-scaled-dual-update} for
    the ADMM update expression, which appeared in the proof of
    \Cref{lem:admm-scaled-dual-version}.
    The KKT conditions \cref{eq:pnp-admm-lpn-stationary-guarantee} imply that $\nabla h_1(\x^*) = -\nabla h_2(\x^*)$,
    and since both $\tfrac{1}{\rho} h_1$ and $\tfrac{1}{\rho} h_2$ are differentiable and yield a strongly
    convex function when summed with a quadratic $\tfrac{1}{2} \| \,\cdot\,
    \|_2^2$,
    we can express the action of their proximal operators as
    \begin{equation}
        \prox_{\tfrac{1}{\rho} h_i} = (\Id + \tfrac{1}{\rho}\nabla h_i)^{-1}, \quad i = 1, 2.
    \end{equation}
    From \Cref{eq:admm-scaled-dual-update}, 
    we get that $\x^+ = (\Id + \tfrac{1}{\rho}\nabla h_1)^{-1}(\z^* - \uu^*)$.
    The KKT conditions \cref{eq:pnp-admm-lpn-stationary-guarantee} imply that $\uu^* = -\tfrac{1}{\rho} \nabla h_1(\x^*)$
    and $\z^* = \x^*$, so that $\z^* - \uu^* = (\Id + \tfrac{1}{\rho} \nabla
    h_1)(\x^*)$, and therefore indeed $\x^+ = \x^*$.
    Proceeding, we then have that $\uu^+ = \uu^* - \z^* + \x^*$, which gives
    that $\uu^+ = \uu^*$, since $\x^* = \z^*$.
    Finally, we obtain from the previous two steps and the definition of $h_2$ that
    \begin{equation}
        \z^+ = (\Id + \nabla R_{\theta})^{-1}(\z^* + \uu^*),
    \end{equation}
    and the final KKT condition \cref{eq:pnp-admm-lpn-stationary-guarantee} implies that $\uu^* = \nabla R_{\theta}(\z^*)$.
    Hence
    \begin{equation}
        \z^+ = (\Id + \nabla R_{\theta})^{-1}(\Id + \nabla R_{\theta})(\z^*)
        = \z^*,
    \end{equation}
    and we indeed conclude that $(\x^*, \uu^*, \z^*)$ is a fixed point of
    \Cref{alg:admm}.
\end{proof}

\subsubsection{Regularity of the Regularization Function of LPNs}\label{sec:lpn-regularity}

As discussed in the ``recipe'' of \Cref{sec:pnp-proofs}, in this section we
prove basic regularity properties of the regularization function associated to
any LPN with the architecture of \Cref{prop:lpn}.

\begin{lemma}[Regularity Properties of LPNs]\label{lem:prior-coercive}
    Suppose $f_{\theta}$ is an LPN constructed following the recipe in
    \Cref{prop:lpn}, with softplus activations $\sigma(x) = (1/\beta)\log(1 +
    \exp(\beta x))$, where $\beta>0$ is an arbitrary constant, and with strong
    convexity weight $0 < \alpha < 1$. 
    Let $f_{\theta}(\y) = \nabla \psi_{\theta}(\y) + \alpha \y$ be the
            defining equation of the LPN.
    Then there is a real-valued function
    $R_{\theta} : \R^n \to \R$ such that $f_{\theta} =
    \prox_{R_{\theta}}$.
    Moreover, we have the following regularity properties:
    \begin{enumerate}
        \item $R_{\theta}$ is coercive, i.e., we have $R_{\theta}(\x) \to +\infty$
    as $\| \x \|_2 \to +\infty$.
        \item $f_{\theta} : \R^n \to \R^n$ is surjective and invertible, with an inverse
            mapping $f_{\theta}^{-1} : \R^n \to \R^n$ which is continuous.
        \item $R_{\theta}$ is continuously differentiable. In particular, it holds
            \begin{equation}
                \begin{split}
                    R_{\theta}(\x)
                    &=
                    (1-\alpha)\langle f^{-1}_{\theta}(\x), \nabla
                    \psi_{\theta}(f^{-1}_{\theta}(\x)) \rangle
                    \\
                    &\quad+ \frac{\alpha(1-\alpha)}{2}\| f^{-1}_{\theta}(\x) \|_2^2
                    - \tfrac{1}{2} \| \nabla \psi_{\theta}(f^{-1}_{\theta}(\x))\|_2^2
                    - \psi_{\theta}(f^{-1}_{\theta}(\x)).
                    \label{eq:prior-defining-equation-with-alpha}
                \end{split}
            \end{equation}
    \end{enumerate}
\end{lemma}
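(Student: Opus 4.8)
The plan is to leverage the fact that the LPN with strong-convexity weight $\alpha$ is the gradient of a strongly convex, $C^2$ potential. Write $\Phi := \psi_\theta + \tfrac{\alpha}{2}\|\cdot\|_2^2$, so that by hypothesis $f_\theta = \nabla\Phi$. Since $\psi_\theta$ is convex (\textcite{Amos2017-ql}, as used in \Cref{prop:lpn}) and the softplus activation is $C^\infty$, the potential $\Phi$ is finite-valued, $C^2$, and $\alpha$-strongly convex on $\R^n$. \Cref{prop:characterization-continuous-prox} then already gives a function $R_\theta$ with $f_\theta = \prox_{R_\theta}$; taking the canonical representative (the relation of \textcite{gribonval2020characterization} recalled in \Cref{sec:method-prior}, applied with $\Phi$ as the convex potential), $R_\theta(f_\theta(\y)) = \langle \y, f_\theta(\y)\rangle - \tfrac12\|f_\theta(\y)\|_2^2 - \Phi(\y)$. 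Combining this with the Fenchel identity $\Phi^*(\x) = \langle \x, f_\theta^{-1}(\x)\rangle - \Phi(f_\theta^{-1}(\x))$ (the supremum defining $\Phi^*$ is attained at $\y = f_\theta^{-1}(\x)$), one obtains the compact form $R_\theta = \Phi^* - \tfrac12\|\cdot\|_2^2$ on the range of $f_\theta$; this identity drives the whole proof.

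For Property 2 I would invoke standard convex duality (e.g.\ \textcite{Tyrrell_Rockafellar1998-ol}): $\Phi$ finite and $\alpha$-strongly convex implies $\Phi^*$ is finite-valued on all of $\R^n$, $C^1$, with $(1/\alpha)$-Lipschitz gradient and $\nabla\Phi^* = (\nabla\Phi)^{-1}$ as mutually inverse bijections of $\R^n$; see also \textcite{Huang2021-ds} for this exact construction. Hence $f_\theta = \nabla\Phi$ is a bijection with inverse $f_\theta^{-1} = \nabla\Phi^*$, which is $(1/\alpha)$-Lipschitz, hence continuous; and since $\nabla^2\Phi = \nabla^2\psi_\theta + \alpha I \succeq \alpha I$ is nonsingular everywhere, the inverse function theorem upgrades $f_\theta$ to a global $C^1$ diffeomorphism, so $f_\theta^{-1}$ is in fact $C^1$. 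In particular $f_\theta$ is surjective, so $R_\theta = \Phi^* - \tfrac12\|\cdot\|_2^2$ holds and is real-valued everywhere.

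Property 3 is then essentially bookkeeping: substitute the Fenchel identity into $R_\theta = \Phi^* - \tfrac12\|\cdot\|_2^2$, expand $\Phi = \psi_\theta + \tfrac\alpha2\|\cdot\|_2^2$ together with the defining relation $\x = \nabla\psi_\theta(f_\theta^{-1}(\x)) + \alpha f_\theta^{-1}(\x)$, and collect the four groups of terms in $\langle f_\theta^{-1}(\x), \nabla\psi_\theta(f_\theta^{-1}(\x))\rangle$, $\|f_\theta^{-1}(\x)\|_2^2$, $\|\nabla\psi_\theta(f_\theta^{-1}(\x))\|_2^2$ and $\psi_\theta(f_\theta^{-1}(\x))$; the coefficients come out to $1-\alpha$, $\tfrac{\alpha(1-\alpha)}{2}$, $-\tfrac12$ and $-1$, which is exactly \Cref{eq:prior-defining-equation-with-alpha}. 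Continuous differentiability is then immediate, since $R_\theta$ is built from the $C^1$ maps $f_\theta^{-1}$, $\nabla\psi_\theta$, $\psi_\theta$; equivalently, $R_\theta = \Phi^* - \tfrac12\|\cdot\|_2^2$ with $\Phi^* \in C^1$, and differentiation gives the clean expression $\nabla R_\theta(\x) = f_\theta^{-1}(\x) - \x$ (which also recovers the first-order prox optimality condition).

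The real content, and the step I expect to be the main obstacle, is Property 1 (coercivity), where $\alpha < 1$ is genuinely used. The key auxiliary fact is that $\nabla\psi_\theta$ is \emph{bounded} on $\R^n$ — equivalently, $\psi_\theta$ is globally $L$-Lipschitz for some $L$: the softplus is $1$-Lipschitz, so by the chain rule the Jacobian of each layer map $\z_{k-1}\mapsto\z_k$ is bounded uniformly over $\R^n$ by a constant depending only on the weights $\{\W_k,\HHH_k\}$, and hence so is $\nabla_\y(\w\adj\z_K + b)$. Granting $\|\nabla\psi_\theta\|_2 \le L$, one then has $\Phi(\y) = \psi_\theta(\y) + \tfrac\alpha2\|\y\|_2^2 \le \tfrac\alpha2\|\y\|_2^2 + L\|\y\|_2 + \psi_\theta(0)$, so maximizing $\langle\x,\y\rangle - \Phi(\y)$ over $\y$ along the ray $\R_+\,\x/\|\x\|_2$ gives, for $\|\x\|_2$ large, $\Phi^*(\x) \ge \tfrac{1}{2\alpha}\|\x\|_2^2 - \tfrac{L}{\alpha}\|\x\|_2 - \psi_\theta(0)$; therefore $R_\theta(\x) = \Phi^*(\x) - \tfrac12\|\x\|_2^2 \ge \tfrac{1-\alpha}{2\alpha}\|\x\|_2^2 - \tfrac{L}{\alpha}\|\x\|_2 - \psi_\theta(0) \to +\infty$, using $1-\alpha > 0$. (Equivalently, one can integrate $\nabla R_\theta(\x) = f_\theta^{-1}(\x) - \x$ along $[0,\x]$, using that $\|f_\theta^{-1}(\x) - \x/\alpha\|_2 \le L/\alpha$ follows from the defining relation.) The argument breaks down precisely when $\alpha \ge 1$, which is why the hypothesis $0 < \alpha < 1$ is needed; pinning down this growth from the architecture and locating where $\alpha<1$ enters is the crux, with everything else being convex-analytic routine.
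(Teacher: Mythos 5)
Your proposal is correct, and it reaches all three conclusions by a somewhat different route than the paper. The shared backbone is the Gribonval--Nikolova identity $R_\theta(f_\theta(\y)) = \langle \y, f_\theta(\y)\rangle - \tfrac12\|f_\theta(\y)\|_2^2 - \Phi(\y)$ and the global Lipschitzness of $\psi_\theta$ (which the paper isolates as \Cref{lem:lip-psi}, proved by exactly your softplus/chain-rule induction). Where you diverge: for Property~2 the paper argues invertibility by noting that $\min_\y \Phi(\y) - \langle \x,\y\rangle$ has a unique minimizer for each $\x$, and then gets continuity of $f_\theta^{-1}$ from invariance of domain; you instead invoke the duality facts that $\Phi^*$ is finite, $C^1$, with $(1/\alpha)$-Lipschitz gradient equal to $(\nabla\Phi)^{-1}$, plus the inverse function theorem via $\nabla^2\Phi \succeq \alpha I$ --- this actually yields the stronger conclusion that $f_\theta^{-1}$ is $C^1$ (and quantitatively $(1/\alpha)$-Lipschitz), whereas the paper only claims $C^0$ and gets smoothness of $R_\theta$ from \cite[Corollary 6(b)]{gribonval2020characterization}. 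For Property~1, the paper lower-bounds $R_\theta(f_\theta(\y))$ by $\tfrac{\alpha(1-\alpha)}{8}\|\y\|_2^2$ for large $\|\y\|_2$ and then transfers this to a bound in $\|\x\|_2$ via $\|f_\theta^{-1}(\x)\|_2 \gtrsim \|\x\|_2 / (L_{\nabla\psi_\theta}+\alpha)$, which requires a separate lemma (\Cref{lem:lip-nabla-psi}) establishing Lipschitz continuity of $\nabla\psi_\theta$ (i.e., a bounded Hessian). Your identification $R_\theta = \Phi^* - \tfrac12\|\cdot\|_2^2$ followed by a direct ray-restriction lower bound on $\Phi^*$ sidesteps that second-order lemma entirely, needing only boundedness of $\nabla\psi_\theta$; it is the more economical argument for this lemma in isolation (the paper needs \Cref{lem:lip-nabla-psi} elsewhere anyway, for the weak-convexity step in the ADMM proof). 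Both approaches correctly locate where $0<\alpha<1$ enters: $\alpha>0$ for invertibility, $1-\alpha>0$ for the positive leading coefficient in the coercivity bound.
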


\begin{remark}
    \Cref{lem:prior-coercive} does not, strictly speaking, require the softplus
    activation: the proof shows that any Lipschitz activation function with
    enough differentiability and slow growth at infinity, such as another
    smoothed verison of the ReLU activation, the GeLU, or the Swish activation,
    would also work.  
\end{remark}

\begin{proof}[Proof of \Cref{lem:prior-coercive}.]
    The main technical challenge will be to establish coercivity of $R_{\theta}$, which always exists as necessary, by \Cref{prop:characterization-continuous-prox,prop:lpn}. We will therefore pursue this estimate as the main line of the proof, establishing the remaining assertions in the result statement along the way.
    
    By \Cref{prop:lpn}, there exists $R_{\theta}$ such that $f_{\theta} =
    \prox_{R_{\theta}}$. Now, using \cite[Theorem
    4(a)]{gribonval2020characterization}, %
    for every $\y \in \R^n$,
    \begin{equation*}
        R_{\theta}(f_{\theta}(\y))
        =
        \langle \y, f_{\theta}(\y) \rangle
        - \tfrac{1}{2} \| f_{\theta}(\y) \|_2^2
        - \left(
            \psi_{\theta}(\y) + \tfrac{\alpha}{2} \| \y \|_2^2
        \right).%
    \end{equation*} 
    Using the definition of $f_{\theta}$ and minor algebra, we rewrite this as
    \begin{align*}
        R_{\theta}(f_{\theta}(\y))
        &=
        \langle \y, \nabla \psi_{\theta}(\y) + \alpha \y \rangle
        - \tfrac{1}{2} \| \nabla \psi_{\theta}(\y) + \alpha \y \|_2^2
        - \left(
            \psi_{\theta}(\y) + \tfrac{\alpha}{2} \| \y \|_2^2
        \right)
        \\
        &=
        (1-\alpha)\langle \y, \nabla \psi_{\theta}(\y) \rangle
        + \frac{\alpha(1-\alpha)}{2}\| \y \|_2^2
        - \tfrac{1}{2} \| \nabla \psi_{\theta}(\y)\|_2^2
        - \psi_{\theta}(\y).
        \labelthis \label{eq:phi-coercivity-equality}
    \end{align*}
    At this point, we observe that %
    by \Cref{lem:f-inv-C0}, the map $f_{\theta} : \R^n \to \R^n$ is invertible and surjective, with a continuous inverse mapping.
    This establishes the second assertion that we have claimed. In addition, 
    taking inverses in \Cref{eq:phi-coercivity-equality} implies
    \Cref{eq:prior-defining-equation-with-alpha} and as a consequence the fact
    that $R_{\theta}$ is real-valued, and the fact that it is continuously differentiable on $\R^n$ is then an immediate consequence of \cite[Corollary 6(b)]{gribonval2020characterization}.
    To conclude, it only remains to show that $R_{\theta}$ is coercive, which we will accomplish by lower bounding the RHS of \Cref{eq:phi-coercivity-equality}. By \Cref{lem:lip-psi}, $\psi_\theta$ is $L$-Lipschitz for a constant $L>0$. Thus, we have for every $\y$ (by the triangle
    inequality)
    \begin{equation*}
        | \psi_{\theta}(\y) | \leq L \| \y \|_2 + K
    \end{equation*}
    for a (finite) constant $K\in \R$, depending only on $\theta$.
    Now, the Cauchy-Schwarz inequality implies from the previous
    two statements (and $\|\nabla \psi_{\theta}\|_2 \leq L$ by the Lipschitz property of $\psi_{\theta}$)
    \begin{align*}
        R_{\theta}(f_{\theta}(\y))
        & \geq
        -(1-\alpha)\|\y\|_2 \|\nabla \psi_{\theta}(\y) \|_2
        + \frac{\alpha(1-\alpha)}{2}\| \y \|_2^2
        - \tfrac{1}{2}\| \nabla \psi_{\theta}(\y)\|_2^2
        - L \| \y \|_2
        - K,
        \\
        & \geq
        -L(1-\alpha)\|\y\|_2 %
        + \frac{\alpha(1-\alpha)}{2}\| \y \|_2^2
        - \frac{L^2}{2} %
        - L \| \y \|_2
        - K.
    \end{align*}
    We rewrite this
    estimate with some algebra as
    \begin{equation*}
        R_{\theta}(f_{\theta}(\y))
        \geq
        \| \y \|_2 \left(
            \frac{\alpha(1-\alpha)}{2} \| \y \|_2
            - L(1-\alpha)
            - L
        \right)
        - \frac{L^2}{2}
        - K.
    \end{equation*}
    Next, we notice that when $0 < \alpha < 1$, the coefficient $\alpha(1-\alpha) > 0$; hence there is a constant $M > 0$ depending only on $\alpha$ and $L$ such that for every $\y$ with $\| \y \|_2 \geq M$, one has
    \begin{equation*}
            \frac{\alpha(1-\alpha)}{2} \| \y \|_2
            - L(1-\alpha)
            - L
            \geq \frac{\alpha(1-\alpha)}{4} \| \y \|_2.
    \end{equation*}
    In turn, iterating this exact argument implies that there is another constant $M' > 0$ (depending only on $\alpha$, $L$, and $K$) such that whenever $\| \y \|_2 \geq M'$, one has
    \begin{equation*}
        R_{\theta}(f_{\theta}(\y))
            \geq \frac{\alpha(1-\alpha)}{8} \| \y \|_2^2.
    \end{equation*}
    We can therefore rewrite the previous 
    inequality as
    \begin{equation}\label{eq:phi_coercivity_almost}
        R_{\theta}(\x)
            \geq \frac{\alpha(1-\alpha)}{8} \| f^{-1}_{\theta}(\x) \|_2^2,
    \end{equation}
    for every $\x$ such that $\| f^{-1}(\x) \|_2 \geq M'$. To conclude, we will
    show that whenever $\| \x \|_2 \to +\infty$, we also have
    $\|f_{\theta}^{-1}(\x) \|_2 \to +\infty$, which together with
    \Cref{eq:phi_coercivity_almost} will imply coercivity of $R_{\theta}$.
    To this end, write $\| \,\cdot\, \|_{\mathrm{Lip}}$ for the Lipschitz
    seminorm:
    \begin{equation*}
        \| f \|_{\Lip} = \sup_{\y \neq \y'} \frac{\| f(\y) - f(\y') \|_2}{\| \y -
        \y' \|_2},
    \end{equation*}
    and note that $\| f_{\theta} \|_{\Lip} \leq \| \nabla \psi_{\theta}
    \|_{\Lip} + \alpha$. 
    By \Cref{lem:lip-nabla-psi}, $\nabla \psi_\theta$ is $L_{\nabla \psi_\theta}$-Lipschitz continuous, thus $f_\theta$ is $(L_{\nabla \psi_\theta} + \alpha)$-Lipschitz continuous,
    \begin{equation*}
        \| f_{\theta}(\y) - f_{\theta}(\y') \|_2
        \leq
        (L_{\nabla \psi_\theta} + \alpha) \| \y - \y' \|_2.
    \end{equation*}
    Thus, taking inverses, we have
    \begin{equation*}
        \|f_{\theta}^{-1}(\x) - f_{\theta}^{-1}(\mathbf{0})\|_2
        \geq
        \frac{1}{L_{\nabla \psi_{\theta}} + \alpha} \| \x \|_2,
    \end{equation*}
    and it then follows from the triangle inequality that whenever $\x$ is such that $\| \x \|_2 \geq 2(L_{\nabla \psi_{\theta}} + \alpha) \| f_{\theta}^{-1}(\mathbf{0}) \|_2 $, we have in fact
    \begin{equation*}
        \|f_{\theta}^{-1}(\x)\|_2
        \geq
        \frac{1}{2(L_{\nabla \psi_{\theta}} + \alpha)} \| \x \|_2.
    \end{equation*}
    Combining this estimate with \Cref{eq:phi_coercivity_almost},
    we obtain that for every $\x$ such that $\| \x \|_2 \geq 2(L_{\nabla \psi_{\theta}} + \alpha) \| f_{\theta}^{-1}(\mathbf{0}) \|_2 $
    and $\| \x \|_2 \geq 2M' (L_{\nabla \psi_{\theta}} + \alpha)$, it holds
    \begin{equation*}
        R_{\theta}(\x)
            \geq \frac{\alpha(1-\alpha)}{32(L_{\nabla \psi_{\theta}} + \alpha)^2} \| \x \|_2^2.
    \end{equation*}
    Taking limits in this last bound yields coercivity of $R_{\theta}$, and hence the claim.
\end{proof}

\begin{lemma}[Invertibility of $f_\theta$ and Continuity of $f_\theta^{-1}$]\label{lem:f-inv-C0}
    Suppose $f_{\theta}$ is an LPN constructed following the recipe in
    \Cref{prop:lpn}, with softplus activations $\sigma(x) = (1/\beta)\log(1 +
    \exp(\beta x))$, where $\beta>0$ is an arbitrary constant, and with strong
    convexity weight $0 < \alpha < 1$. 
    Then $f_\theta : \R^n \to \R^n$ is invertible and surjective, 
and $f_\theta^{-1} : \R^n \to \R^n$ is $C^0$.
\end{lemma}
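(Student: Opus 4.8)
The plan is to exploit that $f_\theta$ is the gradient of a strongly convex, continuously differentiable function and then invoke standard convex-analytic facts about such gradient maps. Write $\Psi_\theta(\y) := \psi_\theta(\y) + \tfrac{\alpha}{2}\|\y\|_2^2$ for the strongly convex potential arising from the construction in \Cref{prop:lpn} with strong-convexity weight $\alpha$, so that $f_\theta = \nabla \Psi_\theta$. First I would record two preliminary facts: (i) $\Psi_\theta$ is finite-valued and of class $C^1$ (indeed $C^2$) on all of $\R^n$, because the softplus activation is $C^2$ and the network has finitely many finite weights; and (ii) $\Psi_\theta$ is $\alpha$-strongly convex, since $\psi_\theta$ is convex (by \textcite{Amos2017-ql}, exactly as used in the proof of \Cref{prop:lpn}) and $\tfrac{\alpha}{2}\|\cdot\|_2^2$ is $\alpha$-strongly convex.

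Next I would derive strong monotonicity of $f_\theta$: for a $C^1$, $\alpha$-strongly convex function one has $\langle f_\theta(\y) - f_\theta(\y'), \y - \y' \rangle \ge \alpha \|\y - \y'\|_2^2$ for all $\y, \y'$. Injectivity of $f_\theta$ is then immediate. For surjectivity, I would fix an arbitrary $\x \in \R^n$ and consider the function $\y \mapsto \Psi_\theta(\y) - \langle \x, \y \rangle$; it is $\alpha$-strongly convex, hence coercive, hence (being continuous) attains its infimum at some $\hat\y$ by the Weierstrass theorem, and its first-order optimality condition reads $\nabla \Psi_\theta(\hat\y) = \x$, i.e.\ $f_\theta(\hat\y) = \x$. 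Therefore $f_\theta$ is a bijection of $\R^n$ onto itself and $f_\theta^{-1}$ is well-defined everywhere.

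Finally, continuity of $f_\theta^{-1}$ — in fact $\tfrac{1}{\alpha}$-Lipschitz continuity — follows from strong monotonicity and Cauchy--Schwarz: setting $\y = f_\theta^{-1}(\x)$ and $\y' = f_\theta^{-1}(\x')$, strong monotonicity gives $\alpha \|\y - \y'\|_2^2 \le \langle \x - \x', \y - \y' \rangle \le \|\x - \x'\|_2 \|\y - \y'\|_2$, whence $\|f_\theta^{-1}(\x) - f_\theta^{-1}(\x')\|_2 \le \tfrac{1}{\alpha}\|\x - \x'\|_2$, so $f_\theta^{-1}$ is $C^0$ as claimed.

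None of these steps is deep; the argument is essentially a repackaging of the implications strong convexity $\Rightarrow$ coercivity, first-order optimality, and strong monotonicity of gradients of strongly convex functions. The only points that call for a little care are verifying that $\psi_\theta$ is genuinely finite and differentiable on all of $\R^n$ (so that the potential is a bona fide $C^1$ function on the whole space, rather than merely on an open subset as in \Cref{prop:characterization-continuous-prox}), and being explicit that the mechanism at work is $\alpha > 0$ — rather than the stronger hypothesis $\alpha < 1$ in the statement, which is not needed for this particular lemma — which simultaneously yields injectivity, surjectivity, and the Lipschitz bound on $f_\theta^{-1}$.
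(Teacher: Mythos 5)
Your argument is correct, and the injectivity/surjectivity part coincides with the paper's: both of you realize $f_\theta$ as the gradient of the $\alpha$-strongly convex potential $\psi_\theta + \tfrac{\alpha}{2}\|\cdot\|_2^2$ and obtain, for each $\x$, a unique preimage as the unique minimizer of $\y \mapsto \psi_\theta(\y) + \tfrac{\alpha}{2}\|\y\|_2^2 - \langle \x, \y\rangle$ via coercivity and first-order optimality. Where you genuinely diverge is the continuity of $f_\theta^{-1}$: the paper deduces it from invariance of domain (an injective continuous map on $\R^n$ is open, so the inverse is continuous), a purely topological argument that yields only $C^0$ regularity; you instead use strong monotonicity of $\nabla\Psi_\theta$ together with Cauchy--Schwarz to get the quantitative bound $\|f_\theta^{-1}(\x) - f_\theta^{-1}(\x')\|_2 \le \tfrac{1}{\alpha}\|\x - \x'\|_2$. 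Your route is more elementary (no algebraic topology) and strictly stronger, since it gives Lipschitz continuity of the inverse with an explicit constant; the paper's route has the mild advantage of not needing differentiability or convexity at all beyond injectivity and continuity of $f_\theta$. Your observation that only $\alpha > 0$ is used---the upper bound $\alpha < 1$ being needed elsewhere (for coercivity of $R_\theta$)---is also accurate and matches what the paper's own proof actually relies on.
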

\begin{proof}
    The proof uses the invertibility construction that we describe
    informally in \Cref{sec:method-prior}.
    By construction, we have $f_{\theta} = \nabla \psi_{\theta} + \alpha \Id$,
    where $\Id$ denotes the identity operator on $\R^n$ (i.e., $\Id(\x) = \x$
    for every $\x\in \R^n$).

    For a fixed $\x \in \R^n$, consider the strongly convex minimization problem $\min_{\y} \psi_\theta(\y) + \tfrac{\alpha}{2}\|\y\|_2^2 - \langle \x , \y \rangle$. By first-order optimality condition, the minimizers are exactly $\{\y \mid \nabla \psi_\theta(\y) + \alpha \y = \x\}$. Furthermore, since the problem is strongly convex, it has a unique minimizer for each $\x \in \R^n$ \cite{boyd2004convex}. Therefore, for each $\x \in \R^n$, there exists a unique $\y$ such that $\x = \nabla \psi_\theta(\y) + \alpha \y = f_{\theta}(\y)$.

    The argument above establishes that $f_{\theta} : \R^n \to \R^n$ is injective and surjective; hence there exists an inverse $f_{\theta}^{-1} : \R^n \to \R^n$. 
    To conclude the proof, we will argue that $f_{\theta}^{-1}$ is continuous. 
    To this end, we use the characterization of continuity which states that a function $g : \R^n \to \R^n$ is continuous if and only if for every open set $U \subset \R^n$, we have that $g^{-1}(U)$ is open, where $g^{-1}(U) = \{ \x \in \R^n \mid g(\x) \in U \}$ (e.g., \cite[Theorem 4.8]{Rudin1976-ij}).
    To show that $f_{\theta}^{-1}$ is continuous, it is therefore equivalent to show that for every open set $U \subset \R^n$, one has that $f_{\theta}(U)$ is open.
    But this follows from invariance of domain, a standard result in algebraic topology (e.g., 
    \cite[Proposition 7.4]{Dold2012-nj}), since $f_{\theta}$ is injective and continuous.
    We have thus shown that $f_{\theta}$ is invertible, and that its inverse is continuous, as claimed.
\end{proof}

\begin{lemma}[Lipschitzness of $\psi_\theta$]\label{lem:lip-psi}
    Suppose $f_{\theta}$ is an LPN constructed following the recipe in
    \Cref{prop:lpn}, with softplus activations $\sigma(x) = (1/\beta)\log(1 +
    \exp(\beta x))$, where $\beta>0$ is an arbitrary constant, and let
    $\psi_{\theta}$ denote the convex potential function for the LPN.
    Then $\psi_{\theta}$ is
    $L_{\psi_\theta}$-Lipschitz continuous for a constant $L_{\psi_\theta}>0$, i.e., $|\psi_\theta(\y) - \psi_\theta(\y')| \leq L_{\psi_\theta} \|\y - \y'\|_2$, for all $\y, \y' \in \R^n$.
\end{lemma}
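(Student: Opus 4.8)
The plan is to propagate a Lipschitz bound through the recursive definition of $\psi_\theta$ in \Cref{prop:lpn}, one layer at a time, exploiting that the softplus activation is \emph{globally} Lipschitz. First I would record the elementary fact that the softplus $g(x) = \beta^{-1}\log(1 + e^{\beta x})$ has derivative $g'(x) = (1 + e^{-\beta x})^{-1} \in (0,1)$, so that $g$ is $1$-Lipschitz on $\R$; applied coordinatewise, it is likewise $1$-Lipschitz with respect to the $\ell_2$ norm. Only global Lipschitzness of the activation is used here, which is precisely why a polynomially-growing activation would not suffice, and why the strongly-convexifying term $\tfrac{\alpha}{2}\|\y\|_2^2$ is deliberately \emph{not} included in $\psi_\theta$ in this lemma (it is only locally Lipschitz). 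As the remark following \Cref{lem:prior-coercive} notes, any globally Lipschitz activation would work equally well.

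Next, for each $k \in [1,K]$, let $L_k$ be a Lipschitz constant of the map $\y \mapsto \z_k(\y)$, where $\| \,\cdot\, \|$ below denotes the $\ell_2 \to \ell_2$ operator norm. From $\z_1(\y) = g(\mathbf{H}_1 \y + \bb_1)$ and $1$-Lipschitzness of $g$ one gets $L_1 \le \|\mathbf{H}_1\|$; the bias $\bb_1$ is irrelevant since it cancels in differences. For $k \ge 2$, from $\z_k(\y) = g(\mathbf{W}_k \z_{k-1}(\y) + \mathbf{H}_k \y + \bb_k)$ and the triangle inequality,
\begin{equation*}
\|\z_k(\y) - \z_k(\y')\|_2 \le \|\mathbf{W}_k\|\, L_{k-1}\, \|\y - \y'\|_2 + \|\mathbf{H}_k\|\, \|\y - \y'\|_2,
\end{equation*}
so $L_k \le \|\mathbf{W}_k\| L_{k-1} + \|\mathbf{H}_k\|$. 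Since the weight matrices are fixed and finite after training, unrolling this recursion yields a finite value for $L_K$.

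Finally, from $\psi_\theta(\y) = \mathbf{w}^T \z_K(\y) + b$ and the Cauchy--Schwarz inequality,
\begin{equation*}
|\psi_\theta(\y) - \psi_\theta(\y')| = |\mathbf{w}^T(\z_K(\y) - \z_K(\y'))| \le \|\mathbf{w}\|_2\, L_K\, \|\y - \y'\|_2,
\end{equation*}
so $L_{\psi_\theta} := \|\mathbf{w}\|_2 L_K < \infty$ is the desired Lipschitz constant, completing the proof.

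There is no real obstacle here: the argument is a routine induction over the $K+1$ layers, and the only point requiring care is invoking the global (rather than merely local) Lipschitz property of the softplus. I would also note in passing that the non-negativity constraints on $\mathbf{W}_k$ and $\mathbf{w}$ from \Cref{prop:lpn} play no role in this bound — they are needed only for convexity of $\psi_\theta$, not for its Lipschitzness — so the estimate holds for arbitrary parameter values of the given architecture, which is the form in which it is used in the proof of \Cref{lem:prior-coercive}.
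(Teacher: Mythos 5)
Your proof is correct and follows essentially the same route as the paper's: establish $1$-Lipschitzness of the softplus from its derivative bound, propagate operator-norm bounds through the layer recursion by induction, and finish with Cauchy--Schwarz on the final linear layer. The additional remarks (irrelevance of the non-negativity constraints, exclusion of the quadratic term) are accurate but not needed for the argument.
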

\begin{proof}
    Note that
    the derivative $\sigma'$ of the softplus activation satisfies $\sigma'(x) =
    1 / (1 + \exp(-\beta x))$, which is no larger than $1$, since $\exp(x) > 0$
    for $x \in \R$. 
    If $F$ is a map between Euclidean spaces we will write $DF$ for its
    differential (a map from the domain of $F$ to the space of linear operators
    from the domain of $F$ to the range of $F$). 
    Hence the activation function $g$ in \Cref{prop:lpn} is
    $1$-Lipschitz with respect to the $\ell_2$ norm, since the induced (by
    elementwise application) map $g : \R^n \to \R^n$ defined by $g(\y) =
    [\sigma(x_1), \dots, \sigma(x_n)]^T$ satisfies
    \begin{equation*}
        Dg(\y) =
        \begin{bmatrix}
            \sigma'(x_1) && \\
            &\ddots& \\
            && \sigma'(x_n)
        \end{bmatrix},
    \end{equation*}
    which is bounded in operator norm by $\sup_{x} | \sigma'(x) | \leq 1$.
    First, notice that
    \begin{align*}
        \| \psi_{\theta}(\y) - \psi_{\theta}(\y') \|_2
        &=
        \| \w^T(\z_K(\y) - \z_K(\y') ) \|_2
        \\
        &\leq
        \| \w \|_2
        \| \z_K(\y) - \z_K(\y') \|_2
    \end{align*}
    by Cauchy-Schwarz.
    Meanwhile, we have similarly
    \begin{equation*}
        \| \z_1(\y) - \z_1(\y') \|_2
        \leq
        \| \HHH_1 \|
        \| \y - \y' \|_2,
    \end{equation*}
    where $\| \,\cdot\, \|$ denotes the operator norm of a matrix,
    and for integer $0 < k < K+1$
    \begin{equation*}
        \| \z_k(\y) - \z_k(\y') \|_2
        \leq
        \| \W_{k} \|
        \| \z_{k-1}(\y) - \z_{k-1}(\y') \|_2
        +
        \| \HHH_{k} \|
        \| \y - \y' \|_2.
    \end{equation*}
    By a straightforward induction, it follows that $\psi_{\theta}$ is
    $L$-Lipschitz for a constant $L>0$ (depending only on $\theta$).
\end{proof}

\begin{lemma}[Lipschitzness of $\nabla \psi_\theta$ and LPNs $f_\theta$]\label{lem:lip-nabla-psi}
    Suppose $f_{\theta}$ is an LPN constructed following the recipe in
    \Cref{prop:lpn}, with softplus activations $\sigma(x) = (1/\beta)\log(1 +
    \exp(\beta x))$, where $\beta>0$ is an arbitrary constant, and with strong
    convexity weight $0 < \alpha < 1$. 
    Let $f_{\theta}(\y) = \nabla \psi_{\theta}(\y) + \alpha \y$ be the
    defining equation of the LPN. Then
    $\nabla \psi_\theta$ is $L_{\nabla \psi_\theta}$-Lipschitz continuous, for a constant $L_{\nabla \psi_\theta}>0$.
    In particular, $f_{\theta}$ is $(\alpha + L_{\nabla
    \psi_{\theta}})$-Lipschitz continuous.
\end{lemma}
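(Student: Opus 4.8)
The plan is to show that the Hessian $\nabla^2\psi_\theta$ is bounded in operator norm uniformly over $\R^n$. Since $g$ is $C^2$, $\psi_\theta$ is $C^2$, and a uniform bound on $\|\nabla^2\psi_\theta(\y)\|$ gives that $\nabla\psi_\theta$ is globally Lipschitz with constant $L_{\nabla\psi_\theta} := \sup_{\y\in\R^n}\|\nabla^2\psi_\theta(\y)\|$. The statement for $f_\theta$ is then immediate from $f_\theta = \nabla\psi_\theta + \alpha\Id$ and the triangle inequality: $\|f_\theta(\y) - f_\theta(\y')\|_2 \le \|\nabla\psi_\theta(\y)-\nabla\psi_\theta(\y')\|_2 + \alpha\|\y-\y'\|_2 \le (L_{\nabla\psi_\theta}+\alpha)\|\y-\y'\|_2$.

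To bound the Hessian I would proceed by induction over the layers, tracking for each map $\y\mapsto\z_k(\y)$ a uniform bound $B_k$ on the operator norm of its Jacobian $D\z_k$ (such a bound was already produced in the proof of \Cref{lem:lip-psi}) together with a uniform Lipschitz constant $C_k$ for the Jacobian map $\y\mapsto D\z_k(\y)$. The only property of the activation used is the \emph{global} boundedness of both $g'$ and $g''$: for the softplus, $\sup_x|g'(x)|\le 1$ and $\sup_x|g''(x)| =: L_{g''} = \beta/4 < \infty$ (and, as the remark notes, any $C^2$ activation with bounded first and second derivatives works identically). For the base case $\z_1(\y)=g(\HHH_1\y+\bb_1)$, the elementwise map $g$ has Jacobian $\diag(g'(\cdot))$ of operator norm at most $1$, and this Jacobian is $L_{g''}$-Lipschitz; precomposing with the affine map $\y\mapsto\HHH_1\y+\bb_1$ (constant Jacobian $\HHH_1$) yields $B_1=\|\HHH_1\|$ and $C_1 = L_{g''}\|\HHH_1\|^2$. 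For the inductive step, set $a_k(\y) = \W_k\z_{k-1}(\y)+\HHH_k\y+\bb_k$, whose Jacobian $\W_kD\z_{k-1}(\y)+\HHH_k$ has operator norm at most $\|\W_k\|B_{k-1}+\|\HHH_k\|$ and is $\|\W_k\|C_{k-1}$-Lipschitz; composing with elementwise $g$ gives $\z_k = g\circ a_k$ with finite $B_k, C_k$ via the composition estimate below. Finally $\psi_\theta(\y) = \w^T\z_K(\y)+b$ gives $\nabla\psi_\theta(\y) = D\z_K(\y)^T\w$, so $\nabla\psi_\theta$ is $C_K\|\w\|_2$-Lipschitz, and we set $L_{\nabla\psi_\theta} := C_K\|\w\|_2$.

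The only auxiliary fact needed is an elementary composition estimate for maps with bounded, Lipschitz Jacobians: if $u$ satisfies $\sup\|Du\|\le L_u$ with $Du$ being $M_u$-Lipschitz, and likewise $v$ with constants $L_v, M_v$, then $v\circ u$ satisfies $\sup\|D(v\circ u)\| \le L_vL_u$ and $D(v\circ u)$ is $(M_vL_u^2+L_vM_u)$-Lipschitz. This follows by writing, for any $\x,\x'$,
\[
D(v\circ u)(\x) - D(v\circ u)(\x') = \bigl(Dv(u(\x))-Dv(u(\x'))\bigr)Du(\x) + Dv(u(\x'))\bigl(Du(\x)-Du(\x')\bigr),
\]
bounding the first summand by $M_v L_u^2 \|\x-\x'\|_2$ (using $M_v$-Lipschitzness of $Dv$, $L_u$-Lipschitzness of $u$, and submultiplicativity of the operator norm) and the second by $L_v M_u \|\x-\x'\|_2$; for affine $u$ one has $M_u = 0$, and for elementwise $g$ one uses $L_v\le 1$, $M_v\le L_{g''}$.

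There is no genuine obstacle here: the argument is routine bookkeeping over the network's computational graph. The one point requiring care is that the bounds on $g'$ and $g''$ must be \emph{global}, so that $B_k$ and $C_k$ are independent of $\y$ (equivalently, so that $\nabla^2\psi_\theta$ is bounded on all of $\R^n$ and not merely on compact sets); this is precisely the smoothness-at-infinity behavior that the softplus (or any similarly well-behaved activation) provides.
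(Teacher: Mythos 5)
Your proposal is correct and follows essentially the same route as the paper's proof: an inductive, layer-by-layer argument tracking both a uniform bound on $D\z_k$ and a Lipschitz constant for $\y\mapsto D\z_k(\y)$, relying only on the global bounds $\sup_x|g'(x)|\le 1$ and $\sup_x|g''(x)|<\infty$ for the softplus, and concluding via $\nabla\psi_\theta(\y)=D\z_K(\y)\adj\w$ and the triangle inequality for $f_\theta=\nabla\psi_\theta+\alpha\Id$. The only cosmetic differences are that you package the recursion as a clean composition estimate in operator norm (with the sharp value $\beta/4$ for $\sup|g''|$), whereas the paper unrolls the same telescoping bound inline in Frobenius norm.
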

\begin{proof}
    The claimed expression for Lipschitzness of $f_{\theta}$ follows from the
    claimed expression for Lipschitzness of $\nabla \psi_{\theta}$, by
    differentiating and using the triangle inequality.
    We recall basic notions of Lipschitz continuity of differentiable mappings
    at the beginning of the proof of \Cref{lem:lip-psi}, which we will make use
    of below.
    We will upper bound $\| \nabla \psi_{\theta} \|_{\Lip}$
    by deriving an explicit expression for the gradient. By the defining
    formulas in \Cref{prop:lpn}, we have
    \begin{equation*}
        \psi_{\theta}(\y) = \w^T \z_K(\y) + \bb.
    \end{equation*}
    The chain rule gives
    \begin{equation*}
        \nabla \psi_{\theta}(\y)
        = D\z_K(\y)\adj \w,
    \end{equation*}
    where $\adj$ denotes the adjoint of a linear operator, so for any $\y, \y'$
    we have
    \begin{align*}
        \| \nabla \psi_{\theta}(\y) - \nabla \psi_{\theta}(\y') \|_2
        &=
        \left\| \left(
            D\z_K(\y) - D\z_K(\y')
        \right)\adj \w
        \right\|_2
        \\
        &\leq
        \left\| \left(
            D\z_K(\y) - D\z_K(\y')
        \right)\adj
        \right\|
        \| \w \|_2
        \\
        &=
        \left\|
            D\z_K(\y) - D\z_K(\y')
        \right\|
        \| \w \|_2
        \\
        &\leq
        \left\|
            D\z_K(\y) - D\z_K(\y')
            \right\|_{\mathrm{F}}
        \| \w \|_2,
    \end{align*}
    where the first inequality uses Cauchy-Schwarz, the third line uses that the
    operator norm of a linear operator is equal to that of its adjoint,
    and the third line uses that the operator norm is upper-bounded by the
    Frobenius norm. This shows that we obtain a Lipschitz property in $\ell_2$
    for $\nabla \psi_{\theta}$ by obtaining one for the differential $D\z_{K}$
    of the LPN's last-layer features.
    To this end, we can use the chain rule to compute for any integer $1 < k <
    K+1$ and any $\boldsymbol \delta \in \R^n$
    \begin{equation*}
        D\z_k(\y)(\boldsymbol \delta)
        =
        g'\left(
            \W_k \z_{k-1}(\y) + \HHH_k \y + \bb_k
        \right)
        \mathbin{\odot}
        \left[
            \W_k D\z_{k-1}(\y)( \boldsymbol \delta ) + \HHH_k \boldsymbol \delta
        \right],
    \end{equation*}
    where $g'$ is the derivative of the softplus activation function $g$,
    applied elementwise, and $\odot$ denotes elementwise multiplication,
    and similarly
    \begin{equation*}
        D\z_1(\y)(\boldsymbol \delta)
        =
        g'\left(
            \HHH_1 \y + \bb_1
        \right)
        \mathbin{\odot}
        \left[
            \HHH_1 \boldsymbol \delta
        \right].
    \end{equation*}
    Now notice that for any vectors $\vv$ and $\y$ and any matrix $\A$ such that
    the sizes are compatible, we have $\vv \mathbin{\odot} (\A \y) = \diag(\vv)
    \A \y$. Hence we can rewrite the above recursion in matrix form as
    \begin{equation*}
        D\z_k(\y)
        =
        \underbrace{
            \diag \left(
                g'\left(
                    \W_k \z_{k-1}(\y) + \HHH_k \y + \bb_k
                \right)
            \right)
        }_{\D_k(\y)}
        \left[
            \W_k D\z_{k-1}(\y) + \HHH_k
        \right],
    \end{equation*}
    and similarly
    \begin{equation*}
        D\z_1(\y)
        =
        \underbrace{
            \diag \left(
                g'\left(
                    \HHH_1 \y + \bb_1
                \right)
            \right)
        }_{\D_1(\y)}
        \HHH_1.
    \end{equation*}
    We will proceed with an inductive argument. First, by the submultiplicative
    property of the Frobenius norm and the triangle inequality for the Frobenius
    norm, note that we have if $1 < k < K+1$
    \begin{align*}
        \|
        D\z_k(\y) - D\z_k(\y')
        \|_{\frob}
        &\leq
        \|
        \D_k(\y) - \D_k(\y')
        \|_{\frob}
        \\
        &\quad+
        \|
        \D_k(\y)\W_k D\z_{k-1}(\y) - \D_k(\y')\W_k D\z_{k-1}(\y')
        \|_{\frob}
        \\
        &\leq
        \|
        \D_k(\y) - \D_k(\y')
        \|_{\frob}
        \\
        &\quad+
        \|
        \D_k(\y)\W_k D\z_{k-1}(\y) - \D_k(\y)\W_k D\z_{k-1}(\y')
        \|_{\frob}
        \\
        &\quad+
        \|
        \D_k(\y)\W_k D\z_{k-1}(\y') - \D_k(\y')\W_k D\z_{k-1}(\y')
        \|_{\frob}
        \\
        &\leq
        \|
        \D_k(\y) - \D_k(\y')
        \|_{\frob}
        \\
        &\quad+
        \|
        \D_k(\y)\W_k
        \|_{\frob}
        \| 
        D\z_{k-1}(\y)  - D\z_{k-1}(\y')
        \|_{\frob}
        \\
        &\quad+
        \|
        D\z_{k-1}(\y')
        \|_{\frob}
        \| 
        \D_k(\y)\W_k - \D_k(\y') \W_k
        \|_{\frob}
        \\
        &\leq
        \left(
            1 + \| \W_k \|_{\frob}
        \right)
        \|
        \D_k(\y) - \D_k(\y')
        \|_{\frob}
        \\
        &\quad+
        \|
        \D_k(\y)
        \|_{\frob}
        \| \W_k \|_{\frob}
        \| 
        D\z_{k-1}(\y)  - D\z_{k-1}(\y')
        \|_{\frob}.
    \end{align*}
    Now, as we have shown above, $g'(x) = (1 + \exp(-\beta x))^{-1} \leq 1$
    for every $x \in \R$. 
    This implies
    \begin{equation*}
        \|
        \D_k(\y)
        \|_{\frob}
        \leq
        \sqrt{n_k},
    \end{equation*}
    where $n_k$ is the output dimension of $k$-th layer.
    Moreover, we calculate with the chain rule
    \begin{equation*}
        g''(x) = \frac{\beta e^{-\beta x}}{(1 + e^{-\beta x})^2},
    \end{equation*}
    and by L'H\^{o}pital's rule, we have that $\lim_{x \to +\infty} \tfrac{x}{(1
    + x)^2} = 0$, so that by continuity, $g''$ is bounded for $x \in \R$.
    It follows that $g'$ is Lipschitz.  Notice now that
    \begin{align*}
        \|
        \D_k(\y) - \D_k(\y')
        \|_{\frob}
        &=
        \left\|
        g'\left(
            \W_k \z_{k-1}(\y) + \HHH_k \y + \bb_k
        \right)
        -
        g'\left(
            \W_k \z_{k-1}(\y') + \HHH_k \y' + \bb_k
        \right)
        \right\|_{2}
        \\
        &\leq
        \| g' \|_{\Lip}
        \left(
            \| \W_k \|_{\frob} \| \z_{k-1}(\y) - \z_{k-1}(\y') \|_2
            +
            \| \HHH_k \|_{\frob} \| \y - \y' \|_2,
        \right)
    \end{align*}
    where in the second line we used the fact that the derivative of an
    elementwise function is a diagonal matrix together with the triangle
    inequality and Cauchy-Schwarz. However, we have already argued previously by
    induction that $\psi_{\theta}$ is Lipschitz, and in particular each of its
    feature maps $\z_{k}$ is Lipschitz. We conclude that $\D_{k}$ is Lipschitz,
    and the Lipschitz constant depends only on $\theta$.
    This means that there are constants $L_k, L_k'$ depending only on $n$ and
    $\theta$ such that
    \begin{align*}
        \|
        D\z_k(\y) - D\z_k(\y')
        \|_{\frob}
        &\leq
        L_k
        \|
        \y - \y'
        \|_{2}
        +
        L_k'
        \| 
        D\z_{k-1}(\y)  - D\z_{k-1}(\y')
        \|_{\frob}.
    \end{align*}
    Meanwhile, following the same arguments as above, but in a slightly
    simplified setting, we obtain
    \begin{align*}
        \|
        D\z_1(\y) - D\z_1(\y')
        \|_{\frob}
        &=
        \| \D_1(\y)\HHH_1 - \D_1(\y') \HHH_1 \|_{\frob}
        \\
        &\leq
        \| \HHH_1 \|_{\frob}
        \| \D_1(\y) - \D_1(\y') \|_{\frob}
        \\
        &\leq
        \| g' \|_{\Lip}
        \| \HHH_1 \|_{\frob}^2
        \| \y - \y' \|_2,
    \end{align*}
    which demonstrates that $D\z_1$ is also Lipschitz, with the Lipschitz
    constant depending only on $\theta$. By induction, we therefore conclude
    that there is $L_{\nabla \psi_\theta} > 0$ such that
    \begin{equation*}
        \| \nabla \psi_{\theta}(\y) - \nabla \psi_{\theta}(\y') \|_2
        \leq
        L_{\nabla \psi_\theta} \| \y - \y' \|_2,
    \end{equation*}
    with $L_{\nabla \psi_\theta}$ depending only on $\theta$ and $n_k$.
\end{proof}

\subsubsection{\rebuttal{Nondegeneracy of the Prior}}\label{sec:kl-omin}

For our convergence results for PnP with LPNs, we make use of general
convergence analyses for nonsmooth nonconvex optimization from the literature.
To invoke these results, we need to show that the regularizer $R_{\theta}$
associated to the LPN $f_{\theta}$ satisfies a certain nondegeneracy property
called the Kurdyka-\L ojaciewicz (KL) inequality. We provide a
self-contained proof of this fact in this section. We will instantiate various
concepts from the literature in our specific setting (i.e., sacrificing
generality for clarity), and include appropriate references to the literature
for more technical aspects that are not essential to our setting.

\begin{definition}[KL property; {\cite[Definition
    3.1]{Attouch2010-vs}, \cite[Definition 1]{Bot2016-mw}}]\label{def:KL-property}
    A $C^1$ function $f : \R^n \to \R$ is said to have the KL property (or to be
    a KL function) at
    a point $\bar{x} \in \R^n$ if there exists $0 < \eta \leq +\infty$, a
    neighborhood $U \subset \R^n$ of $\bar{x}$, and a continuous concave
    function $\varphi : [0, \eta) \to \R$ such that
    \begin{enumerate}
        \item $\varphi(0) = 0$;
        \item $\varphi \geq 0$;
        \item $\varphi$ is $C^1$ on $(0, \eta)$;
        \item for all $s \in (0, \eta)$, $\varphi'(s) > 0$;
        \item for all $x \in U \cap \{ x \mid f(\bar{x}) < f(x) < f(\bar{x}) +
            \eta \}$, the Kurdyka-\L ojaciewicz inequality holds:
            \begin{equation}\label{eq:KL-inequality}
                \| \nabla f(x) \|_2 \geq 
                \frac{1}{\varphi'(f(x) - f(\bar{x}))}.
            \end{equation}
    \end{enumerate}
\end{definition}

It can be shown that any $f \in C^1(\R^n)$ has the KL property at every point
$\bar{x}$ that is not a critical point of $f$ \cite[Remark
3.2(b)]{Attouch2010-vs}.
At critical points $\bar{x}$ of such a function $f$, the KL inequality is a kind
of nondegeneracy condition on $f$. %
This can be intuited from the (common) example of functions $f$ witnessing the KL
property via the function $\varphi(s) = c s^{1/2}$, where $c>0$ is a constant: 
in this case, \Cref{eq:KL-inequality} gives a lower bound on the squared magnitude of
the gradient near to a critical point in terms of the values of the function
$f$. Every Morse function satisfies such an instantiation of the KL property: see \cite[beginning of \S 4]{Attouch2010-vs}.
In general, the KL property need not require excessive regularity of $f$; we only state it
as such because of the special structure in our setting (c.f.\
\Cref{lem:prior-coercive}).

The KL property is useful in spite of the unwieldy \Cref{def:KL-property}
because broad classes of functions can be shown to have the KL property, and
these classes of functions satisfy a convenient calculus that allows one to
construct new KL functions from simpler primitives (somewhat analogous to the
situation with convex functions in convex analysis).
The class of such functions we will use in this work are \textit{functions
definable in an o-minimal structure}. We refer to \cite[Definition
4.1]{Attouch2010-vs} for the precise definition of this concept from model
theory---for our purposes, it will suffice to understand that an o-minimal
structure is a collection of subsets of $\R^n$, for each
$n \in \mathbb{N}$, which satisfy certain algebraic properties and are called \textit{definable sets}, or \textit{definable} for short, and that a definable function is
one whose graph is contained in this
collection (i.e., its graph is a definable set)\footnote{Here and below, the use of the
article ``an'' is intentional---there may be multiple o-minimal structures in
which a function is definable, and in some applications it is important to
distinguish between them. For our purposes in this work, this distinction is
immaterial: we will be content to simply work in a `maximal' o-minimal structure
that contains all functions we will need.}---and instead state several useful
properties of functions definable in an o-minimal structure that will suffice to
complete our proof. We note that \citet[\S 4]{Attouch2010-vs} give an excellent
readable optimization-motivated overview of these properties in greater
generality and depth, as do \citet{Ji2020-mg}, the latter moreover in a deep
learning context.

\begin{proposition}\label{prop:o-minimal}
    The following properties of o-minimal structures and functions definable in
    an o-minimal structure hold.
    \begin{enumerate}
        \item \cite[Theorem 4.1]{Attouch2010-vs} If $f : \R^n \to \R$ is
            continuously differentiable and
            definable in an o-minimal structure,
            then it has the KL property (\Cref{def:KL-property}) at every $x \in
            \R^n$. 
        \item (Wilkie) There is an o-minimal structure in which the following
            functions are definable: the exponential function $x \mapsto
            \exp(x)$ (for $x \in \mathbb{R}$), and
            polynomials of arbitrary degree in $n$ real variables $x_1, \dots, x_n$
            \cite{Van_den_Dries1994-yz}, \cite[Corollary
            2.11]{Van_den_Dries1998-ru}.
        \item \cite[\S B, B.7(3)]{Van_den_Dries1996-oz} If $f : \R^n \to \R$ is
            $C^1$ and definable in an o-minimal structure, then its gradient $\x
            \mapsto \nabla f(\x)$ is definable.
        \item (\cite[\S B, B.4]{Van_den_Dries1996-oz}) A function $f : \R^n \to
            \R^m$ with $f = (f_1, \dots, f_m)$ is
            definable in an o-minimal structure if and only if each $f_i$ is
            definable (in the same structure).
        \item \cite[Theorem 2.3(iii)]{Loi2010-dx} The composition of functions
            definable in an o-minimal structure is definable.
        \item \cite[Theorem 2.3(ii)]{Loi2010-dx} If $f : \R^n \to \R^n$ is
            definable in an o-minimal structure, then its image $\mathrm{im}(f)$
            is definable; if moreover $f$ is invertible on its image,
            then its inverse $f^{-1}$ (defined on $\mathrm{im}(f)$) is
            definable.\footnote{The proof of this property follows readily from
            the cited result and the preceding properties by
            noticing that if one defines a map $F : \R^n \times \R^n \to \R^n
            \times \R^n$ by $F(x, y) = (f(y), f(x))$, then
            $\mathrm{gr}(f^{-1}) = F^{-1}(\mathrm{gr}(f))$, which
            expresses the graph of $f^{-1}$ as the inverse image of a definable
            set by a definable function.}
    \end{enumerate}
\end{proposition}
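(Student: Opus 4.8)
The plan is to recognize that \Cref{prop:o-minimal} is a compendium of known facts about o-minimal structures, so the bulk of the argument consists of assembling the appropriate references and confirming that each applies in our (deliberately elementary) setting. Property~1 follows directly from \cite[Theorem 4.1]{Attouch2010-vs}, which asserts the KL inequality for any differentiable definable function; every function we invoke it on will be $C^1$, so no additional regularity need be checked. For Property~2, I would first recall that the semialgebraic sets already form an o-minimal structure in which every real polynomial is definable, and then invoke Wilkie's theorem \cite{Van_den_Dries1994-yz}, \cite[Corollary 2.11]{Van_den_Dries1998-ru}, which states that the structure generated over $\R$ by the real exponential is o-minimal. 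The point to be careful about is that we genuinely require a \emph{single} o-minimal structure containing both the exponential and all polynomials simultaneously, rather than two separate ones; this is exactly what Wilkie's result delivers, and all subsequent definability claims would then be made relative to this common structure.

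Properties~3, 4, and 5 are each standard closure properties of definable functions, which I would simply cite: Property~3 (definability of the gradient of a $C^1$ definable function) from \cite[\S B, B.7(3)]{Van_den_Dries1996-oz}; Property~4 (a vector-valued function is definable iff each component is) from \cite[\S B, B.4]{Van_den_Dries1996-oz}, where the ``only if'' direction is immediate and the ``if'' direction follows because the graph of $f = (f_1,\dots,f_m)$ is built from the graphs of the $f_i$ by finitely many Cartesian products, coordinate permutations, and projections, all operations under which definable sets are closed; and Property~5 (closure under composition) from \cite[Theorem 2.3(iii)]{Loi2010-dx}.

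Finally, for Property~6 the image part is \cite[Theorem 2.3(ii)]{Loi2010-dx}, and for the statement about the inverse I would use the argument already outlined in the accompanying footnote: define $F : \R^n \times \R^n \to \R^n \times \R^n$ by $F(\x,\y) = (f(\y), f(\x))$, which is definable by Properties~4 and~5, and observe that $\mathrm{gr}(f^{-1}) = F^{-1}(\mathrm{gr}(f))$, so that the graph of $f^{-1}$ is the preimage of a definable set under a definable map and is therefore definable by the basic axioms of an o-minimal structure. I do not anticipate a genuine obstacle anywhere in this proof; the only mild subtlety is bookkeeping --- ensuring that every auxiliary map assembled along the way (products, the coordinate-swap map $F$, and projections) is itself certified definable by the closure properties above before one forms its image or preimage.
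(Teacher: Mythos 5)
Your proposal is correct and matches the paper's treatment: the paper likewise presents this proposition as a compendium of cited results, with the only original content being the footnote argument for the inverse in Property~6, which you reproduce faithfully (and you correctly flag the one genuine subtlety, namely that Wilkie's theorem supplies a \emph{single} o-minimal structure containing both $\exp$ and all polynomials). No gaps.
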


These properties represent a rather minimal set that are sufficient for our purposes. To illustrate how to use them to deduce slightly more accessible (and useful) corollaries, we provide the following result
on definable functions that will be used repeatedly in our proofs.
\begin{corollary}
\label{corollary:sum-definable}
    If functions $f_1 : \R^n \to \R$ and $f_2 : \R^n \to \R$ are definable in the o-minimal structure asserted by \Cref{prop:o-minimal}, Property 2, then $f_1 + f_2$ is definable in the same structure.
\end{corollary}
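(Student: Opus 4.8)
The plan is to express $f_1 + f_2$ as a composition of definable maps and invoke the closure properties already listed in \Cref{prop:o-minimal}. The only ingredients needed are Properties 2, 4, and 5.

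First I would package the pair of functions into a single map $F : \R^n \to \R^2$ defined by $F(\x) = (f_1(\x), f_2(\x))$. By hypothesis each coordinate $f_1, f_2$ is definable in the o-minimal structure of \Cref{prop:o-minimal}, Property 2, so \Cref{prop:o-minimal}, Property 4 implies that $F$ itself is definable in that same structure. Next I would observe that the addition map $s : \R^2 \to \R$, $s(a,b) = a + b$, is a polynomial (of degree one) in two real variables, and hence is definable in the structure of \Cref{prop:o-minimal}, Property 2. Finally, $f_1 + f_2 = s \circ F$ is a composition of two functions that are both definable in this o-minimal structure, so \Cref{prop:o-minimal}, Property 5 yields that $f_1 + f_2$ is definable in the same structure, as claimed.

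There is no real obstacle here; the statement is a direct bookkeeping consequence of the stated closure properties, included mainly to illustrate the calculus of definable functions used repeatedly in the later proofs. The only point requiring a modicum of care is matching up the domains and codomains for the composition ($F : \R^n \to \R^2$ followed by $s : \R^2 \to \R$), and noting that Property 4 applies in \emph{both} directions, so that assembling $f_1$ and $f_2$ into the vector-valued $F$ does not leave the structure.
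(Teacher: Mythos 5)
Your argument is correct and is essentially identical to the paper's proof: both write $f_1 + f_2$ as the composition of the vector-valued map $(f_1, f_2)$ (definable by Property 4) with the polynomial addition map (definable by Property 2), and conclude by the closure under composition in Property 5. Nothing further is needed.
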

\begin{proof}
    By \Cref{prop:o-minimal}, Property 2, the function $g = x_1 + x_2$, for $x_1, x_2 \in \R$, is definable since it is a polynomial. By \Cref{prop:o-minimal}, Property 4, $(f_1, f_2)$ is definable. So, by \Cref{prop:o-minimal}, Property 5, $f_1 + f_2$ is definable since it is the composition of $(f_1, f_2)$ and $g$.
\end{proof}

Using these properties, we prove that the regularizer associated to an LPN with $0 <
\alpha < 1$ is a KL function. 
For concision, we call a function $f$ ``definable'' if it is definable in
the o-minimal structure whose existence is asserted by \Cref{prop:o-minimal},
Property 2.

\begin{lemma}[Definability and KL property of the prior $R_\theta$]\label{lem:phi-KL}
    Suppose $f_{\theta}$ is an LPN constructed following the recipe in
    \Cref{prop:lpn}, with softplus activations $\sigma(x) = (1/\beta)\log(1 +
    \exp(\beta x))$, where $\beta>0$ is an arbitrary constant, and with strong
    convexity weight $0 < \alpha < 1$. 
    Then there is a function
    $R_{\theta} : \R^n \to \R$ such that $f_{\theta} =
    \prox_{R_{\theta}}$,
    and $R_{\theta}$ is definable and has the KL property (\Cref{def:KL-property}) at every
    point of $\R^n$.
    Moreover, this $R_{\theta}$ can be chosen to simultaneously satisfy the
    conclusions of \Cref{lem:prior-coercive}.
\end{lemma}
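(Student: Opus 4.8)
The plan is to combine the explicit formula for $R_{\theta}$ furnished by \Cref{lem:prior-coercive} with the calculus of definable functions recorded in \Cref{prop:o-minimal}. To begin, \Cref{prop:lpn} and \Cref{lem:prior-coercive} supply a coercive, continuously differentiable function $R_{\theta} : \R^n \to \R$ with $f_{\theta} = \prox_{R_{\theta}}$ satisfying \Cref{eq:prior-defining-equation-with-alpha}, which expresses $R_{\theta}(\x)$ in terms of $\psi_{\theta}$, $\nabla \psi_{\theta}$, $f_{\theta}^{-1}$, and the standard inner product and norm on $\R^n$. Since this $R_{\theta}$ is already $C^1$, \Cref{prop:o-minimal}, Property 1, will immediately yield the KL property at every point once we show that $R_{\theta}$ is definable; the ``moreover'' clause then holds for free, since it is precisely this $R_{\theta}$ that we are analyzing.

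The first substantive step is to establish definability of the building blocks. The softplus activation $g(x) = \tfrac{1}{\beta}\log(1 + \exp(\beta x))$ is definable: $x \mapsto \exp(\beta x)$ is a composition of the exponential with a polynomial, $x \mapsto 1 + x$ is a polynomial, and $\log$ is definable because it is the inverse (on the image $(0,\infty)$) of the definable bijection $\exp$, so \Cref{prop:o-minimal}, Properties 2, 5, and 6, apply. The map induced by applying $g$ in each coordinate is then definable by Property 4, and since each layer $\z_k = g(\W_k \z_{k-1} + \HHH_k \y + \bb_k)$ composes an affine (hence polynomial) map with this elementwise $g$, a straightforward induction using Property 5 shows every feature map $\z_k$ --- and hence $\psi_{\theta}(\y) = \w^T \z_K + b$ --- is definable. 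Property 3 then gives that $\nabla \psi_{\theta}$ is definable (note $\psi_{\theta}$ is $C^1$ because $g$ is $C^2$), hence so is $f_{\theta} = \nabla \psi_{\theta} + \alpha\,\Id$, by \Cref{corollary:sum-definable} applied coordinatewise. Finally, \Cref{lem:f-inv-C0} shows $f_{\theta}$ is a bijection of $\R^n$ onto $\R^n$, so \Cref{prop:o-minimal}, Property 6, shows $f_{\theta}^{-1}$ is definable.

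It then remains to assemble $R_{\theta}$ from these pieces. Writing $R_{\theta} = \Phi \circ f_{\theta}^{-1}$, where $\Phi(\y)$ collects the right-hand side of \Cref{eq:prior-defining-equation-with-alpha} viewed as a function of $\y$, one observes that $\Phi$ is built from $\psi_{\theta}$, $\nabla\psi_{\theta}$, coordinate projections, products, and sums --- all definable and closed under composition by \Cref{prop:o-minimal}, Properties 2, 4, 5, and \Cref{corollary:sum-definable} --- so $\Phi$ is definable, and therefore so is $R_{\theta} = \Phi \circ f_{\theta}^{-1}$ by Property 5. Applying \Cref{prop:o-minimal}, Property 1, concludes the proof.

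I expect no serious obstacle: the argument is essentially careful bookkeeping with the closure properties. The only points that need care are (i) recognizing that $\log$, and hence softplus, lies in Wilkie's o-minimal structure (via definability of the inverse of $\exp$), and (ii) that one must work from the concrete formula \Cref{eq:prior-defining-equation-with-alpha} rather than from the abstract existence of $R_{\theta}$ --- this is exactly why \Cref{lem:prior-coercive} is proved first and invoked here as a black box.
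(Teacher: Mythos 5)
Your proposal is correct and follows essentially the same route as the paper's proof: invoke \Cref{lem:prior-coercive} for the explicit $C^1$ representation \Cref{eq:prior-defining-equation-with-alpha}, establish definability of the softplus, the ICNN $\psi_\theta$, its gradient, $f_\theta$, and $f_\theta^{-1}$ via \Cref{prop:o-minimal} (Properties 2--6) and \Cref{corollary:sum-definable}, and then conclude the KL property from Property 1. The only cosmetic difference is that you package the right-hand side as $\Phi \circ f_\theta^{-1}$ whereas the paper reduces to four constituent terms before composing with $f_\theta^{-1}$; the underlying bookkeeping is identical.
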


\begin{proof}
    We appeal to \Cref{lem:prior-coercive}, given that $0 < \alpha < 1$, to
    obtain that a function 
    $R_{\theta} : \R^n \to \R$ such that $f_{\theta} = \prox_{R_{\theta}}$
    exists,
    and moreover that this function $R_{\theta}$ is $C^1$ and satisfies the 
    relation \Cref{eq:prior-defining-equation-with-alpha}.
    The idea of the proof is to use \Cref{eq:prior-defining-equation-with-alpha}
    to show that $R_{\theta}$ is definable in an o-minimal structure, then
    appeal to \Cref{prop:o-minimal}, Property 1 to obtain that $R_{\theta}$
    has the KL property at every point of $\R^n$.
    By \Cref{corollary:sum-definable}, to show that $R_{\theta}$
    is definable, it suffices to show that four functions appearing in the
    representation \Cref{eq:prior-defining-equation-with-alpha}
    are definable: $\x \mapsto \langle f_{\theta}^{-1}(\x), \nabla
    \psi_{\theta}(f^{-1}_{\theta}(\x)) \rangle $,
    $\x \mapsto \| f^{-1}_{\theta}(\x) \|_2^2$, 
    $\x \mapsto \| \nabla \psi_{\theta}( f^{-1}_{\theta}(\x) ) \|_2^2$,
    and $\x \mapsto \psi_{\theta}( f_{\theta}^{-1}(\x) )$.
    We can reduce further. By \Cref{prop:o-minimal}, Property 2, the squared norm $\x
    \mapsto \| \x \|_2^2$ is definable, and \Cref{prop:o-minimal}, Property 5, namely
    that finite compositions of definable functions are definable, it suffices
    to show that the arguments of the norms appearing amongst these four
    functions are definable. Similarly, by \Cref{prop:o-minimal}, Properties 2,
    4, and 5, for the inner product term appearing amongst these
    four functions, it suffices to show that the two individual arguments of the
    inner product are definable.
    It then follows by one additional application of \Cref{prop:o-minimal},
    Property 5 that
    we need only argue that the following three functions are definable:
    $\x \mapsto \psi_{\theta}(\x)$, $\x \mapsto f_{\theta}^{-1}(\x)$, and $\x
    \mapsto \nabla \psi_{\theta}(\x)$.
    Finally, by \Cref{prop:o-minimal}, Property 6 and \Cref{lem:prior-coercive}, which
    asserts that $f_{\theta}$ is invertible on $\R^n$ with range equal to $\R^n$,
    it suffices merely to show that $\x \mapsto f_{\theta}(\x)$ is definable to
    obtain that its inverse is definable.

    To complete the proof, we argue in sequence below that each of these three
    functions are definable.

    \paragraph{ICNN $\psi_{\theta}$.} The ICNN $\psi_{\theta}$ is defined
    inductively in \Cref{prop:lpn}, and by our hypotheses, the activation
    function $g(\x)$ is an elementwise application of the softplus activation
    with parameter $\beta > 0$: with a minor abuse of notation, this function is
    \begin{equation}
        g(x) = (1/\beta)\log(1 + \exp(\beta x)).
    \end{equation}
    This (scalar) activation function is definable. To see this, by
    \Cref{prop:o-minimal}, Property 2, we have that $x \mapsto \exp x$ is
    definable, and by \Cref{prop:o-minimal}, Property 6, we have that $x \mapsto
    \log x$ is definable. It then follows from
    \Cref{prop:o-minimal}, Properties 2 \& 5 that $g$ is definable, as a
    finite composition of definable functions.
    We then conclude from \Cref{prop:o-minimal}, Property 4 that the
    elementwise activation $\x \mapsto g(\x)$ is definable.
    Now, by the definition of $\psi_{\theta}$ in \Cref{prop:lpn}, we have that
    $\z_1$ is definable as a composition of definable functions
    (\Cref{prop:o-minimal}, Properties 2 and 5), and arguing inductively, we
    have by the same reasoning that $\z_k$ is definable for each $k = 2, \dots,
    K$. Because $\psi_{\theta} = \mathbf{w}^T \z_K + b$ is an affine function of
    $\z_K$, one additional application of \Cref{prop:o-minimal}, Properties 2
    and 5 establishes that $\psi_{\theta}$ is definable.

    \paragraph{ICNN gradient $\nabla \psi_{\theta}$.} 
    We conclude this immediately from \Cref{prop:o-minimal}, Property 3 and the
    definability of $\psi_\theta$, since $\psi_\theta$ is $C^2$ (because $\psi_\theta$ is composed of affine maps and $C^2$ activations).

    \paragraph{LPN $f_{\theta}$.} Recall that $f_{\theta}(\x) = \nabla
    \psi_{\theta}(\x) + \alpha \x$.
    Because we have shown above that $\nabla \psi_{\theta}$ is definable, using
    \Cref{prop:o-minimal}, Properties 2 and 5 once more, we conclude that
    $f_{\theta}$ is definable.

    \paragraph{Concluding.} By our preceding reductions, we have shown that
    $R_{\theta}$ is definable. We conclude from \Cref{prop:o-minimal}, Property 1 that
    $R_{\theta}$ has the KL property at every point of $\R^n$.

\end{proof}

\begin{remark}
    Note that a byproduct of the proof of \Cref{lem:phi-KL} is that all
    constituent functions of the LPN are also definable in an o-minimal
    structure. This fact may be of interest for future work extending
    \Cref{lem:phi-KL} to priors associated with novel LPN architectures.
\end{remark}

\section{Algorithms}

\subsection{Algorithm for Log-Prior Evaluation}
\label{sec:alg-prior}
\begin{algorithm}[H]
\caption{Log-prior evaluation for LPN}
\label{alg:prior}

\begin{algorithmic}[1]

\Require Learned proximal network $f_\theta(\cdot)$, $\psi_\theta(\cdot)$ that satisfies $f_\theta = \nabla \psi_\theta$, query point $\x$

\State Find $\y$ such that $f_\theta(\y) = \x$, by solving $\min_\y \psi_{\theta}(\y; \alpha) - \langle \x, \y \rangle$ or $\min_\y \|f_\theta(\y) - \x\|_2^2$

\State $R \gets  \langle \y, \x \rangle - \frac{1}{2} \|\x\|^2
- \psi_\theta(\y)$

\Ensure $R$ \Comment{The learned log-prior (i.e., regularizer function) at $\x$}
\end{algorithmic}
\end{algorithm}

\subsection{Algorithm for LPN Training}
\label{sec:alg-train}
\begin{algorithm}[H]
\caption{Training the LPN with proximal matching loss}
\label{alg:train}

\begin{algorithmic}[1]

\Require Training dataset $\mathcal{D}$, initial LPN parameter $\theta$, loss schedule $\gamma(\cdot)$, noise standard deviation $\sigma$, number of iterations $K$, network optimizer $\operatorname{Optm}(\cdot, \cdot)$

\State $k \gets 0$

\Repeat

\State Sample $\x \sim \mathcal{D}$, $\beps \sim \N(0, \mathbf{I})$

\State $\y \gets \x + \sigma \beps$

\State $\mathcal{L}_{PM} \gets m_{\gamma(k)}(\|f_\theta(\y) - \x\|_2)$

\State $\theta \gets \operatorname{Optm}(\theta, \nabla_\theta \mathcal{L}_{PM})$ \Comment{Update network parameters}

\State $k \gets k+1$

\Until $k = K$

\Ensure $\theta$ \Comment{Trained LPN}
\end{algorithmic}
\end{algorithm}

\subsection{Algorithm for Using LPN with PnP-PGD to Solve Inverse Problems}

\begin{algorithm}[H]
\caption{Solving inverse problems with LPN and PnP-PGD}
\label{alg:pgd}

\begin{algorithmic}[1]

\Require Trained LPN $f_\theta$,
measurement operator $A$, measurement $\y$,
data fidelity function $h(\x) = \tfrac{1}{2} \| \y - A(\x) \|_2^2$, %
initial estimation $\x_0$, step size $\eta$, number of iterations $K$

\For{$k = 0$ \textbf{to} $K-1$}
    \State $\x_{k+1} \gets f_\theta \left(\x_k - \eta \nabla h(\x_k)\right)$ 
\EndFor

\Ensure $\x_K$ 
\end{algorithmic}
\end{algorithm}

\section{Experimental Details}
\label{sec:details}
\subsection{Details of Laplacian experiment}
\label{sec:details-lap}
The LPN architecture contains four linear layers and $50$ hidden neurons at each layer, with $\beta=10$ in softplus activation. The LPN is trained by Gaussian noise with $\sigma=1$, Adam optimizer \cite{kingma2014adam} and batch size of $2000$. For either $\ell_2$ or $\ell_1$ loss, the model is trained for a total of $20k$ iterations, including $10k$ iterations with learning rate $lr=1e-3$, and another $10k$ with $lr=1e-4$. For the proximal matching loss, we initialize the model from the $\ell_1$ checkpoint and train according to the schedule in \Cref{tab:lap-schedule}. \rebuttal{To enforce nonnegative weights of LPN, weight clipping is applied during training, projecting the negative weights to zero at each iteration.}

\begin{table}[H]
    \centering
    \begin{tabular}{ccc}
    \toprule
        Number of \\ iterations & $\gamma$ in $\mathcal{L_{PM}}$ & Learning rate \\ \midrule
        $2k$ & $0.5$ & $1e-3$ \\
        $2k$ & $0.5$ & $1e-4$ \\
        $4k$ & $0.4$ & $1e-4$ \\
        $4k$ & $0.3$ & $1e-4$ \\
        $4k$ & $0.2$ & $1e-5$ \\
        $4k$ & $0.1$ & $1e-5$ \\
        $4k$ & $0.1$ & $1e-6$ \\ \bottomrule
    \end{tabular}
    \caption{The schedule for proximal matching training of LPN in the Laplacian experiment.}
    \label{tab:lap-schedule}
\end{table}

\subsection{Details of MNIST experiment}
\label{sec:details-mnist}
The LPN architecture is implemented with four convolution layers and $64$ hidden neurons at each layer, with $\alpha=0.01$ and softplus $\beta=10$. The model is trained on the MNIST training set containing $50k$ images, with Gaussian noise with standard deviation $\sigma=0.1$ and batch size of $200$. The LPN is first trained by $\ell_1$ loss for $20k$ iterations; and then by the proximal matching loss for $20k$ iterations, with $\gamma$ initialized at $0.64 * 28 = 17.92$ and halved every $5k$ iterations. The learned prior is evaluated on $100$ MNIST test images. Conjugate gradient is used to solve the convex inversion problem: $\min_\y \psi_{\theta}(\y) - \langle \x, \y \rangle$ in prior evaluation.

\subsection{Details of CelebA experiment}
\label{sec:details-celeba}
We center-crop CelebA images from $178\times 218$ to $128 \times 128$, and normalized the intensities to $[0,1]$. Since CelebA images are larger and more complex than MNIST, we use a deeper and wider network. The LPN architecture includes $7$ convolution layers and $256$ hidden neurons per layer, with $\alpha=1e-6$ and $\beta=100$.
For LPN training, we train two separate models with two levels of training noise: $\sigma=0.05$ and $0.1$. When applied for deblurring, the best model is selected for each blurring degree ($\sigma_{blur}$) and measurement noise level ($\sigma_{noise}$). We pretrain the network with $\ell_1$ loss for $20k$ iterations with $lr=1e-3$. Then, we train the LPN with proximal matching loss $\mathcal{L_{PM}}$ for $20k$ iterations using $lr=1e-4$, with the schedule of $\gamma$ similar to MNIST: initialized at $0.64 \times \sqrt{128 \times 128 \times 3} \approx 142$, and multiplied by $0.5$ every $5k$ iterations. A batch size of $64$ is used during training. \rebuttal{We observed that initializing the respective weights to be nonnegative, by initializing them according to a Gaussian distribution and then taking the exponential, helped the training converge faster. Therefore, we applied such initialization in the experiments on CelebA and Mayo-CT. The same weight clipping as in \Cref{sec:details-lap} is applied to ensure the weights stay nonnegative throughout training. The training time of LPN on the CelebA dataset is about 6 hours on a NVIDIA RTX A5000 GPU.}

\paragraph{PnP algorithm and comparison methods} We use PnP-ADMM to perform deblurring on CelebA for BM3D, DnCNN, and our LPN (see \Cref{alg:admm}). 
We implement the PnP-ADMM algorithm using the SCICO package \cite{balke2022scientific}. We implement DnCNN \cite{zhang2017beyond} using their public code \footnote{\url{https://github.com/cszn/KAIR}}. \rebuttal{We implement the GS denoiser, Prox-DRUNet \cite{hurault2022proximal}, using their public code\footnote{\url{https://github.com/samuro95/Prox-PnP}} and follow their paper to use the Douglas–Rachford splitting (DRS) algorithm when solving inverse problem, which performs the best with Prox-DRUNet based on their paper.} Both DnCNN and Prox-DRUNet are trained on CelebA.

\subsection{Details of Mayo-CT experiment}
\label{sec:details-ct}
We use the public dataset from Mayo-Clinic for the low-dose CT grand challenge (Mayo-CT) \cite{mccollough2016tu}, which contains abdominal CT scans from 10 patients and a total of 2378 images of size $512 \times 512$. Following \cite{lunz2018adversarial}, we use $128$ images for testing and leave the rest for training. 
The LPN architecture contains $7$ convolution layers with $256$ hidden neurons per layer, with $\alpha=1e-6$ and $\beta=100$. During training, we randomly crop the images to patches of size $128 \times 128$. At test time, LPN is applied to the whole image by sliding windows of the patch size with stride size of $64$.
The training procedure of LPN is the same as in CelebA, except that $\gamma$ in proximal matching loss is initialized to $0.64 \times \sqrt{128 \times 128} \approx 82$. As in the CelebA experiment, we use LPN with PnP-ADMM for solving inverse problems. 

\paragraph{Sparse-view CT}
Following \textcite{lunz2018adversarial}, we simulate CT sinograms using a parallel-beam geometry with 200 angles and 400 detectors. The angles are uniformly spaced between $-90^\circ$ and $90^\circ$. White Gaussian noise with standard deviation $\sigma=2.0$ is added to the sinogram data to simulate noise in measurement. 
We implement AR in PyTroch based on its public TensorFlow code\footnote{\url{https://github.com/lunz-s/DeepAdverserialRegulariser}.}; for UAR, we use the publicly available code and model weights \footnote{\url{https://github.com/Subhadip-1/unrolling_meets_data_driven_regularization}.}.

\paragraph{Compressed sensing}
For compressed sensing, we implement the random Gaussian sampling matrix following \textcite{jalal2021instance}, and add noise of $\sigma=0.001$ to the measurements.
The Wavelet-based sparse recovery method for compressed sensing minimizes the object $\frac{1}{2}\|\y - A\x\|_2^2 + \lambda \|W\x\|_1$, where $A$ is the sensing matrix and $W$ is a suitable Wavelet transform. We select the ``db4'' Wavelet and $\lambda=0.01$. To solve the minimization problem in Wavelet-based approach, we use proximal gradient descent with a step size of $0.5$, stopping criterion  $\|\x_{k+1} - \x_{k}\|_1 < 1e-4$, and maximum number of iterations $=1000$.

\section{Discussions}
\subsection{Other ways to parameterize gradients of convex functions via neural networks}
\label{sec:other-parameterization-of-lpn}
Input convex gradient networks (ICGN) \cite{richter2021input} provide another way to parameterize gradients of convex functions. The model performs line integral over Positive Semi-Definite (PSD) Hessian matrices, 
where the Hessians are implicitly parameterized by the Gram product of Jacobians of neural networks, hence guaranteed to be PSD. 
However, this approach only permits single-layer networks 
in order to satisfy a crucial PDE condition in its formulation 
\cite{richter2021input}, significantly limiting the representation capacity. Furthermore, the evaluation of the convex function is less straightforward than ICNN, which is an essential step in prior evaluation for LPN (see \Cref{sec:method-prior}). 
We therefore adopt the differentiation-based parameterization in this work and leave the exploration of other possibilities to future research.

\section{Additional Results}
\subsection{Learning soft-thresholding from Laplacian distribution}
\label{sec:experiments-lap-appx}
\begin{figure}[H]
    \centering
    \includegraphics[width=0.95\textwidth]{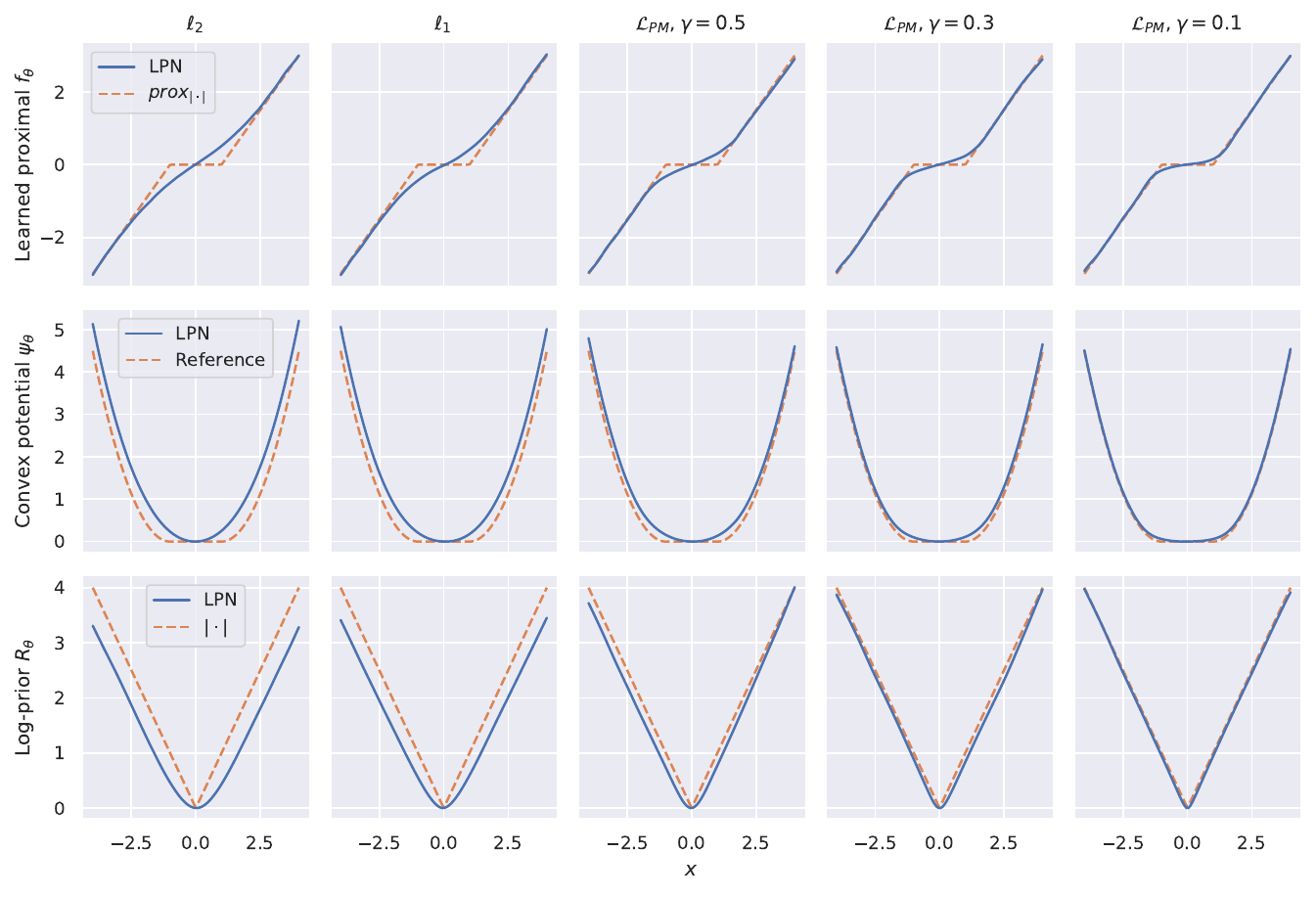}
    
    \caption{The proximal operator $f_\theta$, convex potential $\psi_\theta$, and log-prior $R_\theta$ learned by LPN via different losses: the square $\ell_2$ loss, $\ell_1$ loss, and the proposed proximal matching loss $\mathcal{L}_{PM}$ with different $\gamma \in \{0.5, 0.3, 0.1\}$. The ground-truth data distribution is the Laplacian $p(x) = \tfrac{1}{2}\exp(-|x|)$, with log-prior $-\log p(x) = |x| - \log (\tfrac{1}{2})$. With proximal matching loss, the learned proximal $f_\theta$ and log-prior $R_\theta$ progressively approach their ground-truth, $\prox_{|\cdot|}$ and $|\cdot|$ respectively, as $\gamma$ shrinks from $0.5$ to $0.1$.}
    \label{fig:lap-appx}
\end{figure}

\subsection{Learning a prior for MNIST - image blur}
\label{sec:experiments-mnist-blur}
\begin{figure}[H]
    \centering
    \includegraphics[width=0.3\textwidth]{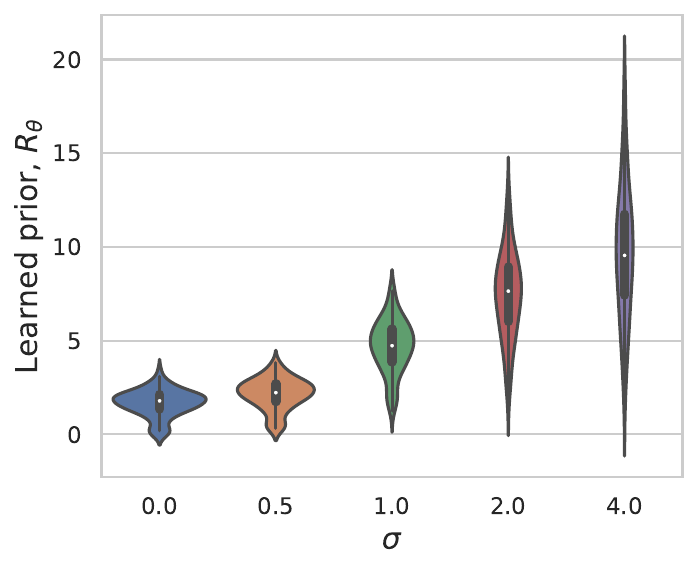}
    \includegraphics[width=0.45\textwidth]{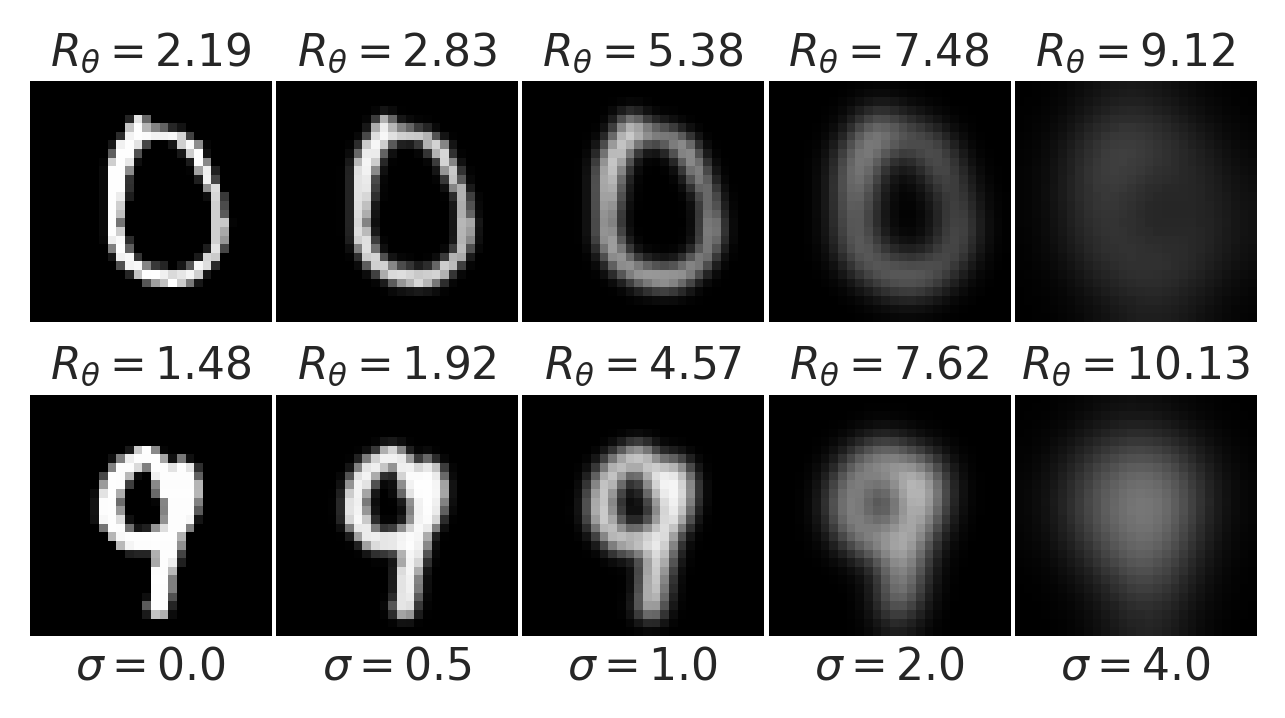}
    
    \caption{The log-prior $R_\theta$ learned by LPN on MNIST, evaluated at images blurred by Gaussian kernel with increasing standard deviation $\sigma$. Left: the prior over 100 test images. Right: the prior at individual examples.}
    \label{fig:mnist-blur}
\end{figure}

Besides perturbing the images by Gaussian noise and convex combination in \Cref{sec:experiments-mnist}, we also evaluate the prior of LPN at blurry images, with results shown in \Cref{fig:mnist-blur}. Again, the prior increases as the image becomes blurrier, coinciding with the distribution of the hand-written digits in MNIST.

\subsection{Solving inverse problems using LPN with PnP-PGD}
\rebuttal{Besides PnP-ADMM, we also test LPN's performance for solving inverse problems using PnP-PGD (proximal gradient descent). \Cref{tab:celeba-deblur-pgd} shows the numerical results for deblurring CelebA images: PGD is slightly less performant than ADMM in terms of PSNR.}
\begin{table}[H]
\rebuttal{
\centering
\caption{\rebuttal{Numerical results for CelebA deblurring using LPN with PnP-PGD and PnP-ADMM, averaged over 20 test images.}}
\label{tab:celeba-deblur-pgd}
\small
\centering
\begin{tabular}{@{}ccccc@{}}
\toprule
\multirow{2}{*}{METHOD} & \multicolumn{2}{c}{$\sigma_{blur}=1, \, \sigma_{noise}=.02$} & \multicolumn{2}{c}{$\sigma_{blur}=1, \, \sigma_{noise}=.04$} \\
 \cmidrule(lr){2-3} \cmidrule(lr){4-5} 
& PSNR($\uparrow$) & SSIM($\uparrow$) & PSNR & SSIM \\ \midrule
LPN with PnP-PGD & 32.7 $\pm$ 2.9 & .92 $\pm$ .03 & 31.2 $\pm$ 2.5 & .89 $\pm$ .04 \\
LPN with PnP-ADMM & 33.0 $\pm$ 2.9 & .92 $\pm$ .03 & 31.3 $\pm$ 2.3 & .89 $\pm$ .03 \\ 
\bottomrule
\end{tabular}%
}
\end{table}

\end{document}